\documentclass{article} % For LaTeX2e
\usepackage{iclr2025_conference,times}

% Optional math commands from https://github.com/goodfeli/dlbook_notation.
%%%%% NEW MATH DEFINITIONS %%%%%

\usepackage{amsmath,amsfonts,bm}

% Mark sections of captions for referring to divisions of figures

% Highlight a newly defined term

% Figure reference, lower-case.

% Figure reference, capital. For start of sentence

% Section reference, lower-case.

% Section reference, capital.

% Reference to two sections.

% Reference to three sections.

% Reference to an equation, lower-case.
\def\eqref#1{equation~\ref{#1}}
% Reference to an equation, upper case

% A raw reference to an equation---avoid using if possible

% Reference to a chapter, lower-case.

% Reference to an equation, upper case.

% Reference to a range of chapters

% Reference to an algorithm, lower-case.

% Reference to an algorithm, upper case.

% Reference to a part, lower case

% Reference to a part, upper case

\def\1{\bm{1}}

% Random variables

% rm is already a command, just don't name any random variables m

% Random vectors

% Elements of random vectors

% Random matrices

% Elements of random matrices

% Vectors
\def\vzero{{\bm{0}}}

\def\vtheta{{\bm{\theta}}}
\def\va{{\bm{a}}}
\def\vb{{\bm{b}}}
\def\vc{{\bm{c}}}
\def\vd{{\bm{d}}}
\def\ve{{\bm{e}}}

\def\vp{{\bm{p}}}

\def\vv{{\bm{v}}}
\def\vw{{\bm{w}}}
\def\vx{{\bm{x}}}
\def\vy{{\bm{y}}}

% Elements of vectors

% Matrix
\def\mA{{\bm{A}}}
\def\mB{{\bm{B}}}
\def\mC{{\bm{C}}}

\def\mE{{\bm{E}}}

\def\mI{{\bm{I}}}

\def\mP{{\bm{P}}}
\def\mQ{{\bm{Q}}}

\def\mU{{\bm{U}}}
\def\mV{{\bm{V}}}
\def\mW{{\bm{W}}}

\def\mZ{{\bm{Z}}}

% Tensor
\DeclareMathAlphabet{\mathsfit}{\encodingdefault}{\sfdefault}{m}{sl}
\SetMathAlphabet{\mathsfit}{bold}{\encodingdefault}{\sfdefault}{bx}{n}

% Graph

% Sets

\def\sC{{\mathbb{C}}}

% Don't use a set called E, because this would be the same as our symbol
% for expectation.

\def\sH{{\mathbb{H}}}

\def\sR{{\mathbb{R}}}

\def\sZ{{\mathbb{Z}}}

% Entries of a matrix

% entries of a tensor
% Same font as tensor, without \bm wrapper

% The true underlying data generating distribution

% The empirical distribution defined by the training set

% The model distribution

% Stochastic autoencoder distributions

 % Laplace distribution

\newcommand{\E}{\mathbb{E}}
\newcommand{\Ls}{\mathcal{L}}
\newcommand{\R}{\mathbb{R}}

% Wolfram Mathworld says $L^2$ is for function spaces and $\ell^2$ is for vectors
% But then they seem to use $L^2$ for vectors throughout the site, and so does
% wikipedia.

 % See usage in notation.tex. Chosen to match Daphne's book.

% CUSTOM STUFF

\usepackage{bbm}
\usepackage{mathtools}

\newcommand{\inv}{^{-1}}
\newcommand{\tran}{^\top}

\newcommand{\Vbrute}{V_{\mathrm{brute}}}
\newcommand{\Vvac}{V_{\mathrm{vac}}}
\newcommand{\Vcoset}{V_{\mathrm{coset}}}
\newcommand{\Virrep}{V_{\mathrm{irrep}}}
\renewcommand{\Re}{\operatorname{Re}}
\renewcommand{\Im}{\operatorname{Im}}
\newcommand{\one}{\mathbf{1}}
\newcommand{\mcN}{\mathcal{N}}
\newcommand{\rhoset}{\mathcal{B}}

\DeclareMathOperator{\relu}{ReLU}
\DeclareMathOperator{\GL}{GL}
\DeclareMathOperator{\tr}{tr}
\DeclareMathOperator{\diag}{diag}
\DeclareMathOperator{\var}{Var}
\DeclareMathOperator{\Unif}{Unif}
\DeclareMathOperator{\Irrep}{Irrep}
\DeclareMathOperator{\acc}{\alpha_G}
\DeclareMathOperator{\sgn}{sgn}
\DeclareMathOperator{\Stab}{Stab}
\DeclarePairedDelimiter\abs{\lvert}{\rvert}
\DeclarePairedDelimiter\norm{\lVert}{\rVert}
\DeclarePairedDelimiter\ang{\langle}{\rangle}

\usepackage{amsthm}
\newtheorem{obs}{Observation}[section]

\newtheorem{prop}[obs]{Proposition}
\newtheorem{lemma}[obs]{Lemma}
\theoremstyle{definition}
\newtheorem{definition}[obs]{Definition}

\usepackage{hyperref}
% \usepackage[capitalize]{cleveref}
% \Crefname{thm}{Theorem}{Theorems}
% \Crefname{lemma}{Lemma}{Lemmas}
% \Crefname{prop}{Proposition}{Propositions}
% \Crefname{obs}{Observation}{Observations}
% \Crefname{definition}{Definition}{Definitions}

% \usepackage{url}
% \usepackage{comment}
\usepackage{booktabs}
\usepackage{enumerate}
\usepackage{centernot}
\usepackage{wrapfig}
\usepackage{todonotes}

\usepackage{color}
\definecolor{myblue}{rgb}{.85, .85, 1}

\usepackage{empheq}

\newlength\mytemplen
\newsavebox\mytempbox

\makeatletter
\newcommand\mybluebox{%
    \@ifnextchar[%]
       {\@mybluebox}%
       {\@mybluebox[0pt]}}

\def\@mybluebox[#1]{%
    \@ifnextchar[%]
       {\@@mybluebox[#1]}%
       {\@@mybluebox[#1][0pt]}}

\def\@@mybluebox[#1][#2]#3{
    \sbox\mytempbox{#3}%
    \mytemplen\ht\mytempbox
    \advance\mytemplen #1\relax
    \ht\mytempbox\mytemplen
    \mytemplen\dp\mytempbox
    \advance\mytemplen #2\relax
    \dp\mytempbox\mytemplen
    \colorbox{myblue}{\hspace{1em}\usebox{\mytempbox}\hspace{1em}}}

\makeatother

% \title{Unifying Irreps and Cosets: Verified Explanations of Group Composition via $\rho$-sets}
%\title{Towards Unifying and Verifying Mechanistic Interpretations: A Case Study with Group Operations}
\title{Towards a Unified and Verified Understanding of Group-Operation Networks}
%\title{How do Models Compute Group Operations? A Unified and Verified Explanation}

% Authors must not appear in the submitted version. They should be hidden
% as long as the \iclrfinalcopy macro remains commented out below.
% Non-anonymous submissions will be rejected without review.
%\iclrfinalcopy
\newcommand{\email}[1]{\href{mailto:#1}{\nolinkurl{#1}}}

\author{Wilson Wu\textsuperscript{1}\quad Louis Jaburi\textsuperscript{*2}\quad Jacob Drori\textsuperscript{*2}\quad Jason Gross\textsuperscript{2}\\
\textsuperscript{1} University of Colorado Boulder\quad \textsuperscript{2} Independent\\ %\quad \textsuperscript{3} City University New York\\
\email{wiwu2390@colorado.edu}\\
\texttt{\{louis.yodj,jacobcd52,jasongross9\}@gmail.com}
%\author{Wilson Wu\\
%Department of Mathematics\\
%University of Colorado Boulder\\
%\email{wiwu2390@colorado.edu}
%\And
% Louis Jaburi\\%\footnotemark[1]\\
% Independent\\
% \email{louis.yodj@gmail.com}
% \And
% Jacob Drori\\%\footnotemark[1]\\
% Department of Physics\\
% City University New York\\
% \email{jacobcd52@gmail.com}
% \And
% Jason Gross\\
% Independent\\
% \email{jasongross9@gmail.com}
}

% The \author macro works with any number of authors. There are two commands
% used to separate the names and addresses of multiple authors: \And and \AND.
%
% Using \And between authors leaves it to \LaTeX{} to determine where to break
% the lines. Using \AND forces a linebreak at that point. So, if \LaTeX{}
% puts 3 of 4 authors names on the first line, and the last on the second
% line, try using \AND instead of \And before the third author name.

\iclrfinalcopy % Uncomment for camera-ready version, but NOT for submission.
\begin{document}

\maketitle
% \louis{TODO: - Complete Section 3.3, - Complete Section 2 -If possible redo section 3.1... }
% \louis{- png seems fine for my graphics?
% -discuss hyplerplane picture
% - paragraph in section 4.1
% - discuss Table 2
% - discuss causal interventions and table 3}

\begin{abstract}
%Neural networks trained on the binary operation of finite groups have recently attracted interest as a proving ground for mechanistic interpretability techniques which aim to reverse-engineer these networks'\ learned circuits.
A recent line of work in mechanistic interpretability has focused on reverse-engineering the computation performed by neural networks trained on the binary operation of finite groups.
We investigate the internals of one-hidden-layer neural networks trained on this task, revealing previously unidentified structure
and producing a more complete description of such models in a step towards unifying the explanations of previous works~\citep{chughtai2023,stander2024}.
Notably, these models approximate \textit{equivariance} in each input argument.
% We demonstrate that these models approximate a double summation over all vectors in a finite set acted upon by an irreducible representation.
% We find that the model learns linear classifiers on irreducible representations of group elements that are \textit{equivariant} in each of its inputs, so that varying the inputs permutes the output logits by the group operation.
%We verify our explanation by translating it into a \textit{compact proof of model performance}, a quantitative evaluation of model understanding complementary to the causal interventions used by previous work.
We verify that our explanation applies to a large fraction of networks trained on this task by translating it into a \textit{compact proof of model performance}, 
%a quantitative evaluation of model understanding. 
%\wilson{
a quantitative evaluation of the extent to which we \textit{faithfully} and \textit{concisely} explain model internals.
In the main text, we focus on the symmetric group $S_5$.
%}
For models trained on this group, our explanation yields a guarantee of model accuracy that runs 3x faster than brute force and gives a $\geq$95\% accuracy bound for 45\% of the models we trained. We were unable to obtain nontrivial non-vacuous accuracy bounds using only explanations from previous works.

% Daniel attempted re-write
% A recent line of work in mechanistic interpretability has focused on reverse-engineering the operation of neural networks trained on the binary operation of finite groups.
% We investigate such networks, revealing previously unidentified structure, and producing a more complete description that unifies the explanations of previous works.
% Our description involves models using irreducible representations of group elements as matrices, and learning linear classifiers over these matrices.
% These models are approximately \textit{equivariant} in each of their inputs, thereby ensuring that each linear classifier boosts the logit of the correct answer while cancelling each other out on the logits of incorrect answers.
% We verify that our explanation applies to a large fraction of networks trained on this task by translating it into a \textit{compact proof of model performance}, a quantitative evaluation of model understanding.
\end{abstract}

% \begin{enumerate}
% \item get proof bounds (see my comment + what Jason’s student did)
%     \item derive dashiell from irrep interpretation
%     \item Do causal interventions for modular group
%     \item  section in paper about causal interventions being not sufficient?
% \end{enumerate}

\renewcommand{\thefootnote}{*}
\footnotetext[1] {These authors contributed equally. See \hyperref[sec:contrib]{Author Contributions}.}
\renewcommand{\thefootnote}{\arabic{footnote}} % Restore numeric footnotes

\section{Introduction}
\label{sec:intro}
Modern neural network models, despite their widespread deployment and success, remain largely inscrutable in their inner workings, limiting their use in safety-critical settings. 
The emerging field of \textit{mechanistic interpretability} seeks to address this issue by reverse engineering the behavior of trained neural networks. 
One major criticism of this field is the lack of rigorous \textit{evaluations} of interpretability results; indeed, many works rely on human intuition to determine the quality of an interpretation~\citep{miller2019explanation,casper2023eis3,rauker2023transparent}.
% The question of evaluating explanations introduces various challenges:
% (1) What is an explanation? At what point have we explained the model's behaviour to a satisfactory degree?
% (2) How faithful does our explanation need to be? To what degree are the parts that we don't understand irrelevant? 
This insufficiency of evaluations has proved detrimental to interpretability research: recent work finds many commonly used model interpretations to be imprecise or incomplete~\citep{miller2024transformercircuitfaithfulnessmetrics, friedman2024interpillusion}.

A simplified research program has focused on toy algorithmic settings, which are made more tractable by the presence of complete mathematical descriptions of the task and dataset~\citep{nanda2023progress,nanda2023emergent,zhong2024pizza}. 
However, even in these settings, the lack of rigorous evaluations for interpretations is consequential, leading different researchers to come up with divergent explanations for the same empirical phenomena:
recently, \citet{chughtai2023} claimed that models trained on finite groups implement a \textit{group composition via representations} algorithm,
while subsequent work~\citep{stander2024} studies the same model and task and instead argues that the model implements a \textit{coset concentration} algorithm.

In this work, we take on the challenge of reconciling their interpretations. 
We investigate the same setting and find internal model structure that was overlooked by both previous works: the irreducible representations noticed by \citet{chughtai2023} act by permutation on a discrete set of vectors learned by the model.
% Based on our observations, we propose that models trained on group composition implement a third, distinct, algorithm that unifies the explanations found by both previous works.
Based on our observations, we propose a model explanation that unifies those found by both previous works. {In particular we find that the model approximates a function that preserves the group symmetry in each of its input arguments, i.e.\ a bi-equivariant function.}
%Following \citet{gross2024compact}, we answer these concerns by converting our mechanistic interpretation into a compact proof of model performance. 
Following \citet{gross2024compact}, we then evaluate our interpretation and that of previous work by converting them into compact proofs of lower bounds on model accuracy.

\begin{figure}[ht]
%\begin{wrapfigure}{R}{0.3\textwidth}
\centering
\includegraphics[trim=0cm 0cm 0cm 0cm,clip=true,width=0.30\textwidth]{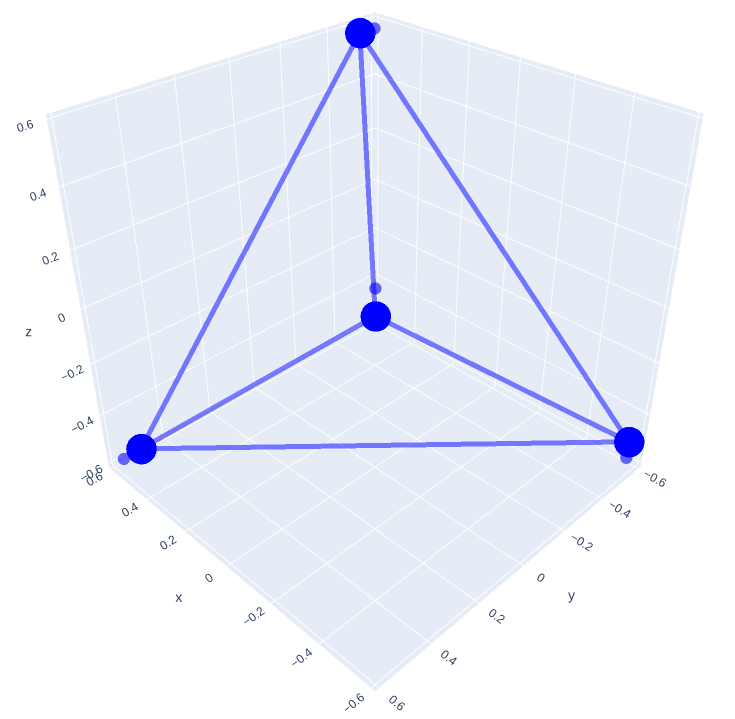}
\hspace{1cm}
\includegraphics[trim=0cm 0cm 0cm 0cm,clip=true,width=0.30\textwidth]{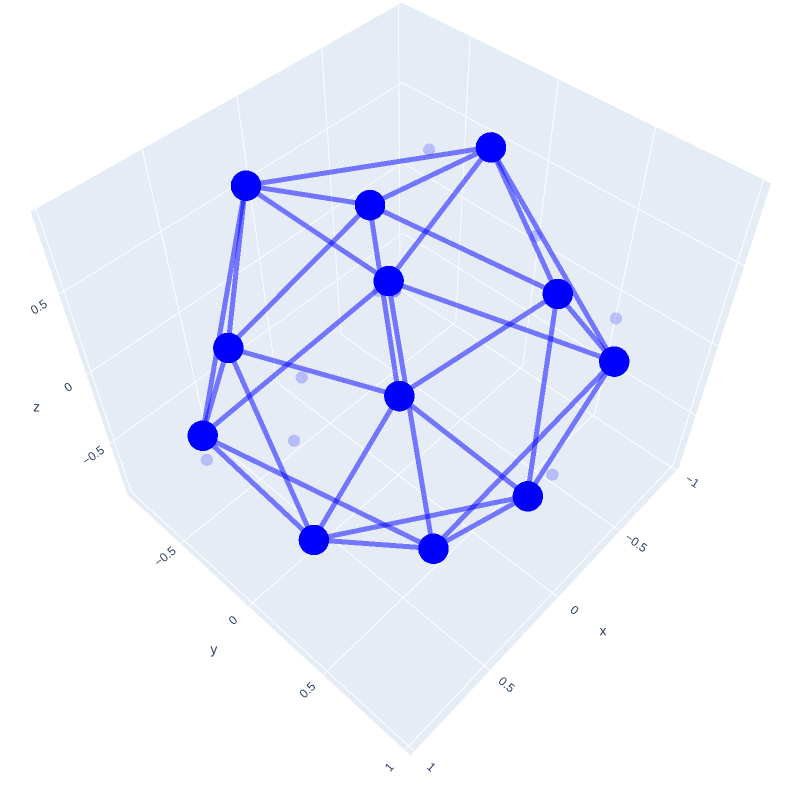}
\caption{ Examples of $\rho$-sets extracted directly from the weights of models trained on the symmetric group $S_4$ (\textbf{left}, a tetrahedron) and the alternating group $A_5$ (\textbf{right}, an icosahedron). Both lie in $\sR^3$. 
The vectors of the $\rho$-sets are depicted as points---the connecting edges are merely for illustration. 
See Section~\ref{sec:prelims} for the definition of $\rho$-sets and Section~\ref{sec:rhoset} for how they are in by models to compute the group operation.
See Figure~\ref{fig:S4} and Figure~\ref{fig:A5} for compact proof bound results for $S_4$ and $A_5$, respectively.
The standard irrep of $S_5$, the focus of the main text, is four-dimensional and hence more difficult to visualize.
%\vspace{-1.1in}
}
\label{fig:rhosets}
%\end{wrapfigure}
\end{figure}

Our philosophy is that any rigorous mechanistic knowledge of a model's inner workings should yield a guarantee on the model's performance. 
In more detail, given a mechanistic explanation, the real model will somewhat differ from it due to noise or imperfections in our analysis. 
To rigorously validate the explanation, we thus need to bound the effect of this deviation on model behavior.
This validation is done using a program that takes in a model as input and guarantees that the model obeys some property;
we focus on properties of the form ``the model will give the correct answer for at least some $X\%$ of the input space'', i.e. lower bounds on accuracy.
The simplest such guarantee is to brute force try every possible input to the model, which does not require a mechanistic explanation and is a perfectly tight bound.
However, we believe that nontrivial mechanistic understanding of a model should yield more efficient programs, e.g.\ by exploiting symmetries in the proposed explanation.
% that can be run to guarantee that models will always give the correct answer for at least $X\%$ of the input, for some $X$ (i.e. lower bound its accuracy). 

If the program guarantees a property such as model accuracy, then
we can uniformly turn an execution trace of the program 
into an formal proof of this property.
This is an (automatically generated) proof in the standard mathematical sense: a proof that a mathematical object (the model parameters) satisfies the desired property.
Proofs corresponding to more efficient execution traces are shorter, i.e.\ more \textit{compact}.\footnote{The length of the proof is linear in the running time of the program.}
The efficiency of the program, and closeness of the accuracy bound to the true accuracy, can be taken as metrics of the quality of our explanation: More complete explanations should yield either tighter performance bounds or a more compact proof, providing a quantitative measure of explanation completeness.
We find that our interpretation is indeed an improvement over previous ones because it yields more compact proofs of tighter bounds.

Our contributions are as follows:
\begin{itemize}
    \item We provide a mechanistic explanation of models trained on the group composition task (Section~\ref{sec:rhoset}).
    \item We \textit{verify} this explanation by translating it into guarantees on model accuracy (Section~\ref{sec:compactproofs}). For a substantial fraction of models, these guarantees are near the true accuracy, providing strong positive evidence for our explanation in these cases (Section~\ref{sec:empirical}).
    \item We clarify previous mechanistic interpretability results on this same task and argue that they do not fully explain model behavior (Section~\ref{sec:previous}).
    We show that our more complete interpretation is a step towards \textit{unifying} the findings of previous works (Section~\ref{sec:relating})
    
    % \item We perform our own investigation of the same setting, and  provide a more complete explanation (Section~\ref{sec:rhoset}). We show that our interpretation  is a step towards \textit{unifying} the findings of previous works (Section~\ref{sec:relating}).
    % \item We clarify previous mechanistic interpretability results on the group composition task and argue that they do not fully explain model behavior (Section~\ref{sec:previous}).
    % \item We \textit{verify} this explanation by translating it into guarantees on model accuracy (Section~\ref{sec:compactproofs}). For a substantial fraction of models, these guarantees are near the true accuracy, providing strong positive evidence for our explanation in these cases (Section~\ref{sec:empirical}).
\end{itemize}

\section{Related work}
\label{sec:related}
\paragraph{Groups, mechanistic interpretability, and grokking} 
Our work can be seen as a direct follow-up to \citet{chughtai2023} and \citet{stander2024}, which both perform mechanistic interpretability on one-hidden-layer neural networks trained on the binary operation of finite groups. 
These papers in turn build on work that studies models trained on modular arithmetic \citep{nanda2023progress,zhong2024pizza}, i.e.\ the binary operation of the cyclic group. 
Models trained on group composition exhibit the \textit{grokking} phenomenon~\citep{power2022grokking}, in which a model trained on an algorithmic task generalizes to the test set many epochs after attaining perfect accuracy on the training set. \citet{morwani2024margin} study the group composition task from the viewpoint of inductive biases, showing that, for one-hidden-layer models with quadratic activations, the max-margin solution must match the observations of \citet{chughtai2023}.

\paragraph{Evaluation of explanations}  Several techniques to evaluate interpretations of models have been suggested, such as causal interventions~\citep{wang2023ioi} and  causal scrubbing~\citep{geiger2024causalabstraction}.
We discuss merits and limitations of causal interventions in our setting, which were first explored in \citet{stander2024}. 
More recently, \citet{gross2024compact} use mechanistic interpretability to obtain compact formal proofs of model properties for the max-of-four task. 
\citet{yip2024integration} study the modular arithmetic setting, finding that the ReLU nonlinearities can be thought of as performing numerical integration, and use this insight to compute bounds on model error in linear time. 
%Furthermore, we generate proof bounds for our explanation and verify that according to that metric, many of our explanations are almost fully faithful. 

%\paragraph{Compact proofs} 

\paragraph{Equivariance} We find that neural networks trained on group composition learn to be equivariant in both input arguments, i.e.\ \textit{bi-equivariance}, despite this condition not being enforced in the architecture. Learned equivariance has been noticed and measured in other settings~\citep{lenc2019equivariance,olah2020naturally,gruyver2023lie}.
% This is distinct from the area of equivariant deep learning, in which model architectures are explicitly constructed to be equivariant~\citep{bronstein2021geometric}.
This is distinct from the area of equivariant networks, in which equivariance is enforced by model architecture~\citep{bronstein2021geometric}.

\section{Preliminaries}
\label{sec:prelims}

\subsection{Mathematical background: groups, actions, representations, \texorpdfstring{$\rho$}{ρ}-sets}

% \wilson{TODOS: 1) Maybe cut most of this, depending on space.}

This paper uses ideas from finite group theory and representation theory. 
We provide a rapid and informal introduction to the most important definitions and refer the reader to Section 3 and Appendices D, E, F of \citet{stander2024} and/or relevant textbooks~\citep{dummit2004,fulton1991} for more details.

%A finite \textit{group} $G$ is a finite set with a binary operation $\star: G\times G\to G$ that fulfills three conditions: associativity, existence of an identity, and existence of an inverse. 
%We denote the identity element by $e\in G$. 
% Readers unfamiliar with the notion of a group are invited to think of a group as a set of rotations and reflections with the binary operation given by composition. 
%Given a finite group $G$ and a finite set $X$, we can consider actions of elements $g\in G$ on elements $x\in X$ that produce new elements $g\cdot x\in X$. 
%If an action satisfies $e\cdot x=x$ and $(g\star h)\cdot x=g\cdot (h\cdot x)$ for all $g,h\in G$ and $x\in X$, then we say it is a \textit{group action} and that $X$ is a $G$-set.
%Given such a group action, each $g\in G$ then induces a \textit{permutation} on $X$ by bijectively mapping $x\mapsto g\cdot x$.
%Hence, we can think of the group action as a mapping (more precisely, a homomorphism) from $G$ to the group of permutations of elements of $X$, called the \textit{symmetric group} over $X$.
%If $X=\{1,\ldots, n\}$, then the symmetric group is denoted $S_n$.
%In general, such maps $G\to S_n$ are called \textit{permutation representations} of $G$.
\paragraph{Groups and permutations} A group $G$ is a set with an associative binary operation $\star$ and an identity element $e$ such that every element has an inverse.
We write $S_n$ for the group of permutations on $n$ elements. Maps\footnote{By ``maps'' we mean group homomorphisms.} $G\to S_n$ are called \textit{permutation representations} and are equivalent to actions of $G$ on sets of size $n$. Recall that each permutation $\sigma \in S_n$ can be represented as a \textit{permutation matrix} in $\sR^{n\times n}$ that applies the permutation $\sigma$ to the basis vectors of $\sR^n$.
Thus, any permutation representation of $G$ is a \textit{linear representation}, i.e.\ a mapping from $G$ to the group of invertible $n\times n$ matrices $\GL(n,\sR)$. The group operation translates to matrix multiplication.

% Linear representations can often be decomposed into a direct sum of representations of strictly smaller dimension. 
% Roughly speaking, this is done by finding a simultaneous change of basis of the representation matrices that makes them all block diagonal.
%\paragraph{Irreducible representations}
\paragraph{Irreps}
Linear representations that cannot be decomposed into a direct sum of representations of strictly smaller dimension are called \textit{irreducible representations} or \textit{irreps} for short. 
A representation $\rho$ is an irrep if and only if there is no nontrivial subspace that is closed under $\rho(g)$ for all $g\in G$.
Every finite group $G$ has a finite set of irreps (up to isomorphism), which we denote by $\Irrep(G)$.\footnote{In this paper, we consider irreps over $\sR$. In particular, all irreps of $S_n$ are real.
For a discussion of preliminary results for groups with complex irreps, see Appendix~\ref{sec:complex}.
%For our main example $S_5$, these are the same as irreps over $\sC$.
}
Any linear representation can be decomposed \textit{uniquely} into irreps. 

\paragraph{\texorpdfstring{$\rho$}{ρ}-sets} By the preceding discussion, given a permutation representation $\tilde\rho\colon G\to S_n$, we can consider its decomposition into irreps. Let $\rho\in\Irrep(G)$ be one irrep present in this decomposition, acting on some subspace $W\subseteq\sR^n$.
Since, by definition, $\tilde\rho$ acts on standard basis vectors $\ve_1,\ldots,\ve_n$ by permutation, the constituent irrep $\rho$ acts on the projection of the basis vectors onto $W$ by the same permutation.
We refer to any subset of $W$ that $\rho$ acts on by permutation as a \textit{$\rho$-set}; in particular, projections of the basis vectors onto $W$ fit this criterion.\footnote{The term ``$\rho$-set'' is our own. All other definitions and notations introduced in this section are standard.}

\paragraph{Example: \texorpdfstring{$S_5$}{S5}} 
Our primary example throughout the main text is the group of permutations $S_5$. 
%There is an obvious permutation representation of $S_5$: simply the identity map $S_5\to S_5$, corresponding to the standard action of $S_5$ on $\{1,\ldots,5\}$.
The identity map $S_5\to S_5$ is a permutation representation.
As a linear representation, it is not irreducible, as it fixes the all ones vector $\one\in\sR^5$.
Projecting out this vector results in what is called the standard four-dimensional irrep of $S_5$; call it $\rho$.
This irrep acts by permutation on the projections of the standard basis vectors,
so these five vectors in $\sR^4$ form a $\rho$-set.
%, where $\rho$ here is the standard four-dimensional irrep.
In this example, the $\rho$-set consists of five evenly spaced vectors on the surface of the sphere in $\R^4$.
% explicitly, in the basis $\{\ve_1-\ve_2,\ve_2-\ve_3,\ve_3-\ve_4,\ve_4-\ve_5\}$ of this irrep, the projected basis vectors are
% \begin{equation*}
%     \left\{
%     \begin{bmatrix}
%         1 \\ 0 \\ 0\\ 0
%     \end{bmatrix},
%     \begin{bmatrix}
%         -1 \\ 1 \\ 0\\ 0
%     \end{bmatrix},
%     \begin{bmatrix}
%         0 \\ -1 \\ 1\\ 0
%     \end{bmatrix},
%     \begin{bmatrix}
%         0 \\ 0 \\ -1\\ 1
%     \end{bmatrix},
%     \begin{bmatrix}
%         0 \\ 0 \\ 0\\ -1
%     \end{bmatrix}
%     \right\}.
% \end{equation*}
% \wilson{TODO: double check that this example is correct. Also maybe an explicit example of a $\rho$ matrix acting on these? but that'd take up a lot of space}
% \jacob{idk if these projected vectors are correct. I'm also not sure what I'm meant to be learning from looking at the matrix. The fact that the projected vectors are the vertices of a tetrahedron seems far more intuitive to me.}

\subsection{Task description and model architecture}

% \jacob{Lots of small things combine to make this section quite difficult to read. 1) Does $f$ really need a $\theta$ subscript? 2) Why not use $\mathbf{b}$ for the bias, instead of $\vw_b$? 3) Notation like $\vw_l^i(x)$ took me a while to parse. Either write $\vw_l^i(e_x)$ for consistency, or define some notation like $w_{ix}$. I prefer the latter. 4) The definition of the accuracy is absolutely criminal. Why force the reader to translate into English? Just say ``accuracy is the proportion of input pairs (x,y) for which the largest output logit is at xy". Then we can write the equation if we really feel the need (though surely the second line isn't needed). }

% \wilson{1) I think it's more clear to include the $\vtheta$, especially when we're comparing two different models $f_\vtheta$ and $f_{\tilde\vtheta}$ 2) $\vb$ is used for something else later 3) I think it's cleaner to think of  $\vw_l^i$ as a function $G\to\sR$. but i don't haveea strong opinion on this. 4) addressed}

Our task and architecture are identical to that of previous works \citep{chughtai2023, stander2024}. Fix a finite group $G$. We train a model on the supervised task $\star:G\times G\to G$; i.e., given $x,y\in G$ the task is to predict the product $x\star y\in G$. 

We train a one-hidden-layer two-input neural network on this task.
The input to the model is two elements $x,y \in G$, embedded as vectors $\mE_l(x),\mE_r(y)\in\R^m$, which we refer to as the left and right embeddings, respectively. These are multiplied by the left and right linearities $\mW_l,\mW_r\in\sR^{m\times m}$, summed, applied with an elementwise ReLU nonlinearity, and finally multiplied by the unembedding matrix $\mU\in\sR^{\abs{G}\times m}$ and summed with the bias $\vw_b$.

% In full, the model forward pass can be written
% \begin{equation}
%     \label{eq:forward}
%     %f_\vtheta(z\mid x, y) = \langle \mU(z), \relu[\mW_l\mE_l(x)+\mW_r\mE_r(y)] \rangle+\vw_b(z),
%     f_\vtheta(z\mid x, y) = \ang*{\mU(z), \relu[\mW_l\mE_l(x)+\mW_r\mE_r(y)]} +\vw_b(z)
% \end{equation}
% a function parameterized by $\vtheta=(\mU,\mW_l,\mW_r,\mE_l,\mE_r,\vw_b)$.
% As a matter of notation, we think of matrices in $\sR^{G\times m}$ as functions $G\to\sR^m$, so, for instance, $\mU(z)$ is the $z$th row of the $G\times m$ matrix $\mU$.
%Here, with $m$ the dimensionality of the model, the learned parameters are $\vtheta=(\mU, \mW_l, \mW_r, \mE_l, \mE_r,\vw_b)$ comprising the left and right embeddings $\mE_l,\mE_r\colon G\to\R^m$, the left and right hidden linearities $\mW_l,\mW_r\in\R^{m\times m}$, the unembedding $\mU\colon G\to\R^m$, and the unembedding bias $\vw_b\in\R^G$. \louis{Maybe remove this sentence? It's clear what the learned parameters are} \wilson{the purpose is mainly to give the shapes of all the matrices}

We can simplify the model's description by noting that the left embedding $\mE_l$ and left linearity $\mW_l$ only occur as a product $\mW_l\mE_l$, and likewise for the right embedding and linearity. Also, the product between the unembedding $\mU$ and the embedding vectors can be decomposed into a sum over the hidden dimensionality $[m]$. Hence, letting $\vw_l^i(x),\vw_r^i(y),\vw_u^i(z)$ denote the $i$th entries of $\mW_l\mE_l(x),\mW_r\mE_r(y),\mU(z)$ respectively, the forward pass is
\begin{equation}
\label{eq:forward-sum}
    f_\vtheta(z\mid x, y)
    %&= \langle\mU(z), \relu[\mW_l(x)+\mW_r(y)]\rangle\\
    =\vw_b(z)+\sum_{i=1}^m \vw_u^i(z)\relu[\vw_l^i(x)+\vw_r^i(y)],
\end{equation}
paramaterized by $\vtheta=(\vw_b,(\vw_l^i,\vw_r^i,\vw_u^i)_{i=1}^m)$.
Each vector $\vw_l^i,\vw_r^i,\vw_u^i$ for $i\in[m]$ can be thought of as a function $G\to\sR$, and we refer to them as the left, right, and unembedding neurons, respectively. The refer to $i\in[m]$ as the neuron index.

\section{Group composition by \texorpdfstring{$\rho$}{ρ}-sets}
\label{sec:rhoset}
% \jacob{We should explicitly point people to appendix B for a list of all the observations we made.}
% \jacob{Did we make it clear below that we're only considering the real case? We should probably say something about rank of A,B,C depending on the FS indicator, and how all the theory carries directly over to those cases (just with matrices now being over C or Q)}
% \wilson{There's currently a footnote in 3.1 about how we only consider real irreps. Maybe it should be be emphasized more? I think it'd be hard to introduce Frobenius-Schur and the real/complex/quaternionic classification in the main text though}
% \louis{mention that biequiv. means that logit vectors for different inputs are permutations }

\subsection{The \texorpdfstring{$\rho$}{ρ}-set circuit}
\label{sec:rhoset-circ}
% \wilson{try to move more of our observations up here}

Our central finding is that trained models implement circuits of the form
\begin{equation}
%\begin{empheq}[box={\mybluebox[5pt]}]{equation}
\label{eq:rho-circ2}
\boxed{
    %f_{\rho,\rhoset}(z\mid x, y)=-\sum_{\tilde\vb,\tilde\vb'\in \rhoset}\tilde\vb\tran\rho(x\inv\star z \star y\inv)\tilde\vb' \relu[\va\tran(\tilde\vb-\tilde\vb')]
    f_{\rho,\rhoset}(z\mid x, y)=-\sum_{\vb,\vb'\in \rhoset}\vb\tran\rho(x\inv\star z \star y\inv)\vb' \relu[\va\tran(\vb-\vb')]
    }
%\end{empheq}
\end{equation}
where $\rhoset \subseteq \mathbb{R}^d$ is a $\rho$-set and $\va \in \mathbb{R}^d$.
This $f_{\rho, \rhoset}(z\mid x, y)$ depends only on $x\inv \star z\star y\inv$; we call such functions \textit{bi-equivariant}.\footnote{To see the equivariance, note that if  $f_{\rho,\rhoset}$ is of this form, then, for $g,h\in G$, we have $f_{\rho, \rhoset}(z\mid g\star x, y\star h)=f_{\rho, \rhoset}(g\inv \star z \star h\inv\mid x, y)$.} Furthermore, we show in Lemma~\ref{lem:separate} that for certain irreps,  $f_{\rho,\rhoset}(z\mid x, y)$ is guaranteed to be maximized at the correct logit $z=x\star y$.\footnote{ Notice that $z=x\star y$ if and only if $x\inv \star z\star y\inv=e$, which implies $\rho(x\inv \star z\star y\inv)=\mI$. This implication goes both ways if $\rho$ is faithful, which is the case for all irreps of $S_5$ with dimension $>1$.} {We find that each trained model implements several such circuits and that the logits are approximately a linear combinations of terms of the form Equation~\ref{eq:rho-circ2}.}%We now explain how we derive this circuit from the observations we made in Observation~\ref{obs:irrep}.

We now explain how the model weights implement the circuit in Eq~\ref{eq:rho-circ2}.  Recall from Eq~\ref{eq:forward-sum} that $f_\vtheta$ can be written as the sum of the contributions of each neuron plus the bias. As seen in Section~\ref{sec:irreps}, each neuron is in the span of a single irrep $\rho\in\Irrep(G)$.
%\footnote{Recall that we are only considering \textit{real} irreps $\rho$; in particular, all irreps of $S_5$ are real. For discussion of preliminary results for complex irreps, see Appendix~\ref{sec:complex}.} 
We find that, furthermore, these neurons are actions of $\rho$ on finite $\rho$-sets $\rhoset\subseteq\sR^d$ projected onto one dimension. Moreover, the unembedding weights of each neuron are related to the left and right embedding weights, so that, for some $\vb,\vb'$ from a $\rho$-set $\rhoset$ and some projection vector $\va$, up to scaling,
\begin{equation}
\label{eq:rho-neurons}
    \vw^i_u(z)=-\vb\tran\rho(z)\vb',\qquad\vw^i_l(x)=\vb\tran\rho(x)\va,\qquad\vw^i_r(y)=-\va\tran\rho(y)\vb'.
\end{equation}
Additionally, there is one neuron of this form for each of the $\abs{\rhoset}^2$ pairs $(\vb,\vb')$,\footnote{To be more precise, we find that there are often multiple neurons per $(\vb,\vb')$. However, they are scaled such that the sum of neuron contributions corresponding to each pair is uniform.} and $\va$ is constant across all such pairs.
See Observation~\ref{obs:irrep} for a full enumeration of our findings.

Based on these observations, we partition neurons into independent \textit{$\rho$-set circuits}; each circuit is associated with a $\rho\in\Irrep(G)$ of dimension $d$, a finite \textit{$\rho$-set} $\rhoset\subseteq\sR^d$ that $\rho$ acts on transitively by permutation, and a constant vector $\va\in\sR^d$. Call this circuit $f_{\rho, \rhoset}$; then,
\begin{equation}
\label{eq:rho-circ}
    f_{\rho,\rhoset}(z\mid x, y)=-\sum_{\vb,\vb'\in \rhoset} \vb\tran\rho(z)\vb'\relu[\vb\tran\rho(x)\va-\va\tran\rho(y)\vb']
\end{equation}

%Note that $\rho$ can be chosen such that all $\rho(x)$ are unitary, and so $\vb\in \rhoset$ are of the same norm.
Using the $\rho$-set structure of $\rhoset$, we can change variables via $\tilde\vb=\rho(x)\tran\vb=\rho(x\inv)\vb$ and $\tilde\vb'=\rho(y)\vb'$, and arrive at the aforementioned circuit Eq~\ref{eq:rho-circ2} (with $\vb,\vb'$ exchanged for $\tilde\vb,\tilde\vb'$).
% \begin{equation}
%     f_{\rho,\rhoset}(z\mid x, y)=-\sum_{\tilde\vb,\tilde\vb'\in \rhoset} \tilde\vb\tran\rho(x\inv\star z \star y\inv)\tilde\vb' \relu[\va\tran(\tilde\vb-\tilde\vb')].
% \end{equation}
Since $f_\vtheta$ is a sum of such bi-equivariant circuits, it is also bi-equivariant.

%Further, terms of the summation in Eq~\ref{eq:rho-circ2} are zero whenever $\tilde\vb=\tilde\vb'$, which is precisely when $\tilde\vb\tran\rho(e)\tilde\vb'=\langle\tilde\vb,\tilde\vb'\rangle$ is largest. 
Further, terms of the summation in Eq~\ref{eq:rho-circ2} are zero whenever $\vb=\vb'$, which is precisely when $\vb\tran\rho(e)\vb'=\langle\vb,\vb'\rangle$ is largest. 
Heuristically, these properties explain why $f_{\rho, \rhoset}(z\mid x, y)$ is maximized when $z=x\star y$. For certain $\rho\in\Irrep(G)$, we can rigorously show that the function is indeed maximized at $z=x\star y$ for any value of $\va$; see Lemma~\ref{lem:separate} for details.
Eq~\ref{eq:rho-circ2} can also be as a separating hyperplane in the ambient space inhabited by irreps $\rho$. See Sec~\ref{sec:hyperplane} for details.

\subsection{The sign circuit}
% \wilson{move to appendix if necessary}
For irreps of dimension strictly greater than one, we observe that the model learns circuits closely approximating what we describe in Section~\ref{sec:rhoset-circ}. However, for the sign irrep,\footnote{The sign irrep maps permutations to $\pm 1$ depending on whether they decompose into an odd or even number of transpositions. It is the only nontrivial one-dimensional irrep of $S_n$. In general, any real-valued one-dimensional irrep of any group must take values $\pm 1$, and the same circuit as described here works.} the model is able to use one-dimensionality to avoid the expense of the double summation in Eq~\ref{eq:rho-circ}.

Explicitly, up to scaling, the sign circuit is
\begin{align*}
    f_{\sgn}(z\mid x, y)=\rho(z)-\rho(z)\relu[\rho(x)-\rho(y)]-\rho(z)\relu[\rho(y)-\rho(x)]=\rho(x\inv \star z\star y\inv),
\end{align*}
 where $\rho$ is the sign irrep. The circuit comprises two neurons $-\rho(z)\relu[\rho(x)-\rho(y)]$ and $-\rho(z)\relu[\rho(y)-\rho(x)]$, and uses the unembedding bias to strip out an extraneous $\rho(z)$ term. See Appendix~\ref{sec:cyclic} for further discussion.
 
 %It should be noted that the role of the bias term was neglected in previous works.\footnote{Models used in the experiments of \citet{stander2024} were trained with a bias.}

%The way that the model expresses the sum of irreps as the irrep of a product is reminiscent of a similar trick used in the one-dimensional complex irreps corresponding to modular arithmetic~\citep{yip2024integration}. See Appendix~\ref{sec:cyclic}.

\section{Explanations as compact proofs of model performance}
\label{sec:compactproofs}

Following \citet{gross2024compact}, we evaluate the completeness of a mechanistic explanation by translating it into a compact proof of model performance. Intuitively, given the model weights and an interpretation of the model, we aim to leverage the interpretation to efficiently compute a guarantee on the model's global accuracy. More precisely, we construct a \textit{verifier} program\footnote{Formally, a Turing machine. We elide any implementation details related to encoding finite group-theoretic objects as strings, error from finite floating-point precision, etc.} $V$ that takes as input the model parameters $\vtheta$, the group $G$, and an encoding of the interpretation into a string $\pi$, and returns a real number. We require that $V$ always provides valid lower bounds on accuracy; that is, $V(\vtheta,G,\pi)\leq \acc(\vtheta)$ regardless of the given interpretation $\pi$.
\footnote{This \textit{soundness} requirement prevents $\pi$ from (for example) simply providing the true accuracy $\acc(\vtheta)$ to the verifier. If $V$ takes $\pi$'s veracity for granted, it is no longer sound---$\pi$ could falsely claim the accuracy to be higher than the truth, causing $V(\vtheta, G, \pi)>\acc(\vtheta)$.}
%Either $V$ needs to check that the outputs claimed by $\pi$ are correct, which is no better than brute force, or it must take $\pi$'s veracity for granted, which is no longer sound---if $\pi$ falsely claims that all outputs are correct, then $V(\vtheta,G,\pi)>\acc(\vtheta).$} 
Our measure of $\pi$'s faithfulness is the tightness of the output guarantee $V(\vtheta,G,\pi)$, i.e.\ how close it is true to the true accuracy $\acc(\vtheta)$.

Two simple examples of verifiers are:
\begin{enumerate}
    \item The vacuous verifier $\Vvac(\vtheta,G,\emptyset)=0$. This is a valid verifier because $0$ is a lower bound on any model's accuracy.
    \item The brute force verifier $\Vbrute(\vtheta,G,\emptyset)=\acc(\vtheta)$, which takes the model's weights and runs its forward pass on every input to compute the global accuracy.
\end{enumerate}
While the brute force approach attains the optimal bound by recovering the true accuracy, it is computationally expensive. (Indeed, it is intractable in any real-world setting, where the input space is too large to enumerate.) Notice also that neither example is provided an interpretation $\pi$; without any information about $\vtheta$, we cannot expect the verifier to do better than these trivial examples. We aim to construct verifiers that, when $\pi$ is a meaningful interpretation, (1) give \textit{non-vacuous} guarantees on model accuracy and (2) are \textit{compact}, i.e. more time-efficient than brute force.

% \wilson{i think we can cut this para if necessary}
A good understanding of the model's internals should allow us to compress its description and therefore make reasonable estimates on its error. A more complete explanation should yield a tighter bound, while also reducing the computation cost.
Therefore if we think of bounding the accuracy as a trade-off between accuracy and computational cost, a good explanation of the model should push the Pareto frontier outward. See \citet{gross2024compact} for a more thorough discussion.

\subsection{Constructing compact proofs}
\label{sec:strategy}
% \wilson{couple sentences here (our goal is to prove [inequality here]) etc}
The verifier lower-bounds the model's accuracy by trying to prove for each $x,y\in G$ that the model's output is maximized at $x\star y$, i.e.\ that $f_\vtheta(x\star y\mid x,y)>\max_{z\neq x\star y} f_\vtheta(z\mid x,y)$. 
% \wilson{
% Intuitively speaking, the explanation of Section~\ref{sec:rhoset} is complete enough to write down model weights that implement it exactly. 
% Given a trained model, we find the closest such \textit{idealized model}, then bound the effect of the difference between the two.
% }
More precisely, the verifier strategy is:
%Our strategy for constructing nontrivial verifiers is:
\begin{enumerate}[(1)]
    \item Given model parameters $\vtheta$, use the interpretation $\pi$ to construct an \textit{idealized model} $\tilde\vtheta$.
    \item For $x,y\in G$, lower bound the \textit{margin} $f_{\tilde\vtheta}(x\star y\mid x, y)-\max_{z\neq x\star y} f_{\tilde\vtheta}(z\mid x, y)$.
    \item For $x, y\in G$, upper bound the \textit{maximum logit distance} 
    \begin{equation*}
    \max_{z\neq x\star y} \abs{f_{\tilde\vtheta}(z\mid x, y)-f_{\vtheta}(z\mid x, y)}+f_{\tilde\vtheta}(x\star y\mid x, y)-f_{\vtheta}(x\star y\mid x, y).
    \end{equation*}
    \item The accuracy lower bound is the proportion of inputs $x,y\in G$ such that the margin lower bound exceeds the distance upper bound. 
    For such input pairs, the margin by which the idealized model's logit value on the correct answer exceeds the logit value of any incorrect answer is larger than the error between the original and idealized model, so the original model's logit output must be maximized at the correct answer as well.
    See Figure~\ref{fig:margin}.
    
\end{enumerate}

Recall our model architecture Eq~\ref{eq:forward-sum}. The brute-force verifier runs a forward pass with time complexity $O(m\abs{G})$ over $\abs{G}^2$ input pairs, and so takes time $O(m\abs{G}^3)$ total.  \footnote{For simplicity of presentation, we assume all matrix multiplication is performed with the na\"ive algorithm; that is, the time complexity of multiplying two matrices of size $m\times n$ and $n\times k$ is $O(mnk)$.} 
Our verifiers need to be asymptotically faster than this in order to be performing meaningful compression.
Na\"ively, though, both steps (2) and (3) take time $O(m\abs{G}^3)$, no better than brute force. 
However, we can reduce the time complexity of each to $O(m\abs{G}^2)$: for (2) by exploiting the internal structure of the idealized model, and for (3) by using Lemma~\ref{lem:inf-bound}.

\begin{wrapfigure}{L}{2.7in}
\vspace{-.3in}
\includegraphics[trim=0cm .35cm 0cm 0cm,clip=true,width=2.5in]{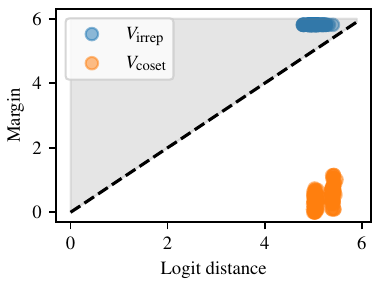}
\caption{Margin lower bound vs.\ logit distance upper bound 
%(Section~\ref{sec:strategy}) 
over $x,y\in S_5$ for $\Virrep$ and $\Vcoset$ on a single example model. The accuracy lower bound is precisely the number of points for which the margin lower bound is larger than the logit upper bound (shaded region); in this example, the bound from $\Virrep$ is 100\% while that from $\Vcoset$ is 0\%. The margin lower bound of $\Virrep$ is constant due to bi-equivariance.}
\label{fig:margin}
\vspace{-0.33in}
\end{wrapfigure}

\paragraph{Compact proofs via coset concentration}
Intuitively, coset concentration~\citep{stander2024} gives a way to perform nontrivial compression---if the interpretation says that a neuron is constant on the cosets of a specific group, then we need only to check one element per coset, instead of a full iteration over $G$. However, the shortcomings listed in Section~\ref{sec:coset-conc} are an obstacle to formalizing this intuition into a compact proof. The verifier $\Vcoset$ we construct pessimizes over the degrees of freedom not explained by the coset concentration explanation, resulting in accuracy bounds that are vacuous (Section~\ref{sec:empirical}). See Appendix~\ref{sec:coset-bound} for details of $\Vcoset$'s construction.

%We attempt to convert the cosets explanation into a proof of model accuracy; our construction is given in Appendix~\ref{sec:coset-bound}. However, we find that, as described in Section~\ref{sec:empirical}, the resulting accuracy bounds are vacuous. The intuitive reason is that the cosets interpretation says little about the unembedding weights and nothing about the unembedding bias; thus, our verifier construction is overly pessimistic with the behavior of the unembedding.

% \louis{this is a strong claim, it could be that our conversion into the proof was bad}
%  \jacob{More on intuitive shortcomings of Dashiell's explanation? E.g. the cosets algorithm doesn't tell us \textit{how} a given neuron manages to be nonzero only when $xy \in Z$. Right?}\louis{it does tell us this more or less, I think the unembedding part is the real problem}
% \wilson{explain why this is faster than brute force bound} \louis{maybe move this paragraph to section 6?}

\paragraph{Compact proofs via \texorpdfstring{$\rho$}{ρ}-sets}
We are able to turn our $\rho$-set circuit interpretation into a proof of model accuracy that gives non-vacuous results on a majority of trained models; see Appendix~\ref{sec:rhoset-bound} for details and Section~\ref{sec:empirical} for empirical results. 

The interpretation string $\pi$ labels each neuron with its corresponding irrep $\rho$ and its $\rho$-set. The verifier $\Virrep$ is then able to use this interpretation string to construct an idealized version of the input model that implements $\rho$-set circuits (Eq~\ref{eq:rho-circ}) exactly. By bi-equivariance, this idealized model's accuracy can then be checked with a \textbf{single forward pass}. Finally, $\Virrep$ bounds the distance between the original and the idealized models using Lemma~\ref{lem:inf-bound}.

\subsection{Empirical results for compact proofs}
\label{sec:empirical}

 We train 100 one-hidden-layer neural network models from random initialization on the group $S_5$. We then compute lower bounds on accuracy obtained by brute force, the cosets explanation, and the $\rho$-sets explation, which we refer to as $\Vbrute,\Vcoset,\Virrep$ respectively. We evaluate these lower bounds on both their runtime\footnote{We use the verifier's time elapsed as a proxy for FLOPs, which is a non-asymptotic measure of runtime.} (compactness) and by the tightness of the bound. See Appendix~\ref{sec:details} for full experiment details. 
 
 As expected, $\Vbrute$ obtains the best accuracy bounds (indeed, they are exact), but has a slower runtime than $\Virrep$; see Figure~\ref{fig:acctime}. On the other hand, across all experiments, $\Vcoset$ failed to yield non-vacuous accuracy bounds; while the margin lower bounds of $\Vcoset$'s idealized models are nonzero, they are swamped by the upper bound on logit distance to the original model; see Figure~\ref{fig:margin}.
 %\footnote{In our experiments $\Vcoset$ runs significantly slower than $\Vbrute$, despite better asymptotics, since we did not vectorize the $O(m\abs{G}^2)$ margin calculation step.}

 Looking again at Figure~\ref{fig:acctime}, we see that the accuracy bound due to $\rho$-sets is bimodal: 
 the verifier $\Virrep$ obtains a bound of near 100\% for roughly half the models, and a vacuous bound of 0\% for another half. 
 Investigating the models for which $\Virrep$ does not obtain a good bound, we are able to discover for many of them aspects in which they deviate from our $\rho$-sets explanation:
 \begin{itemize}
     \item {\label{cond:avar_bad} ($\va$-bad)} The $\va$ projection vector (Eq~\ref{eq:rho-circ}) fails to be constant across terms of the double sum over $\vb,\vb'\in\rhoset$, so the change of variables showing bi-equivariance (Eq~\ref{eq:rho-circ2}) is invalid.\footnote{If $\va$ depends on $(\vb,\vb')$, then there is a remaining dependence on $(\rho(x)\tilde\vb,\rho(y\inv)\tilde\vb')$ inside the ReLU after changing variables.} We find that such models have poorer cross-entropy loss and larger weight norm than those with constant $\va$, suggesting they have converged to a suboptimal local minimum (Figure~\ref{fig:avar} in Appendix~\ref{sec:const-a}).
     \item {\label{cond:irrep_bad} ($\rho$-bad)} The double sum over $\rhoset$ (Eq~\ref{eq:rho-circ}) misses some $\vb,\vb'$ pairs. Again, we are unable to prove bi-equivariance when this happens. We speculate that in this case the model is approximating the discrete summation by a numerical integral \`a la \citet{yip2024integration}.
 \end{itemize}
% Some other models have poor $\Virrep$ bounds and do not fall in either of these cases; for these, we hypothesize the noise between the original and idealized model is simply too large.
 
 Although these cases are failures of our $\rho$-sets interpretation to explain the model, their presence can be seen as a success for compact proofs as a measure of interpretation completeness. \textbf{For models we do not genuinely understand, we are unable to achieve non-vacuous guarantees on accuracy.}

\begin{figure}[ht]
\centering
\includegraphics[trim=1cm 0cm 0cm 0cm,clip=true,width=\textwidth]{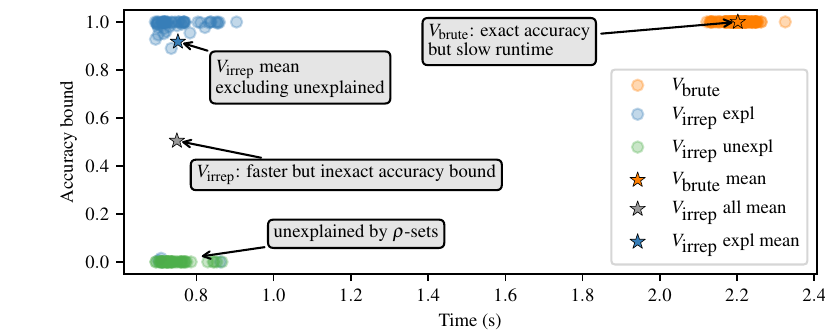}
\caption{
Accuracy bound vs.\ computation time for $\Virrep$ and $\Vbrute$ on 100 models trained on $S_5$. Points in \textbf{green} ($\Virrep$ unexpl) are models for which we find by inspection that our $\rho$-sets explanation does not hold, i.e.\ either
\hyperref[cond:avar_bad]{($\va$-bad)} or \hyperref[cond:irrep_bad]{($\rho$-bad)}.
Mean accuracy bound is 100\% for $\Vbrute$ (\textbf{orange}), 0\% for $\Vcoset$ (not shown), 50.4\% for $\Virrep$ (union of \textbf{blue} and \textbf{green}), and 91.7\% for $\Virrep$ when only including models for which neither
\hyperref[cond:avar_bad]{($\va$-bad)} nor \hyperref[cond:irrep_bad]{($\rho$-bad)} occur (\textbf{blue}, 55\% of total). Mean time elapsed is 2.20s for $\Vbrute$ and 0.75s for $\Virrep$. The asymptotic time complexity of $\Vbrute$ is $O(m\abs{G}^3)$ while that of $\Virrep$ is $O(m\abs{G}^2)$.
%Mean accuracy bound is 100\% for $\Vbrute$, 0\% for $\Vcoset$ (not shown), 51.0\% for $\Virrep$, and 82.3\% for $\Virrep$ when excluding models known not to fit our $\rho$-sets explanation (38\% of total). Proportion of models with $\Virrep$ accuracy bound $\geq 95\%$ is 45.0\% for all models and 72.6\% excluding unexplained. Mean time elapsed is 2.01s for $\Vbrute$ and 0.61s for $\Virrep$. The asymptotic time complexity of $\Vbrute$ is $O(m\abs{G}^3)$ while that of $\Virrep$ is $O(m\abs{G}^2)$.
}
% \wilson{don't indicate mean with dashed line. use star + tick on axes + reiterate tradeoff b/w time and accuracy}
\label{fig:acctime}
\end{figure}
%\end{wrapfigure}

% \wilson{TODO: plots of $b_i$ vectors (maybe the icosahedron?)}
% 
% \wilson{TODO: imshow model weight error terms? comparing ours and dashiell's}

\section{Revisiting previous explanations: Cosets and irreps }\label{sec:coset}
\label{sec:previous}

In this section, we recall the coset algorithm described in \cite{stander2024}, and the notion of irrep sparsity observed in \cite{chughtai2023}. We find that although these works correctly identify properties of individual neuron weights, they lack a precise picture of how these neurons combine to compute the group operation. We conclude the section by clarifying the logical relationship between these observations and our present work.

\subsection{Coset concentration}
\label{sec:coset-conc}

Recall the model architecture Eq~\ref{eq:forward-sum}. In \citet{stander2024}, they make the following observation: For each neuron $i$, the left embeddings are approximately constant on the \textit{right} cosets of a certain subgroup $K_1$ of $G$, while the right embeddings are approximately constant on the \textit{left} cosets  of a conjugate subgroup $K_2=g^{-1}Kg$.
%\footnote{
That is, they are coset concentrated; see Definition~\ref{def:coset-conc}.
%} 
They then observe that there is a subset $X= K_1 \star h= h'\star K_2$. On inputs $x,y\in G$, the left and right embeddings of the neuron $i$ sum to near zero precisely when $x\star y\in X$. Meanwhile, the unembedding takes smaller values on elements of $X$; thus, when $x\star y\not\in X$, the model's confidence in $G\setminus X$ is increased.\footnote{These subgroups might vary from neuron to neuron} In the example of $S_5$, these $X$ typically take the form of $X_{ab}:=\{\sigma\in S_5\mid \sigma(a)=b\}$. See \citet[Section 5]{stander2024} and Appendix~\ref{sec:coset-detail} for more details.

This explanation leaves several things unclear:
\begin{itemize}
    \item Even given coset concentration, there are many choices of left/right embeddings that sum to zero whenever $x\star y\in X$. The choice matters: for example, if there are many input pairs where $x\star y\not\in X$ but the sum of embeddings is near zero, then the model cannot be expected to perform well. How do we know whether the model has made a good choice?
    \item The unembedding is similarly underdetermined: there are many degrees of freedom in choosing weights satisfying the sole constraint of being smaller on $X$ than on $G\setminus X$.
    \item The bias term is not mentioned, despite being present in the models under study.
    %used in \citet{stander2024}.
\end{itemize}

In Proposition~\ref{prop:coset}, we provide a more precise version of this explanation, assuming the model weights satisfy stronger constraints. However, the actual models we investigate do not match these assumptions closely, which is reflected by our failure to convert this explanation into non-vacuous bounds in Appendix~\ref{sec:coset-bound}.

\subsection{Irrep sparsity}
\label{sec:irreps}
\citet{chughtai2023} notice that each neuron in the trained model is in the linear span of matrix elements of some irrep $\rho\in\Irrep(G)$; we refer to this condition as \textit{irrep sparsity} (see Definition~\ref{def:irrep-sparse} for a formal statement). Based on this observation, they propose that the model:
\begin{enumerate}
    \item Embeds the input pair $x,y\in G$ as $d\times d$ irrep matrices $\rho(x)$ and $\rho(y)$.
    \item Uses ReLU nonlinearities to compute the matrix multiplication $\rho(x)\rho(y)=\rho(x\star y)$.
    \item Uses ReLU nonlinearities and $\rho(x\star y)$ from the previous step to compute $\tr(\rho(x\star y\star z\inv))$, which is maximized at $x\star y\star z\inv=e\iff z=x\star y$.
\end{enumerate}
However, because they leave the ReLU computations as a black box, they are unable to fully explain the model's implemented algorithm. We were able to deduce a more complete description by carefully investigating \textit{which} linear combinations of $\rho$ each neuron uses.
% In this section we explain our findings and suggest a more complete reconstruction of the model's computation.

%See Section~\ref{sec:related} for a discussion of how our interpretation relates to previous attempts.
%More explicitly, say the model's logit at input $(e,e)$ is given by a vector $f_\theta(e,e)=(a_1,...,a_6)$, then biequivariance implies that for any other input, $f_\theta(x,y)=(a_{\sigma(1)},...,a_{\sigma(6))}$. This is visualized in Figure~\ref{fig:intro}, where changing the input from $(e,e)$ to $(e,(123))$ permutes the logits by the indicated rotation.

\subsection{Relating irrep sparsity, coset concentration, and \texorpdfstring{$\rho$}{ρ}-sets}
This paper and prior works observe multiple properties of neurons viewed as functions $G\to\sR$:
\label{sec:relating}

\begin{definition}[\citealt{chughtai2023}]
\label{def:irrep-sparse}
    A function $f\colon G\to\sR$ is \textit{irrep sparse} if it is a linear combination of the matrix entries of an irrep of $G$. That is, there exists a $\rho\in\Irrep(G)$ of dimension $d$ and a matrix $\mA\in\sR^{d\times d}$ such that $f(g)=\tr(\rho(g)\mA)$. %We also say that $f$ is \textit{supported} on the irrep $\rho$.
\end{definition}
\begin{definition}[\citealt{stander2024}]
\label{def:coset-conc}
    A function $f\colon G\to\sR$ is \textit{coset concentrated} if there exists a nontrivial subgroup $H\leq G$ such that $f$ is constant on the cosets (either left or right) of $G$.
\end{definition}
\begin{definition}
\label{def:proj-rho}
    A function $f\colon G\to\sR$ is a \textit{projected $\rho$-set} for $\rho\in\Irrep(G)$ of dimension $d$ if there exist $\va,\vb\in\sR^d$ such that $f(g)=\va\tran\rho(g)\vb$ and $\vb$ is in a $\rho$-set with nontrivial stabilizer.
    %Equivalently, $f$ is spanned by $\rho$ with the additional constraint that the $\mA$ of Definition~\ref{def:irrep-sparse} is rank one.
\end{definition}

In this section, we clarify the logical relationships between these three properties.

\paragraph{Projected $\rho$-sets are irrep-sparse} 
This fact, while immediate from definitions, resolves a mystery from \citet[Figure 7]{chughtai2023}: why is the standard irrep $\rho_\mathrm{std}$ of $S_5$ learned significantly more frequently than the sign-standard irrep $\rho_\mathrm{sgnstd}$, when both have the same dimensionality $\dim\rho_\mathrm{std}=\dim\rho_\mathrm{sgnstd}=4$? The answer is that the smallest $\rho_{\mathrm{std}}$-set has size 5, while the smallest $\rho_{\mathrm{sgnstd}}$-set has size 10 (Appendix~\ref{sec:s5-irreps}). Thus a minimum complete $\rho_{\mathrm{std}}$-set circuit needs $5^2=25$ neurons, while a minimum complete $\rho_{\mathrm{sgnstd}}$-set circuit needs $10^2=100$. 
The order of frequencies with which $\rho\in\Irrep(G)$ is learned~\citep[Figure 7]{chughtai2023} is the same as the ordering of $\Irrep(G)$ 
\textbf{by minimum $\rho$-set size} (Table~\ref{tbl:s5-rho-table}), 
\textbf{not by dimensionality.} 

\paragraph{Projected $\rho$-sets are coset concentrated}
Our $\rho$-set interpretation immediately explains coset concentration: the map $g\mapsto \rho(g)\vb$ is constant on precisely the cosets of the stabilizer $\Stab_G(\vb):=\{g\in G\mid \rho(g)\vb=\vb\}$.
Further, since the action of $G$ is transitive, $\Stab_G(\vb')$ for any other element $\vb'$ is a conjugate of $\Stab_G(\vb)$; as a consequence left and right preactivations of each neuron concentrate on cosets of conjugate subgroups.

\paragraph{Coset concentration fails to explain irrep sparsity}
\citet{stander2024} partially explain irrep sparsity via coset concentration. We paraphrase their key lemma here:
\begin{lemma}[\citealp{stander2024}]
    Let $H$ be a subgroup of $G$. The Fourier transform of a function constant on the cosets of $H$ is nonzero only at the irreducible components of the permutation representation corresponding to the action of $G$ on $G/H$.
\end{lemma}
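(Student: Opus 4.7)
The plan is to prove this by a direct computation on the Fourier transform, using the standard Frobenius reciprocity identification of the multiplicity of an irrep $\rho$ in the permutation representation $G \curvearrowright G/H$ with the dimension of the $H$-invariant subspace $V^H$ of $\rho$.

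First, I would fix conventions: take $f \colon G \to \sR$ constant on left cosets $gH$, so $f(gh) = f(g)$ for all $g \in G$, $h \in H$, and define the Fourier transform at an irrep $\rho \in \Irrep(G)$ of dimension $d$ on space $V$ by $\hat{f}(\rho) = \sum_{g \in G} f(g)\, \rho(g) \in \sR^{d \times d}$. Next I would group the sum by cosets: choosing a transversal $T \subseteq G$ for $G/H$, write
\begin{equation*}
\hat{f}(\rho) = \sum_{t \in T} \sum_{h \in H} f(th)\, \rho(th) = \sum_{t \in T} f(t)\, \rho(t) \left( \sum_{h \in H} \rho(h) \right),
\end{equation*}
where the constancy of $f$ on $tH$ lets us pull $f(t)$ out. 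Thus $\hat{f}(\rho)$ factors as a right-multiplication by the operator $S_\rho := \sum_{h \in H} \rho(h)$ on $V$.

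The second step is to identify $S_\rho$ with a projection. The normalized operator $\tfrac{1}{|H|} S_\rho$ is idempotent and $H$-equivariant, hence equals the orthogonal projection of $V$ onto the $H$-fixed subspace $V^H = \{v \in V : \rho(h) v = v \ \forall h \in H\}$. In particular, $S_\rho = 0$ iff $V^H = 0$. By Frobenius reciprocity,
\begin{equation*}
\dim V^H = \dim \mathrm{Hom}_H(\mathbf{1}, \mathrm{Res}^G_H \rho) = \dim \mathrm{Hom}_G(\rho, \mathrm{Ind}_H^G \mathbf{1}),
\end{equation*}
and the right-hand side is precisely the multiplicity of $\rho$ in the permutation representation of $G$ on $G/H$. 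Therefore $\rho \notin \mathrm{Ind}_H^G \mathbf{1}$ implies $V^H = 0$, which implies $S_\rho = 0$, which implies $\hat{f}(\rho) = 0$.

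The computation is essentially routine; the only real obstacle is making sure the side of the cosets matches the side of the action in the induced representation (and, if one wants to handle the right-coset convention symmetrically, noting that one obtains a left-multiplication by $S_\rho$ rather than a right-multiplication, with the same vanishing conclusion). Other than that bookkeeping, the proof is just the two observations above: the Fourier transform factors through $S_\rho$, and Frobenius reciprocity forces $S_\rho$ to vanish on irreps absent from the permutation representation.
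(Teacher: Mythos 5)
Your argument is correct, and it is the standard textbook proof: factor $\hat{f}(\rho)$ through the averaging operator $S_\rho=\sum_{h\in H}\rho(h)$, observe $\tfrac{1}{|H|}S_\rho$ is the projection onto $V^H$, and invoke Frobenius reciprocity to equate $\dim V^H$ with the multiplicity of $\rho$ in $\mathrm{Ind}_H^G\mathbf{1}$. Note, however, that the paper in question does not itself prove this lemma — it paraphrases and cites it from \citet{stander2024} — so there is no in-paper argument to compare against; your proof is a valid supply of the omitted reasoning. One very small point of bookkeeping: once you conclude $\hat f(\rho)=\bigl(\sum_{t\in T}f(t)\rho(t)\bigr)S_\rho$, the vanishing of $\hat f(\rho)$ when $V^H=0$ follows from $S_\rho=0$ alone and needs no orthogonality claim, so you may drop the word ``orthogonal'' (or, if you keep it, say explicitly that $\rho$ may be taken unitary, which is always possible over a finite group). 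Likewise your $\mathrm{Hom}_G(\rho,\mathrm{Ind}_H^G\mathbf{1})$ should really read $\mathrm{Hom}_G(\mathrm{Ind}_H^G\mathbf{1},\rho)$ to match the form of Frobenius reciprocity you are invoking; the two dimensions agree by semisimplicity, but writing the correct side avoids an unjustified step.
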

This lemma fails to fully explain irrep sparsity, since the permutation representation of $G$ on $G/H$ is never itself an irrep; it always contains the trivial irrep, possibly among others. Thus, it does not explain why the models' neurons are supported purely on single irreps and not, say, on linear combinations of each irrep with the trivial irrep.

Notice also that, since there are many more subgroups than irreps~\citep[Appendix G.3]{stander2024}, most subgroups $H\leq G$ have corresponding actions of $G$ on $G/H$ that decompose into more than two irreps; otherwise, since the decomposition is unique, there would be at most $\abs{\Irrep(G)}^2$ total possibilities. Explicit examples of subgroups whose indicators are supported on more than two irreps are given in Appendix~\ref{sec:s5-irreps}.
%The coset concentration interpretation leaves unexplained why these other subgroups are rarely or never observed.

\paragraph{Projected $\rho$-sets are equivalent to the conjunction of irrep sparsity and coset concentration}
Note that neither irrep sparsity nor coset concentration alone is equivalent to the condition of being a projected $\rho$-set. For an example of an irrep-sparse function that is not a $\rho$-set, consider the function $\chi(g)=\tr(\rho(g))$. If $\rho$ is faithful and has nontrivial stabilizer (for example, the standard 4-dimensional irrep of $S_5$), then $\chi$ cannot be a projected $\rho$-set, because it is maximized uniquely at the identity~\citep[Theorem D.7]{chughtai2023} and thus is not coset-concentrated. To see that coset concentration does not imply being a projected $\rho$-set, recall from above that there are coset-concentrated functions that are not irrep-sparse, but all projected $\rho$-sets are irrep sparse.

On the other hand, if $f\colon G\to\sR$ is both irrep-sparse and coset-concentrated, then it must be a projected $\rho$-set; for a proof see Lemma~\ref{lem:equivalence}. Hence, if we consider by itself a single embedding of a single neuron on $f_\vtheta$, then our Observation~\ref{obs:ours}(\ref{it:rank-one}) is logically equivalent to the combination of observations from previous works. However, our perspective gives us more insight into the relationship between left/right embedding and unembedding neurons (Observation~\ref{obs:ours}(\ref{it:abc})) as well as the relationship between different neurons (Observation~\ref{obs:ours}(\ref{it:permute},\ref{it:unif-coef})).

\section{Discussion}
\label{sec:discussion}

% \subsection{Evaluation of explanations}
% \label{sec:causal}

\paragraph{Limitations of the $\rho$-sets interpretation}
The $\rho$-set explanation we provide has a rather limited scope: we only claim to understand roughly half of the models we examine, all of which are trained on $S_5$. On the other hand, all of the models we examine satisfy both irrep sparsity and coset concentration. It was by trying and failing to obtain non-vacuous compact guarantees on model accuracy that we discovered that our understanding of some models is still incomplete. Hence, compact proofs are a quantitative means of detecting gaps in proposed model explanations.

\paragraph{Causal interventions}
In an attempt to verify the validity of the coset concentration interpretation, \citet{stander2024} perform a series of causal interventions. 
While the results do not yield evidence that their interpretation is incorrect, they also do not provide strong evidence that it is correct.
Indeed, we perform the same interventions on a model for which we know the cosets explanation cannot hold, and find results in the same direction; see Appendix~\ref{sec:causal-appendix}.
Thus, in this case, causal interventions might yield strong negative evidence against an explanation, but provide weaker positive evidence.  {Furthermore, causal interventions lack a notion of an explanation's simplicity independent from its faithfulness---they do not provide a quantitative measure of how much an explanation compresses the model.}

\paragraph{Compact proofs}
For the models we do understand, we obtain tight accuracy bounds in significantly less time than brute force, providing strong positive evidence for our understanding.
Hence, we view the compact proof approach as complementary to the causal intervention one: \textbf{A tight bound is strong positive evidence for an explanation, whereas a poor or vacuous bound does not give us strong negative evidence.} Indeed, the coset interpretation does make nontrivial observations and yield partial explanations of model performance, but this is not reflected in the vacuous bounds we obtain. It may be the case that some of the pessimizations we use in our construction of $\Vcoset$ were unnecessarily strong; the translation from an informal explanation into a rigorous bound is itself informal, by necessity. 

% \wilson{i think that inability to get a compact proof is strong negative evidence against you \textit{fully} understanding the model; of course, you might still have a very nontrivial partial understanding. not sure if that's worth stating}

% For the cases where we fully understand the model, our $\rho$-set interpretation yields an almost optimal bound in less time. This is positive evidence that we have a good understanding of the model. In cases where $\va$ is not constant, our $\rho$-set interpretation performs much worse. Indeed, our current interpretation can only fully explain the constant case and therefore it is expected that $\Virrep$ yields a worse result otherwise. 
% For the coset interpretation, the bound resulting from $\Vcoset$ is vacuous. This could be attributed to our lack of understanding of the unembedding---we have to make pessimizations which turn out to be too strong. To summarize, the proof approach seems complementary to the causal intervention one: \textbf{A tight bound is positive evidence for an explanation, whereas a bad or vacuous bound does not give us strong negative evidence.} After all, the coset interpretation does yield partial valid explanations and this is not reflected in the vacuous bound. Furthermore, the process of converting an informal explanation into a bound is not rigorous; it could be the case that some of our pessimizations were unnecessarily strong.

\section{Conclusion}
Multiple previous works \citep{chughtai2023,stander2024} have examined the group composition setting and claimed a mechanistic understanding of model internals. However, as we have demonstrated here, these works left much internal structure unrevealed. Our own interpretation incorporates this structure, resulting in a more complete explanation that unifies previous attempts. 

We verify our explanation with compact proofs of model performance, and obtain strong positive evidence that it holds for a large fraction of the models we investigate. For models where we fail to obtain bounds, we find that many indeed do not fit our proposed interpretation.
Compact proofs thus provide rigorous and quantitative positive evidence for an interpretation's completeness. %, and should be adopted more widely in future work.
We see this work as a preliminary step towards a more rigorous science of interpretability.

\subsubsection*{Reproducibility Statement} See Appendix~\ref{sec:details} for experiment details. Code for reproducing our experiments can be found at \url{https://anonymous.4open.science/r/groups-E024}.

\ificlrfinal
\subsubsection*{Acknowledgements}
We thank Jesse Hoogland, Daniel Filan, Ronak Mehta, Mathijs Henquet, and Alice Rigg for helpful discussions and feedback on drafts of this paper. WW was supported by a grant from the Long-Term Future Fund. JD was supported by a grant from Open Philanthropy. This research was conducted in part during the ML Alignment \& Theory Scholars Program. Computing resources for some experiments were provided by Timaeus. We thank the anonymous reviewers for providing many suggestions that improved the clarity of this paper.

% Compact proofs of model performance are a rigorous, quantitative measure of how much a given interpretation explains, and should be adopted more widely in future mechanistic interpretability work. %\louis{I would be more careful here, it is for example not clear how to turn an informal interpretation into a bound (for example I could also always turn it into a vacuous bound, there is no formalism that prevents this)}

\subsubsection*{Author Contributions}
\label{sec:contrib}
\paragraph{Wilson Wu} 
Led the engineering and ran the bulk of all experiments. Designed, implemented, and ran compact proofs. Proved majority of the theoretical results. Contributed to writing of paper and rebuttals.

% Trained and performed mechanistic interpretability on finite group models. Designed, implemented, and ran compact proofs. Proved various theoretical results. Contributed to writing of paper. \jacob{I think "ran all expeiments" would make it extra clear how important your contributions were}\louis{+1}

\paragraph{Louis Jaburi} Initiated the project and suggested experiments to run. Contributed to writing of paper and rebuttals.

\paragraph{Jacob Drori} Made contributions leading to discovery of the $\rho$-set circuit: pushed to study the structure of $\mA_i,\mB_i,\mC_i$ (defined in Observation~\ref{obs:irrep}), predicted Observation~\ref{it:abc}; predicted $\{\vb_i\}$ is a $\rho$-set; noticed $\vb_i$ and $\vc_i$ vary independently, leading to the double-sum in Equation~\ref{eq:rho-circ}.

\paragraph{Jason Gross} Advised the project, including developing proof strategies and providing feedback on experimental results and presentation.
\fi

\bibliography{main}
\bibliographystyle{iclr2025_conference}

\appendix

\section{Coset concentration in detail}
\label{sec:coset-detail}
The main observation of \citet{stander2024} is that neurons are concentrated on the cosets of subgroups. More precisely,
\begin{obs}[{{\citealp{stander2024}}}]
    \label{obs:cosets}
    For each $i\in[m]$, there exists a subgroup $H_i\leq G$ and element $g_i\in G$ such that the left and right neurons are approximately constant on the right cosets of $H_i$ and the left cosets of the conjugate subgroup $K_i=g_iH_ig_i\inv$,\footnote{Throughout the appendix, we denote the group multiplication of $x,y\in G$ by $xy$ instead of by $x\star y$.} respectively:
    \begin{equation}
        \label{eq:coset1}
        xy\inv\in H_i\implies \vw_l^i(x)\approx \vw_l^i(y),\quad\quad 
        x\inv y\in K_i\inv\implies \vw_r^i(x)\approx \vw_r^i(y).
    \end{equation}
    This forces the summed embeddings $\vw_l^i(x)+\vw_r^i(y)$ to be approximately constant on the double cosets $H_i\backslash G/K_i$. Further, it is observed that
    \begin{equation}
        \label{eq:coset2}
        \vw_l^i(x)+\vw_r^i(y)\approx 0 \iff xy\in H_ixyg_i K_i=H_ig_i\inv.
    \end{equation}
    %The unembedding $\vw_u^i$ is approximately constant on the double cosets $\{H_iaK_i\mid a\in G\}$, and is maximized on $z\not\in H_i g\inv K_i=H_ig\inv$.\footnote{This observation about the unembeddings was not explicitly stated in \citet{stander2024}.}
\end{obs}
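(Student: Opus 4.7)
The plan is to derive Observation~\ref{obs:cosets} as a rigorous consequence of the exact $\rho$-set structure posited in Equation~\ref{eq:rho-neurons}; the ``approximately'' in the statement then quantifies how far a given trained neuron deviates from having that idealized form. Throughout, I rely on the fact that every real irrep of $S_n$ can be realized orthogonally, so that $\rho(g)^\top=\rho(g^{-1})$.

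First, I would fix neuron $i$ with parameters $(\vb,\vb',\va)$, where $\vb,\vb'\in\rhoset$ and $\va\in\sR^d$, and define the candidate subgroups intrinsically as stabilizers:
\begin{equation*}
H_i := \Stab_G(\vb)=\{h\in G:\rho(h)\vb=\vb\},\qquad K_i := \Stab_G(\vb').
\end{equation*}
Transitivity of the $G$-action on $\rhoset$ produces $g_i\in G$ with $\vb'=\rho(g_i)\vb$, and orbit-stabilizer then yields $K_i=g_iH_ig_i^{-1}$, matching the conjugacy relation in the observation. Equation~\ref{eq:coset1} now follows from a short calculation: if $xy^{-1}\in H_i$, write $x=hy$ with $h\in H_i$ and use $\rho(h)^\top\vb=\rho(h^{-1})\vb=\vb$ to obtain $\vw_l^i(x)=\vb^\top\rho(h)\rho(y)\va=\vb^\top\rho(y)\va=\vw_l^i(y)$. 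A symmetric argument using $\rho(k)\vb'=\vb'$ for $k\in K_i$ handles the right-neuron case.

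For Equation~\ref{eq:coset2}, orthogonality lets me rewrite
\begin{equation*}
\vw_l^i(x)+\vw_r^i(y)=\va^\top\bigl[\rho(x^{-1})\vb-\rho(y)\vb'\bigr].
\end{equation*}
Setting the bracketed vector to zero and chasing $\rho((xy)^{-1})\vb=\vb'=\rho(g_i)\vb$ rearranges to $g_i^{-1}(xy)^{-1}\in H_i$, i.e., $xy\in H_ig_i^{-1}$, which recovers the double-coset identification in the observation. This yields the forward direction immediately; the main obstacle is the converse. A priori, $\va$ could be orthogonal to $\rho(x^{-1})\vb-\rho(y)\vb'$ without that difference vanishing, producing spurious coset representatives. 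Handling this rigorously would require either a genericity assumption on $\va$ or a quantitative bound---for instance, lower-bounding the smallest nonzero singular value of the family $\{\rho(x^{-1})\vb-\rho(y)\vb':x,y\in G\}$ projected along $\va$, and then propagating the deviation from the ideal $\rho$-set form through this estimate to recover the approximate statement actually observed in trained models.
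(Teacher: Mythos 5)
Observation~\ref{obs:cosets} is stated in the paper as an empirical finding due to \citet{stander2024}, not as a theorem, so strictly speaking there is no paper proof to compare against. However, your derivation of it from the idealized $\rho$-set form of Equation~\ref{eq:rho-neurons} reproduces exactly the paper's reasoning in the paragraph ``Projected $\rho$-sets are coset concentrated'' of Section~\ref{sec:relating}: define $H_i=\Stab_G(\vb)$, observe $g\mapsto\rho(g)\vb$ is constant precisely on cosets of $\Stab_G(\vb)$, and use transitivity of the $\rho$-set action plus orbit-stabilizer to get the conjugacy $K_i=g_iH_ig_i^{-1}$. This is also the ``converse also holds and is immediate'' remark in Lemma~\ref{lem:equivalence}. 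Your orthogonality steps are correct (real irreps of $S_n$ can be taken orthogonal), and your verification of Equation~\ref{eq:coset1} and the easy direction of Equation~\ref{eq:coset2} is sound.

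Where you go beyond the paper, you also correctly put your finger on the genuine gap: $\va^\top\bigl[\rho(x^{-1})\vb-\rho(y)\vb'\bigr]=0$ does not force the bracketed vector to vanish, so one direction of the iff in Equation~\ref{eq:coset2} requires either a genericity assumption on $\va$ (which holds for Lebesgue-a.e.\ $\va$ since $\{\rho(x^{-1})\vb-\rho(y)\vb'\}$ is a finite set) or a quantitative lower bound, as you suggest. This is entirely consistent with the paper treating the statement as an empirical observation rather than a formal consequence. One small quibble: you label the direction ``$xy\in H_ig_i^{-1}\Rightarrow\vw_l^i(x)+\vw_r^i(y)=0$'' as the ``forward direction,'' but given how Equation~\ref{eq:coset2} is written (scalar $\approx 0$ on the left, coset membership on the right) that is the $\Leftarrow$ direction; the $\Rightarrow$ direction is the one where spurious zeros can appear, which is what you actually flag as the obstacle. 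The substance is right; only the labels are swapped.
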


If we include several additional assumptions, this observation is sufficient for $f_\vtheta$ to attain perfect accuracy. Note it was not explicitly investigated in~\citet{stander2024} to what extent these additional conditions are met. 
\begin{prop}
\label{prop:coset}
Suppose the model parameters $\vtheta$ are such that Observation~\ref{obs:cosets} holds, the unembeddings satisfy
\begin{equation}
    \label{eq:uconst}
    \max_{z\not\in H_ig_i\inv}\vw_u^i(z)=\min_{z\not\in H_ig_i\inv}\vw_u^i(z)>\max_{z\in H_ig_i\inv}\vw_u^i(z)
\end{equation}
%the subsets $\{H_ig_i\inv\}_{i=1}^m$ generate the discrete $\sigma$-algebra on $G$, 
and the bias $\vw_b$ is zero. Further, defining
\begin{equation}
\label{eq:s}
    s{(H,g)}=\min_{\substack{x, y\in G\\ xy\not\in H_ig_i\inv}}\sum_{\substack{i\in[m]\\(H_i,g_i)=(H,g)}} \relu[\vw_l^i(x)+\vw_r^i(y)],
\end{equation}
suppose every singleton $\{z\}$ for $z\in G$ can be written as an intersection of sets from the family 
\begin{equation*}
    \{G-H_ig_i\inv\mid i\in[m], s{(H_i,g_i)}>0\}
\end{equation*}
Then, $\acc(f)=1$.

% suppose that each neuron can be paired with a negated copy of itself, such that, after reindexing,
% \begin{align*}
%     f_\vtheta(z\mid x, y)&=\sum_{i=1}^m\vw_u^i(z)\relu[\vw_l^i(x)+\vw_r^i(y)] + \sum_{i=1}^m\vw_u^i(z)\relu[-\vw_l^i(x)-\vw_r^i(y)]\\
%     &=\sum_{i=1}^m\vw_u^i(z)\abs{\vw_l^i(x)+\vw_r^i(y)}.
% \end{align*}
\end{prop}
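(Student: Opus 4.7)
The plan is to show that under the assumptions, for every input pair $(x,y)$ and every $z\neq xy$, we have $f_\vtheta(z\mid x,y) < f_\vtheta(xy\mid x,y)$, so the correct logit is uniquely maximized. Since $\vw_b = 0$, the forward pass reduces to $f_\vtheta(z\mid x,y) = \sum_i \vw_u^i(z)\, c_i(x,y)$ where $c_i(x,y) := \relu[\vw_l^i(x)+\vw_r^i(y)] \geq 0$. First I would form the difference
\begin{equation*}
f_\vtheta(z\mid x,y) - f_\vtheta(xy\mid x,y) = \sum_{i=1}^m \bigl[\vw_u^i(z) - \vw_u^i(xy)\bigr]\, c_i(x,y)
\end{equation*}
and argue term by term that the summand is nonpositive.

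For the nonpositivity, fix $i$ and suppose $c_i(x,y) > 0$; by (\ref{eq:coset2}) this forces $xy \notin H_ig_i\inv$, so by the unembedding hypothesis (\ref{eq:uconst}) we get $\vw_u^i(xy) = M_i$, where $M_i$ denotes the constant value of $\vw_u^i$ on the complement of $H_ig_i\inv$. Hence $\vw_u^i(z) - \vw_u^i(xy) = \vw_u^i(z) - M_i$, which is $0$ if $z\notin H_ig_i\inv$ and strictly negative if $z\in H_ig_i\inv$. Terms with $c_i(x,y) = 0$ contribute zero. So every term is $\leq 0$, and a term is strictly negative precisely when $c_i(x,y) > 0$ and $z\in H_ig_i\inv$.

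To get strict inequality overall I would invoke the singleton-intersection hypothesis: write $\{xy\} = \bigcap_{j\in J}(G - H_{i_j}g_{i_j}\inv)$ where each $(H_{i_j}, g_{i_j})$ satisfies $s(H_{i_j}, g_{i_j}) > 0$. Since $z \neq xy$ there is some $j \in J$ with $z \in H_{i_j}g_{i_j}\inv$, while $xy \notin H_{i_j}g_{i_j}\inv$. Because $s(H_{i_j}, g_{i_j}) > 0$, the definition (\ref{eq:s}) applied to the pair $(x,y)$ gives $\sum_{i:(H_i,g_i)=(H_{i_j},g_{i_j})} c_i(x,y) > 0$, so at least one such neuron $i$ has $c_i(x,y) > 0$. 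For this $i$, both strictness conditions hold, contributing a strictly negative term to the difference, while all other terms are $\leq 0$. Therefore $f_\vtheta(z\mid x,y) < f_\vtheta(xy\mid x,y)$ for every $z\neq xy$, yielding $\acc(f)=1$.

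The argument is essentially bookkeeping once one spots that the key combinatorial requirement is to find, for each wrong $z$, a neuron that (i) is active on input $(x,y)$ and (ii) assigns $z$ a strictly smaller unembedding value than $xy$; the singleton-intersection hypothesis exists precisely to guarantee (ii), and the condition $s(H,g)>0$ exists precisely to guarantee (i). The only mildly subtle point is that (\ref{eq:coset2}) only tells us $c_i = 0$ when $xy \in H_ig_i\inv$, not that $c_i > 0$ outside; without the $s(H,g)>0$ quantifier, the ReLU could silently zero out all of the neurons we need, so this is where I would be most careful.
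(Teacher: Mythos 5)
Your proof is correct and takes essentially the same approach as the paper's: decompose the logit difference neuron by neuron, show each summand $(\vw_u^i(z)-\vw_u^i(xy))\,c_i(x,y)$ is nonpositive via \Eqref{eq:coset2} and \Eqref{eq:uconst}, then obtain a strictly negative summand from the singleton-intersection hypothesis combined with $s(H,g)>0$. In fact your term-by-term argument is cleaner than the paper's, which attempts to factor $s(H,g)$ out of the inner sum in its intermediate inequality --- a step that is not literally valid since $s(H,g)$ lower-bounds $\sum_i c_i$, not each individual $c_i$ --- whereas your version reaches the same conclusion without needing that factorization.
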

\begin{proof}
    Let $x,y,z\in G$ with $z\neq xy$. 
    Then,
    \begin{align*}
        f(xy\mid x, y)-f(z\mid x, y)&=\sum_{i=1}^m (\vw^i_u(xy)-\vw^i_u(z))\relu[\vw_l^i(x)+\vw_r^i(y)]\\
        &\geq\sum_{(H, g)}s{(H, g)}\one\{xy\not\in H_ig_i\inv\}\sum_{\substack{i\in[m]\\(H_i,g_i)=(H,g)}}(\vw^i_u(xy)-\vw^i_u(z))\\
        &\geq 0,
    \end{align*}
    To see that the inequality is strict, choose $i$ such that $s{(H_i,g_i)}>0$ with $z\in H_ig_i\inv$ and $xy\not\in H_ig_i\inv$.
\end{proof}

% \wilson{
% TODO: implement this!
% if there is a bias present, this shouldn't work at all; the bias is very nonzero.
% If there isn't then not sure what happens. probably the bias is incorporated uniformly into the $\vw_u$, causing it to violate Equation~\ref{eq:uconst}. And thus resulting in poor bounds.
% If coset bounds are nonvacuous in no-bias case models (and irrep bounds are nonvacuous only in with-bias models), then ?????
% }

% In practice, $\Vcoset$ as described here is not able to provide non-vacuous bounds. The  main obstacle is the bias $\vw_b$, which are not near zero in the models we observe. Indeed, although it is neglected by previous work~\citep{chughtai2023, stander2024}, the bias plays an important role in circuits corresponding to one-dimensional irreps.

\section{\texorpdfstring{$\rho$}{ρ}-set circuits in detail}
\label{sec:rhoset-detail}

\subsection{List of novel observations}
\label{sec:observations}

\citet{chughtai2023} observe that neurons are irrep-sparse. That is,
\begin{obs}[\citealp{chughtai2023}]
    \label{obs:irrep}
    For each $i\in[m]$, there exists a real-valued irrep ${\rho_i\colon G\to \GL(d_i, \sR)}$ of degree $d_i$ as well as $\mA_i,\mB_i,\mC_i\in\sR^{d_i\times d_i}$ such that
    \begin{equation*}
        %\label{eq:irrep_sparse}
        \vw_l^i(x)\approx\tr(\rho(x) \mA_i),\quad
        \vw_r^i(x)\approx\tr(\rho(x) \mB_i),\quad
        \vw_u^i(x)\approx\tr(\rho(x) \mC_i).
    \end{equation*}
\end{obs}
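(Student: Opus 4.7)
Observation~\ref{obs:irrep} is an empirical claim about specific trained networks rather than a deductive theorem, so my plan is to convert it into a precise quantitative statement and verify it computationally. The key mathematical tool is the Peter--Weyl decomposition of real-valued functions on a finite group: every $f\colon G \to \sR$ admits a unique decomposition
\begin{equation*}
    f(g) = \sum_{\rho \in \Irrep(G)} \tr(\rho(g)\,\mA_\rho^f), \qquad \mA_\rho^f = \frac{d_\rho}{|G|}\sum_{g\in G} f(g)\,\rho(g\inv).
\end{equation*}
Irrep sparsity of $f$ is exactly the statement that $\mA_\rho^f = \vzero$ for all but one $\rho \in \Irrep(G)$, so the observation reduces to a claim about concentration of the Fourier spectra of the three neuron functions $\vw_l^i,\vw_r^i,\vw_u^i$ on a common irrep.

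First, for each neuron $i$ and each role $\ell \in \{l, r, u\}$, I would compute the Fourier coefficients $\{\mA_\rho^{\ell,i}\}_{\rho \in \Irrep(G)}$ by the explicit sum above. Second, for each neuron I would select a single dominant irrep $\rho_i$--e.g.\ the $\rho$ maximizing the combined Frobenius mass $\|\mA_\rho^{l,i}\|_F^2 + \|\mA_\rho^{r,i}\|_F^2 + \|\mA_\rho^{u,i}\|_F^2$--and take $(\mA_i,\mB_i,\mC_i)$ to be its coefficient triple. Third, I would bound the $L^\infty$ residual between each neuron and its single-irrep reconstruction; if the three residuals are small, the observation holds at the corresponding tolerance.

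The main obstacle is the meaning of ``$\approx$'': any absolute threshold on the residual feels arbitrary in isolation. My proposed resolution is operational, borrowing the compact-proofs framework of Section~\ref{sec:compactproofs}: declare the approximation adequate precisely when the irrep-truncated neurons, assembled into an idealized model $\tilde\vtheta$, yield a maximum logit distance to $f_\vtheta$ (bounded via Lemma~\ref{lem:inf-bound}) strictly smaller than the margin of $f_{\tilde\vtheta}$. Under this criterion, a non-vacuous bound from $\Virrep$ in Section~\ref{sec:empirical} is exactly a certificate that Observation~\ref{obs:irrep} holds at a usable tolerance on that model, while the \hyperref[cond:irrep_bad]{($\rho$-bad)} failures we report pinpoint the models on which the observation does \emph{not} hold tightly enough to be operationally meaningful.
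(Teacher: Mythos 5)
Your approach is essentially the one the paper uses. The paper's procedure (Appendix~\ref{sec:discovery}, step 1) forms the span $S_\rho$ of the $d^2$ matrix-entry functions of each irrep, checks that $\vw_l^i,\vw_r^i,\vw_u^i$ lie in a single $S_\rho$ up to a variance-explained threshold (they use $>90\%$), and recovers $\mA_i,\mB_i,\mC_i$ by least squares onto that span. Your Peter--Weyl Fourier transform $\mA_\rho^f = \tfrac{d_\rho}{|G|}\sum_g f(g)\rho(g\inv)$ is the same computation in different clothing: by Schur orthogonality the subspaces $S_\rho$ are mutually orthogonal and the matrix entries of $\rho$ form an (unnormalized) orthogonal basis of $S_\rho$, so the least-squares fit against a single irrep's entries coincides with the Fourier coefficient for that irrep, and ``irrep sparse'' is exactly ``Fourier spectrum concentrated on one $\rho$.'' Your criterion for selecting the dominant $\rho$ (total Frobenius mass across $l,r,u$) is a minor variant of the paper's per-role variance check.

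The one genuine difference is your proposal to \emph{define} ``$\approx$'' operationally via the compact-proofs criterion. The paper instead uses a plain variance threshold to decide irrep support when stating Observation~\ref{obs:irrep}, and only later uses $\Virrep$ to certify the full $\rho$-set-circuit interpretation (Observations~\ref{obs:irrep} \emph{and}~\ref{obs:ours} jointly). So a non-vacuous $\Virrep$ bound is a sufficient but not a calibrated certificate for irrep sparsity alone---it can fail for reasons entirely downstream of Observation~\ref{obs:irrep}, as with the \hyperref[cond:avar_bad]{($\va$-bad)} models, which are irrep-sparse but fail the constant-$\va$ condition. If you want an operational tolerance tied specifically to this observation, you would need a verifier that pessimizes only over what Observation~\ref{obs:irrep} leaves undetermined, which is a stronger pessimization than $\Virrep$'s and would give a looser bound. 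That caveat aside, your decomposition, residual test, and ideology all match the paper's.
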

However, this is not sufficient to fully describe how the model computes the group operation; in particular the ReLU nonlinearities are left as black boxes.

We observe further structure:
\begin{obs}[Ours]
\label{obs:ours}
Let $(\rho_i, \mA_i, \mB_i, \mC_i)_{i=1}^m$ be as in Observation~\ref{obs:irrep}. 
Suppose that all irreps of $G$ over $\sC$ are real-valued; in particular, this is the case for $S_n$\footnote{Equivalently, the Frobenius-Schur indicator of each $\rho\in\Irrep(G)$ is positive. We briefly consider complex and quaternionic irreps in Appendix~\ref{sec:complex}.}
\begin{enumerate}
    \item $\mC_i\approx r_i\mB_i \mA_i$ for some $r_i>0$.
    \label{it:abc}
    \item Each of $\mA_i,\mB_i,\mC_i$ are rank one, with $\norm{\mA_i}\approx \norm{\mB_i}_F$. Hence, we may write
    \label{it:rank-one}
        \begin{equation}
            \mA_i\approx s_i\va_i\vb_i^\top,\quad
            \mB_i\approx s_i\vc_i\vd_i^\top,\quad
            \mC_i\approx r_is_i^2\langle \vd_i,\va_i\rangle \vc_i\vb_i^\top,
        \end{equation}
         where $s_i,r_i\in\sR$ and $\va_i,\vb_i,\vc_i,\vd_i\in\sR^{d_i}$ are unit vectors. 
    \item $\va_i=\vd_i$.

\end{enumerate}
    Further, fix an irrep $\rho$ of $G$ and consider $I_\rho=\{i\in[m]\mid \rho_i=\rho\}$, the subset of neurons supported on $\rho$. Then,
    \begin{enumerate}
      \setcounter{enumi}{3}
        \item $\va_i$ is approximately constant on $I_\rho$. That is, there exists a unit vector $\va_\rho\in\sR^{d_i}$ such that
        \begin{equation*}
            \forall i\in I_\rho: \va_i\approx \vd_i\approx \va_\rho.
        \end{equation*}
        \item 
        $\{\vb_i\}_{i\in I_\rho}\approx \{-\vc_i\}_{i\in I_\rho}$.
        \item 
        \label{it:permute}
        Each $\{\vb_i\}_{i\in I_\rho}$ can be partitioned into $\{\rhoset_{\rho,q}\}_q$ such that, for each neuron $i$, the corresponding vectors $\vb_i$ and $-\vc_i$ must belong to the same partition. Further, $\rho$ acts on each partition by permutations; that is, $\rho$ induces a left $G$-set structure on every $\rhoset_{\rho,q}$. We say that such $\rhoset_{\rho,q}$ is a \textbf{$\rho$-set}.
        \item 
        \label{it:unif-coef}
        If $\dim\rho>1$, then, for each partition $\rhoset_{\rho,q}$, there exists $c_{\rho,q}\in\R$ such that, for every pair $\vb,\vb'\in \rhoset_{\rho,q}$, 
        \begin{equation*}
            \sum_{i\in I_\rho} s_i^3r_i \one\{\vb_i=\vb,\vc_i=-\vb'\}\approx c_{\rho,q}.
        \end{equation*}
        \item \label{it:sign} If $\dim\rho=1$, then, since we assume $\rho$ is real-valued, it must be either trivial or the sign irrep. We observe that the former never occurs. In the latter case, $\rhoset_\rho=\{\pm 1\}$, and there exist $c_+,c_-\in\R$ such that
        \begin{align*}
            c_+&\approx \sum_{i\in I_\rho} s_i^2t_i^2r_i \one\{\vb_i=1, \vb_j=1\}\approx \sum_{i\in I_\rho} s_i^2t_i^2r_i \one\{\vb_i=-1, \vb_j=-1\},\\
            c_-&\approx \sum_{i\in I_\rho} s_i^2t_i^2r_i \one\{\vb_i=1, \vb_j=-1\}\approx \sum_{i\in I_\rho} s_i^2t_i^2r_i \one\{\vb_i=-1, \vb_j=1\}.
        \end{align*}
        \item The bias $\vw_b$ satisfies
        \begin{equation*}
            \vw_b(z)\approx (c_--c_+)\sgn(z).
        \end{equation*}
    \end{enumerate}
\end{obs}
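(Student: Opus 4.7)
Since this is an empirical observation about trained models rather than a theorem, my plan is to systematically verify each of the nine numbered claims on the 100 trained $S_5$ models by explicit linear-algebraic operations on the extracted weights. I would treat the claims as nested layers of structure, verifying each only after the preceding ones hold, so that the extractions from earlier claims can be reused downstream.

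The preliminary step is to isolate the matrices $\mA_i, \mB_i, \mC_i$ of Observation~\ref{obs:irrep}. For each candidate $\rho\in\Irrep(G)$, the Peter--Weyl / character-orthogonality formula $\mA_i = (d_i/\abs{G})\sum_{g\in G}\vw_l^i(g)\,\rho(g^{-1})^\top$ (and analogues for $\mB_i, \mC_i$) recovers the relevant matrix; I would identify $\rho_i$ for each neuron by the irrep that maximizes the fraction of the neuron's energy captured. Then for the per-neuron claims (1)--(3), I would compute in closed form the scalar $r_i$ minimizing $\norm{\mC_i - r_i \mB_i\mA_i}_F$ and check smallness of the residual plus sign of $r_i$ (claim 1); take SVDs of all three matrices, confirm top-singular-value dominance, and extract the unit vectors $\va_i, \vb_i, \vc_i, \vd_i$ together with scalars $s_i$ (claim 2); then verify $\abs{\langle \va_i, \vd_i\rangle}\approx 1$ and fix the SVD sign ambiguity consistently so that $r_i > 0$ and $\va_i \approx \vd_i$ (claim 3).

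For the per-irrep claims (4)--(9), I would group neurons by $\rho_i$ and operate within each group. Claim (4) is checked by averaging $\va_i$ over $I_\rho$ and measuring dispersion around the mean. Claim (5) requires an optimal assignment (e.g., a Hungarian matching with small-distance tolerance) between $\{\vb_i\}$ and $\{-\vc_i\}$. For claim (6), the essential task is to discover the partition into $\rho$-sets by clustering the unique values of $\vb_i$ under a small tolerance and then checking that each cluster is closed under the action $\vb\mapsto\rho(g)\vb$ for all $g\in G$ (for $S_5$, it suffices to check on generators). Claim (7) amounts to computing $\sum_{i\in I_\rho} s_i^3 r_i\,\one\{\vb_i=\vb,\vc_i=-\vb'\}$ for each pair $(\vb,\vb')\in \rhoset_{\rho,q}^2$ and testing constancy over the pair; claim (8) is the analogous sign-irrep calculation with $\rhoset_\rho=\{\pm 1\}$; claim (9) is a direct comparison of $\vw_b$ against $(c_- - c_+)\sgn(\cdot)$ using the $c_\pm$ extracted in (8).

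The main obstacle I anticipate is reconciling sign and scale ambiguities with measurement noise. The SVD extraction of $(\va_i,\vb_i)$ and $(\vc_i,\vd_i)$ is only unique up to simultaneous sign flips of each pair, and these must be chosen coherently across all neurons in $I_\rho$ so that the constancy of $\va_\rho$ in claim (4) and the matching in claim (5) are meaningful rather than artifacts of our convention. In addition, the observation is only approximate, so a principled tolerance is needed: too loose trivially confirms structure while too tight rejects true structure. I would address this by looking for strong bimodality in the verification metrics (structure either clearly holds or clearly fails) and cross-checking that the failure cases coincide with the \emph{$\va$-bad} and \emph{$\rho$-bad} models flagged in Section~\ref{sec:empirical}; such a coincidence is the strongest independent evidence that the observation captures a genuine structural phenomenon rather than an artifact of the extraction pipeline.
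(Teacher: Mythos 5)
Your proposal is essentially the paper's own verification procedure, which is laid out step by step in Appendix~\ref{sec:discovery}: the only real difference is that you extract $\mA_i,\mB_i,\mC_i$ via the Fourier/Peter--Weyl projection formula (which as written yields $\mA_i^\top$ rather than $\mA_i$; a harmless transpose) whereas the paper uses least-squares regression onto the span $S_\rho$, and these agree up to noise; the SVD extraction, pairwise inner-product checks, $k$-means clustering into $\rho$-sets, and closure-under-$\rho(g)$ test are all the same. Your suggestion to validate tolerances by bimodality and by cross-checking against the (\emph{$\va$-bad})/(\emph{$\rho$-bad}) models matches the paper's spirit, with the paper's ultimate validation instrument being the compact accuracy bounds of Section~\ref{sec:compactproofs}.
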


\subsection{Seperating hyperplane interpretation}
\label{sec:hyperplane}
Another way to express Eq~\ref{eq:rho-circ2} is as
\begin{equation}
\label{eq:hyperplane}
    %f_{\rho,\rhoset}(z\mid x, y)=-\ang*{\rho(x\inv \star z \star y\inv), \sum_{\tilde\vb,\tilde\vb'\in \rhoset}  \relu[\va\tran(\tilde\vb-\tilde\vb')] \tilde\vb'\tilde\vb\tran},
    f_{\rho,\rhoset}(z\mid x, y)=-\ang*{\rho(x\inv \star z \star y\inv), \sum_{\vb,\vb'\in \rhoset}  \relu[\va\tran(\vb-\vb')] \vb'\vb\tran},
\end{equation}
where $\ang*{\cdot,\cdot}$ is the Frobenius inner product of matrices.\footnote{$\ang*{\mA,\mB}:=\tr(\mA\tran\mB)=\sum_{i,j}\mA_{i,j}\mB_{i,j}.$}. That is, each $\rho$-set circuit learns a $(\va,\rhoset)$-parameterized separating hyperplane $\ang*{\cdot, \mZ(\va,\rhoset)}$ between $\rho(e)=\mI$ and $\{\rho(z)\mid z\neq e\}$. Since the $\rho$ are all unitary and thus of equal Frobenius norm, such a hyperplane always exists (e.g. $\mZ=\mI$), though we do not show in general that it can be expressed in the form of Eq~\ref{eq:hyperplane}. %However, we do not show in general that it can  form in Eq~\ref{eq:hyperplane}.%\jacob{But we need there to exist an $\mZ$ in the form given by the double-sum. Is this obvious?} \wilson{no :(}

\subsection{Bi-equivariance}
\label{sec:bi-equivariance}
The observations in Section~\ref{sec:observations} force the network to be \textit{bi-equivariant}:

\begin{prop}
    \label{prop:bieq}
    If $f_\vtheta$ satisfies Observations~\ref{obs:irrep} and \ref{obs:ours} exactly, then for all $x,y,z,g_1,g_2\in G$,
    \begin{equation*}
      f_\vtheta(z\mid g_1x, yg_2)=f_\vtheta(g_1\inv zg_2\inv\mid x, y).
    \end{equation*}
    We say that $f_\vtheta$ is \textbf{\textit{bi-equivariant}}. In particular,
    \begin{equation*}
        f_\vtheta(z\mid x, y)=f_\vtheta(x\inv z y\inv\mid e, e),
    \end{equation*}
    so such $f_\vtheta$ depends only on $x\inv zy\inv$.
    Observe that $x\inv zy\inv=e$ iff $z=xy$.
\end{prop}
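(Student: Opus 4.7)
The approach is to decompose $f_\vtheta$ according to the irrep structure of its neurons (Observation~\ref{obs:irrep}), show that each irrep's contribution is bi-equivariant on its own (with the bias attached to the sign irrep piece), and invoke the fact that a sum of bi-equivariant functions is bi-equivariant.

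First I would use Observation~\ref{obs:ours} parts~(\ref{it:abc})--(\ref{it:permute}) to rewrite the contribution of each neuron $i \in I_\rho$ in the rank-one factored form of Equation~\ref{eq:rho-neurons} with $\va_i = \vd_i = \va_\rho$, $\vb_i \in \rhoset_{\rho,q}$, and $-\vc_i \in \rhoset_{\rho,q}$ living in the same partition. After factoring an $|s_i|$ through the $\relu$, part~(\ref{it:unif-coef}) lets me collapse $\sum_{i \in I_\rho} \vw_u^i(z)\,\relu[\vw_l^i(x) + \vw_r^i(y)]$ into a sum over partitions
\begin{equation*}
-\sum_q c_{\rho,q}\!\!\sum_{\vb,\vb' \in \rhoset_{\rho,q}}\! \vb^\top \rho(z)\vb'\;\relu\!\bigl[\vb^\top\rho(x)\va_\rho - \va_\rho^\top \rho(y)\vb'\bigr],
\end{equation*}
i.e.\ precisely the $\rho$-set circuit of Equation~\ref{eq:rho-circ}. (The sign bookkeeping of $s_i$ is the one minor nuisance here; it is resolved by absorbing any negative sign into $\vb_i$ and $\vc_i$.)

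Next I would substitute $x \mapsto g_1 x$ and $y \mapsto y g_2$ and perform the change of variables $\tilde\vb = \rho(g_1^{-1})\vb$, $\tilde\vb' = \rho(g_2)\vb'$. Observation~\ref{obs:ours}(\ref{it:permute}) guarantees that $\rho(g)$ permutes each partition $\rhoset_{\rho,q}$, so this is a bijection on the index set. Using orthogonality of the real irrep $\rho$ (so $\rho(g)^\top = \rho(g^{-1})$), the $\vw_u$-factor transforms as $\vb^\top \rho(z)\vb' = \tilde\vb^\top \rho(g_1^{-1} z g_2^{-1}) \tilde\vb'$, and similarly the arguments of the $\relu$ become $\tilde\vb^\top \rho(x)\va_\rho$ and $\va_\rho^\top \rho(y)\tilde\vb'$. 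This is the same expression as before but with $z$ replaced by $g_1^{-1} z g_2^{-1}$, proving bi-equivariance of the $\dim \rho > 1$ piece.

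Finally I would handle the sign irrep together with the bias $\vw_b$. Because $\vb_i, \vb_i' \in \{\pm 1\}$, a direct computation using the identities $\relu[a] + \relu[-a] = |a|$ and $|\rho(x) \pm \rho(y)| \in \{1 \pm \rho(xy)\}$ collapses $\sum_{i \in I_{\sgn}} \vw_u^i(z)\,\relu[\vw_l^i(x) + \vw_r^i(y)]$ to a combination $A \rho(z) + B \rho(x \star y \star z)$ for explicit constants $A,B$ determined by $c_+, c_-$ from Observation~\ref{obs:ours}(\ref{it:sign}); adding $\vw_b(z) = (c_- - c_+) \sgn(z)$ cancels the residual $\rho(z)$ piece and leaves a multiple of $\rho(x^{-1} \star z \star y^{-1})$. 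This is manifestly a function of $x^{-1} \star z \star y^{-1}$ and hence bi-equivariant. Bi-equivariance of the full $f_\vtheta$ is then immediate from additivity, and the consequence $f_\vtheta(z\mid x,y) = f_\vtheta(x^{-1} z y^{-1} \mid e, e)$ follows by setting $g_1 = x$, $g_2 = y$. The only delicate step is the sign irrep, since the bias must precisely cancel a non-bi-equivariant residual; in the higher-dimensional case the argument is essentially a routine change of variables once the $\rho$-set permutation action is in hand.
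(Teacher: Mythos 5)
Your proof is correct and takes essentially the same approach as the paper's: decompose by irrep and $\rho$-set partition, use orthogonality of the real irrep $\rho$ together with the $\rho$-set permutation action to perform the change of variables for $\dim\rho>1$, and directly compute the sign-irrep piece, noting that the bias cancels the non-equivariant $\rho(z)$ residual. The only cosmetic difference is that you substitute $\tilde\vb = \rho(g_1^{-1})\vb$, $\tilde\vb' = \rho(g_2)\vb'$ to match the bi-equivariance claim directly, whereas the paper substitutes $\tilde\vb = \rho(x^{-1})\vb$, $\tilde\vb' = \rho(y)\vb'$ to exhibit the dependence on $x^{-1}zy^{-1}$ alone; these are equivalent.
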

\begin{proof}
In the notation of Observation~\ref{obs:ours}, for each $\rho\in\Irrep(G)$ and partition $\rhoset_{\rho,q}$, let $f^{\rho,q}_\vtheta=\sum_{i\in I_\rho}\one\{\vb_i\in \rhoset_{\rho,q}\}f^i_\vtheta$. Then, if $f_\vtheta$ satisifes Observation~\ref{obs:ours} exactly,
\begin{itemize}
    \item If $\dim\rho>1$, for every $\rhoset_{\rho,q}$, there exists $\va\in\R^{\dim\rho}, c\in\R$ (all depending on $\rho$) such that, by re-indexing neurons,
    \begin{align*}
        f_\vtheta^{\rho,q}(z\mid x, y)&=-c\sum_{i,j=1}^k \vb_i\tran\rho(z)\vb_j\relu[\vb_i\tran\rho(x)\va-\va\rho(y)\vb_j]\\
        &=\sum_{i,j=1}^k \vb_i\tran\rho(x\inv zy\inv)\vb_j\relu[\va\tran(\vb_i-\vb_j)]\\
        &=\ang* {\rho(x\inv zy\inv), -c\sum_{i,j}^k\relu[\va\tran(\vb_i-\vb_j)]\vb_j\vb_i\tran}.
    \end{align*}
    \item If $\dim\rho=1$, then $\rho$ must be the sign irrep. In this case,
    \begin{align*}
        f_\vtheta^\rho&(z\mid x, y)+\vw_b(z)\\
        &=c_+\rho(z)\relu[\rho(x)+\rho(y)]+c_+\rho(z)\relu[-\rho(y)-\rho(x)]-c_+\rho(z)\\
        &\qquad-c_-\rho(z)\relu[\rho(x)-\rho(y)]-c_-\rho(z)\relu[\rho(y)-\rho(x)]+c_-\rho(z)\\
        &=c_+(\rho(z)\abs{\rho(x)+\rho(y)}-\rho(z))-c_-(\rho(z)\abs{\rho(x)-\rho(y)}-\rho(z))\\
        &=c_+(\rho(z)(1+\rho(xy))-\rho(z))-c_-(\rho(z)(1-\rho(xy))-\rho(z))\\
        &=(c_++c_-)\rho(zxy)\\
        &=(c_++c_-)\rho(x\inv z y\inv).
    \end{align*}
\end{itemize}
\end{proof}

\subsection{Steps to discover the \texorpdfstring{$\rho$}{ρ}-set circuit}
\label{sec:discovery}
In the main text, we presented the $\rho$-set circuit, and validated it by using our new understanding to efficiently lower-bound model performance. 
However, we did not describe the process by which we discovered the circuit in the first place. 
Here, we lead the reader through the steps of this process. This section can be viewed as an annotated walkthrough of the list of observations in Section \ref{sec:observations}.

\begin{enumerate}
\setcounter{enumi}{-1}

\item 
\textbf{Train a network on $G$-multiplication.}
Recall that, given inputs $x,y\in G$, the trained network assigns the following logit value to output $z$:
    \begin{equation*}
        f_\vtheta(z\mid x, y) = \vw_b(z) + \sum_i \vw_u^i(x) \relu [\vw_l^i(x) + \vw_r^i(y)].
    \end{equation*}
    
\item 
\textbf{Confirm neurons are irrep-sparse}.
Given a $d$-dimensional irrep $\rho$ of $G$, construct\footnote{We use \citet{GAP4} for this. See \texttt{src/groups.py:get\_real\_irreps} in the provided repository.}
a tensor $T_\rho$ of shape $(|G|, d, d)$, interpreted as a $d\times d$ representation matrix for each element of $G$. 
We may also think of $T_\rho$ as $d^2$ many vectors of size $|G|$. 
We compute the span of these vectors $S_\rho =\text{span}(T_\rho)\subseteq \mathbb{R}^{|G|}$. 
By the Schur orthogonality relations, the subspaces $\{S_\rho\}_{\rho\in\Irrep(G)}$ are mutually orthogonal.

Now, if we fix a neuron $i$, then $\vw_l^i, \vw_r^i, \vw_u^i$ are each functions $G \to \mathbb{R}$, i.e. vectors of dimensionality $|G|$. 
If these vectors are all approximately contained in a single $S_\rho$ (say each with $> 90\%$ variance explained), we say the neuron is supported on $\rho$.
We then use least-squares linear regression to find $\mA_i, \mB_i, \mC_i\in\sR^{d\times d}$ satisfying:
    \begin{equation*}
        \vw_l^i(x)\approx\tr(\rho(x) \mA_i),\quad
        \vw_r^i(x)\approx\tr(\rho(x) \mB_i),\quad
        \vw_u^i(x)\approx\tr(\rho(x) \mC_i).
    \end{equation*}
Note: the number of features in each of the three least-squares problems is $d^2$, and the number of data points is $|G| > d^2$.
We can now write the network as
    \begin{equation*}
        f_\vtheta(z\mid x, y) \approx \sum_i \tr(\rho(z) \mC_i) \relu [\tr(\rho(x) \mA_i + \rho(y) \mB_i)]
    \end{equation*}
and our task hereafter is to look for structure in $\mA_i, \mB_i, \mC_i$.

\item 
\textbf{Observe $\mC_i\approx r_i\mB_i \mA_i$ for some $r_i>0$.} 
We guess this relation by analogy with the modular addition circuit in \cite{yip2024integration}. We then verify it by defining Frobenius-normalized matrices
    \begin{equation*}
        \hat{\mA_i} = \mA_i/\norm{\mA_i},\quad
        \hat{\mB_i} = \mB_i/\norm{\mB_i},\quad
        \hat{\mC_i} = \mC_i/\norm{\mC_i}
    \end{equation*}
and noting that the unexplained variance $\norm{\hat{\mC_i} -\hat{\mB_i}\hat{\mA_i}}^2/\norm{\hat{\mC_i}}^2$ is small.

\item 
\textbf{Observe that for real irreps, $\mA_i, \mB_i, \mC_i$ are approximately rank 1, with $\norm{\mA_i} \approx \norm{\mB_i}$.} 
For each matrix, we check that the variance explained by its top principal component is $\approx 1$. 
We also check $\norm{\mA_i}/\norm{\mB_i} \approx 1$. 
So we have
        \begin{equation*}
            \mA_i\approx s_i\va_i\vb_i^\top,\quad
            \mB_i\approx s_i\vc_i\vd_i^\top,\quad
            \mC_i\approx r_is_i^2\langle \vd_i,\va_i\rangle \vc_i\vb_i^\top
        \end{equation*}
where $s_i$ is the top singular value of $\mA_i$ or $\mB_i$, $(\va_i, \vb_i)$ are the top left and right singular vectors of $\mA_i$, and $(\vc_i, \vd_i)$ are the top left and right singular values of $\mB_i$.

\item 
\textbf{Observe $\va_i \approx \vd_i$.} We check the unexplained variance $\norm{\va_i - \vd_i}^2/\norm{\va_i}^2$ is small.
\end{enumerate}

Now we restrict to neurons $i \in I_\rho$, i.e. those supported on a given irrep $\rho$.
Let us take stock, and use our observations thus far to write out the contribution to the network due to these neurons:
    \begin{equation*}
        f_\vtheta^\rho(z\mid x, y) = \sum_{i \in I_\rho} r_i s_i^3 \vb_i \tran \rho(z) \vc_i 
        \relu[\vb_i \tran \rho(x) \va_i + \va_i \tran \rho(y) \vc_i].
    \end{equation*}
    
\begin{enumerate}
\setcounter{enumi}{4}
    
\item 
\textbf{Observe $\va_i \approx \va$, a constant.} 
We simply check $\langle \va_i, \va_j \rangle \approx 1$ for all pairs $i,j$.
Now the only remaining degrees of freedom to understand are the $\vb_i$ and $\vc_i$.

\item 
\textbf{Cluster $\{\vb_i\}$ and $\{\vc_i\}$.} Using $k$-means clustering,
\footnote{For exploratory work, we use standard $k$-means. For the interpretation string of the compact proof, we also try a modification of $k$-means that takes into account the symmetries due to the corresponding irrep $\rho$. We find that this modified $k$-means algorithm does not yield substantially different results from the original.}
we find that $\{\vb_i\}$ and $\{\vc_i\}$ each consist of tight clusters.
Let $\mathcal{B}_\rho$ and $\mathcal{C}_\rho$ be the sets of means of these respective clusters.

In the simplest (and most illustrative) case, $\{ (\vb_i, \vc_i) \}_{i\in I_\rho} = \mathcal{B}_\rho \times \mathcal{C}_\rho$. 
In other words, $\vb$ and $\vc$ ``vary independently". 
Moreover, for any pair $(\vb, \vc) \in \mathcal{B}_\rho\times\mathcal{C}_\rho$, we observe
    \begin{equation*}
        \sum_{\{i \in I_\rho \text{ if } (\vb_i, \vc_i) \approx (\vb, \vc)\}}r_i s_i^3 = c_\rho
    \end{equation*}
where $c_\rho$ is a constant independent of $i$.

In general, though, we find $\{ (\vb_i, \vc_i) \}_{i\in I_\rho} = \bigcup_q \mathcal{B}_{\rho, q} \times \mathcal{C}_{\rho, q}$ for some partitions $\{\mathcal{B}_{\rho, q}\}_q$ and  $\{\mathcal{C}_{\rho, q}\}_q$ of $\mathcal{B}_\rho$ and $\mathcal{C}_\rho$. In other words, $\vb$ and $\vc$ vary independently within each partition. Moreover, we observe that for any $(\vb, \vc) \in \mathcal{B}_{\rho, q}\times\mathcal{C}_{\rho, q}$
    \begin{equation*}
        \sum_{\{i \in I_{\rho, q} \text{ if } (\vb_i, \vc_i) \approx (\vb, \vc)\}}r_i s_i^3 = c_{\rho, q}
    \end{equation*}
This split into partitions is a technical detail (the partitions are easy to find in practice) and the reader may wish to ignore it.
\item \textbf{Observe $\mathcal{B}_{\rho, q} = -\mathcal{C}_{\rho, q}$.} We check that for each $\vb \in \mathcal{B}$, there exists $\vc \in \mathcal{C}$ with $\langle \vb, \vc \rangle \approx -1$ (and vice versa with $\mathcal{B}$ and $\mathcal{C}$ swapped).
\item \textbf{Observe that $\mathcal{B}_{\rho, q}$ is approximately a $\rho$-set.} We check that for all $x \in G$ and $\vb \in \mathcal{B}$, there exists $\vb' \in \mathcal{B}$ such that $\rho(x) \vb \approx \vb'$ (that is, $\langle \rho(x) \vb, \vb' \rangle \approx 1$).
\end{enumerate}
Putting these observations together, we arrive at our final expression for the contribution to the network due to a single $\mathcal{B}_{\rho, q}$ (whose indices we drop to simply call $\mathcal{B}$):
\begin{equation*}
    f_{\rho,\rhoset}(z\mid x, y)=-\sum_{\vb,\vb'\in \rhoset} \vb\tran\rho(z)\vb'\relu[\vb\tran\rho(x)\va-\va\tran\rho(y)\vb'].
\end{equation*}
This is precisely the Equation~\ref{eq:rho-circ} that we stated when introducing the $\rho$-set circuit in Section~\ref{sec:rhoset-circ}.

\subsection{Sign irrep and modular arithmetic} 
\label{sec:cyclic}

Let us briefly extend our attention to groups with complex-valued irreps and  consider cyclic groups $\sZ/p\sZ$, i.e.\ arithmetic modulo $p$. All irreps of cyclic groups are one-dimensional, looking like $\rho(x)=e^{2\pi ikx/p}=\cos(2\pi kx/p)+i\sin(2\pi kx / p)$. The modular arithmetic setting is studied by \citet{yip2024integration}, where it is found that models use an approximation trick involving the sum-of-cosines formula and integration over a single variable:
\begin{align*}
    &\int_{-\pi}^\pi \cos(z+2\phi)\relu[\cos(x+\phi)+\cos(y+\phi)]d\phi\\
    &\qquad=\abs*{\cos\left(\dfrac{x-y}{2}\right)}\dfrac{1}{2}\int_{-\pi}^\pi\cos(z+2\phi)\abs*{\cos\left(\dfrac{x+y}{2}+\phi\right)}d\phi\\
    &\qquad=\abs*{\cos\left(\dfrac{x-y}{2}\right)}\dfrac{2\cos(x+y-z)}{3}.
\end{align*}
Here, analogously to the sign circuit of our setting, the sum of the group elements' embeddings is expressed as the embedding of their sum in order to compute the desired inequality with only a single sum/integral, instead of a double sum. (In this case, the extraneous $\abs{\cos((x-y)/2)}$ is not removed; indeed, this additional term is the ``Achilles' heel'' of this strategy, and necessitates that the model use multiple irreps, even though each one is faithful \citep{zhong2024pizza}.)

\subsection{Constant projection vectors} 

\begin{figure}[h]
\centering
\includegraphics[trim=0cm 0cm 0cm 0cm,width=2.5in]{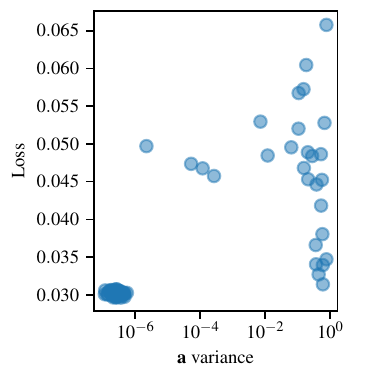}
\includegraphics[trim=0cm 0cm 0cm 0cm,width=2.5in]{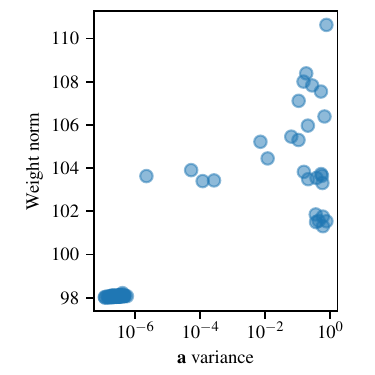}
\caption{Plots of normalized variance $\E_i[\norm{\va_i-\E_i\va_i}_2^2]/\E_i[\norm{\va_i}_2^2]$ vs.\ model loss and weight norm, where $\va_i$ is the projection vector for neuron $i$, and expectation is taken across neurons within the 4d standard irrep of $S_5$. Each point is one model out of 100 trained on $S_5$. Notice that constant $\va_i$ across neurons is correlated with better model performance and lower weight norm.}
\label{fig:avar}
\end{figure}

\label{sec:const-a}
Why does the network learn to set the $\va$ vector constant across neurons? A heuristic explanation is that such a constant vector $\va$ is one way to enforce bi-equivariance, which then leads the margin attained to be uniform across inputs $x,y\in G$. \citet{morwani2024margin} show that this uniform margin must necessarily be the case at a maximum margin solution; further, models trained on cross-entropy loss in the zero weight decay limit indeed attain the maximum margin~\citep{wei2019regularization}. 

However, we do see that some fraction of trained models do not have constant $\va$; we do not have a full understanding of these models, and thus are unable to non-vacuously bound accuracy. We notice that these models tend to have inferior performance and higher weight norm, suggesting that they have converged to a poor local minimum by chance; see Figure~\ref{fig:avar}. Further, even in these cases, we find that the $\va_i$ all lie in a two-dimensional subspace.

\section{Additional evidence for \texorpdfstring{$\rho$}{ρ}-set circuits}

In this section we provide additional evidence that models implement $\rho$-set circuits~(Eq.\ \ref{eq:rho-circ2}).
For each trained model, we constructed an idealized version of the model that implements $\rho$-set circuits exactly; each irrep $\rho$, corresponding $\rho$-set $\rhoset$, and constant vector $\va$ are found automatically using the steps described in Section~\ref{sec:discovery}.

Figure~\ref{fig:r2_boxplot} plots the distance between original model parameters and parameters of idealized models. Figure~\ref{fig:loss_swap_param} illustrates the effect of replacing each parameter type ($\vw_l,\vw_r,\vw_u,\vw_b$) in the original model with is idealized version. Figure~\ref{fig:loss_swap_irrep} is the same but instead aggregated by neurons corresponding to each irrep. Figure~\ref{fig:equivar} depicts the bi-equivariance of idealized models, original trained models, and randomly initialized models. We measure bi-equivariance by
\begin{equation}
\label{eq:equiv}
    \text{equiv}(\vtheta)=\E_{z\in G}\left[\dfrac{\sqrt{\var_{xy=z} f_\vtheta(z\mid x,y)}}{\E_{xy=z}f_\vtheta(z\mid x, y)}\right].
\end{equation}
If $f_\vtheta$ is exactly bi-equivariant, then $\text{equiv}(\vtheta)=0$.
%Note that, strictly speaking, the evidence here is already subsumed into the compact proofs discussed in the main text; it is included here to aid the reader's intuition.

\begin{figure}[h]
\centering
\includegraphics[trim=1cm .7cm 0cm 0cm,clip=true,width=5.0in]{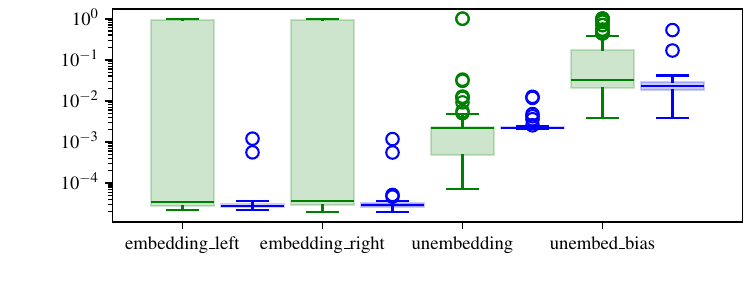}
\caption{Normalized distance between original and idealized model parameters ${\norm{\vw-\hat\vw}_2^2/\norm{\vw}_2^2}$ (i.e.\ $1-R^2$) for each of left embedding $\vw_l$, right embedding $\vw_r$, unembedding $\vw_u$, and unembed bias $\vw_b$ of 100 models trained on $S_5$. \textbf{Green} boxes include all models while \textbf{blue} boxes exclude models for which we find that the $\rho$-set explanation does not hold (i.e.\ either \hyperref[cond:avar_bad]{($\va$-bad)} or \hyperref[cond:irrep_bad]{($\rho$-bad)}).}
\label{fig:r2_boxplot}
\end{figure}
\begin{figure}[h]
\centering
\includegraphics[trim=0cm 0cm 0cm 0cm,clip=true,width=4.0in]{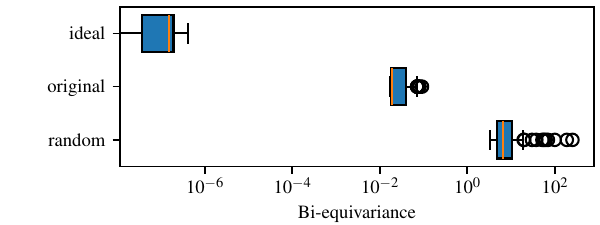}
\caption{
Bi-equivariance of idealized models, original trained models, and randomly initialized models for 100 models on $S_5$. See Eq.~\ref{eq:equiv} for the definition of our bi-equivariance metric $\text{equiv}(\vtheta)$. Lower means more bi-equivariant and a value of zero means exactly bi-equivariant. Theoretically, the idealized model is exactly bi-equivariant when parameters are considered over $\R$; however, some non-bi-equivariance is introduced by floating point imprecision.
}
\label{fig:equivar}
\end{figure}

\begin{figure}[h]
\centering
\includegraphics[trim=0.5cm 0.3cm 0cm 0cm,clip=true,width=5.0in]{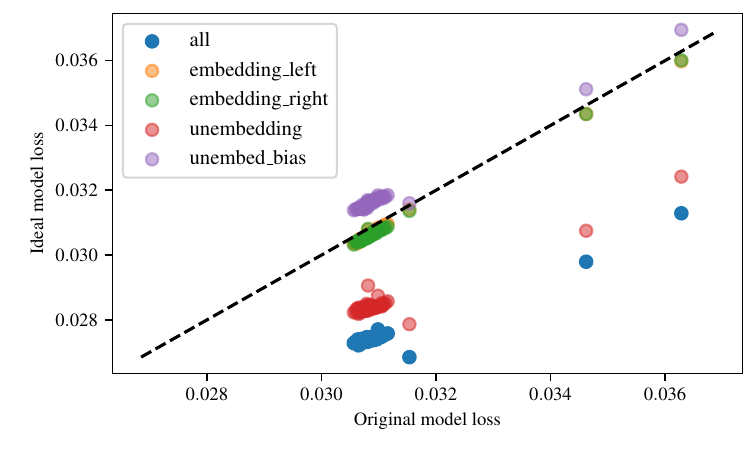}
\caption{Cross-entropy loss of original model vs.\ cross-entropy loss of model with parameters partially exchanged for idealized version. 100 models trained on $S_5$, restricted to those where neither \hyperref[cond:avar_bad]{($\va$-bad)} nor \hyperref[cond:irrep_bad]{($\rho$-bad)}.
Legend indicates which parameters are exchanged; for instance, \textbf{red} points have unembedding weights $\vw_u$ swapped for idealized version. \textbf{Blue} points are the full idealized model. Note they have loss uniformly lower than original model. Points corresponding to left embedding are obstructed by those corresponding to right embedding.}
\label{fig:loss_swap_param}

\centering
\includegraphics[trim=0.5cm 0.3cm 0cm 0cm,clip=true,width=5.0in]{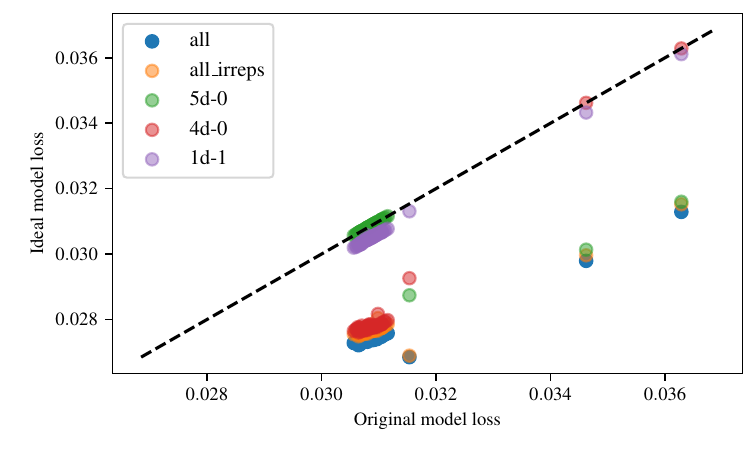}
\caption{Cross-entropy loss of original model vs.\ cross-entropy loss of model with neurons (embedding and unembedding) corresponding to specified irreps exchanged for idealized version. 100 models trained on $S_5$, restricted to those where neither \hyperref[cond:avar_bad]{($\va$-bad)} nor \hyperref[cond:irrep_bad]{($\rho$-bad)}.
Legend indicates which parameters are exchanged. \textbf{Blue} points are the full idealized model while for \textbf{orange} points only neurons corresponding to \textit{some} irrep are swapped (that is, dead neurons are preserved from the original). \textbf{Red} points correspond to the standard irrep \texttt{4d-0} and \textbf{purple} points correspond to the sign irrep \texttt{1d-1}. See Section~\ref{sec:s5-irreps} for a full enumeration of irreps of $S_5$.
}
\label{fig:loss_swap_irrep}
\end{figure}

\section{Accuracy bounds via coset concentration}
\label{sec:coset-bound}
We construct a verifier $\Vcoset$ that takes as input $\vtheta$ and an interpretation string
    \begin{equation*}
        \pi=((H_i, g_i))_{i=1}^m,
    \end{equation*}
    % where each $K_i$ is conjugate to $H_i$,
    and returns a lower bound on $\acc(\vtheta)$ in time $O(m\abs{G}^2)$.
    
For each $x,y\in G$, the verifier $\Vcoset$ computes a lower bound on the margin
\begin{equation*}
    M(z)\leq \min_{\substack{x,y,z'\in G\\xy=z\neq z'}} f_\vtheta(xy\mid x, y)-f_\vtheta(z'\mid x, y)
\end{equation*}
by doing the following:
\begin{enumerate}
    \item Check that the $(H_i)_{i=1}^m$ are each subgroups of $G$ and construct the conjugate subgroups $K_i=g_i H_i g_i\inv$ in time $O(\sum_{i=1}^m \abs{H_i}^2)$.
    \item Construct idealized parameters $\tilde\vtheta$ by averaging over the cosets given in $\pi$. Explicitly,
    \begin{align*}
        \tilde\vw_l^i(x) &= \abs{H_i}\inv\sum_{x'\in H_ix}\vw_l^i(x')\\
        \tilde\vw_r^i(y) &= \abs{K_i}\inv\sum_{y'\in yK_i}\vw_r^i(y')\\
        \tilde\vw_u^i(z) &= \begin{cases}
            \abs{G-H_ig_i\inv}\inv\sum_{z'\not\in H_ig_i\inv}\vw_u^i(z') & z\not\in H_ig_i\\
            \min\{\vw_u^i(z), \min_{z'\not\in H_ig_i} \tilde\vw_u^i(z')\} & z\in H_ig_i
        \end{cases}\\
        \tilde\vw_b &= \vzero.
    \end{align*}
    This takes time $O(\abs{G}m)$.
    \item Compute $s{(H_i, g_i)}$ for each $i$ according to Eq~\ref{eq:s}. This takes time $O(\sum_{i=1}^m \abs{H_i\backslash G}^2)$.
    \item For each $z\in G$, compute
    \begin{equation*}
        M(z)=\min_{\substack{z'\in G\\z'\neq z}}\sum_{(H, g)}s{(H, g)}\one\{z\not\in Hg\inv\}\sum_{\substack{i\in[m]\\(H_i,g_i)=(H,g)}}(\vw^i_u(z)-\vw^i_u(z'))\\
    \end{equation*}
    This takes time $O(m\abs{G}^2)$.
    \item Lemma~\ref{lem:margin-bound} gives an upper bound on the $\ell_\infty$ norm between the output logits of original and idealized models for each $x,y\in G$ in time $O(m\abs{G}^2)$. Sum the difference in logit values of the original and idealized models on the correct answer, which again can be computed in time $O(m\abs{G}^2)$. This gives
    \begin{equation}
    \label{eq:inf-bound}
        L(x, y)\geq \norm{f_\vtheta(\cdot\mid x, y)-f_{\tilde\vtheta}(\cdot\mid x, y)}_\infty+f_{\tilde\vtheta}(xy\mid x,y)-f_{\vtheta}(xy\mid x, y).
    \end{equation}
    The final accuracy bound is
    \begin{equation*}
        \Pr_{z\sim\Unif(G)} [M(z)> \max_{\substack{x, y\in G\\xy=z}} L(x, y)].
    \end{equation*}
\end{enumerate}
The total time complexity is dominated by the last two steps. Soundness, i.e.\ $\forall \pi: \Vcoset(\vtheta,G,\pi)\leq\acc(\vtheta)$, follows from Proposition~\ref{prop:coset}.

\paragraph{The bias term} The description of the coset circuit in \citet{stander2024} makes no mention of the unembedding bias term, although it is present in the models they train for their experiments. In practice, we find that the bias term is large. The maximum minus minimum value of the bias would then be added directly to $L(x,y)$ for every $x,y\in G$ if we were to run $\Vcoset$ as written. Thus, in addition, we train models without an explicit bias term for our $\Vcoset$ experiments. 
We find that such models are qualitatively similar to models with an explicit bias; the missing bias term is simply added uniformly to each unembedding weight $\vw^i_u$. Perhaps because of this, we are still unable to obtain nonvacuous bounds from $\Vcoset$ even for these bias-less models.

\section{Accuracy bounds via \texorpdfstring{$\rho$}{ρ}-sets}
\label{sec:rhoset-bound}
We construct a verifier $\Virrep$ that takes as input $\vtheta$ and an interpretation string $\pi$ and returns a lower bound on $\acc(\vtheta)$ in time $O(m\abs{G}^2)$. The interpretation $\pi$ comprises $((\rho_i,q_i,\va_i,\vb_i,c_i)_{i=1}^m)$, where $q_i$ is the index of the corresponding $\rho$-set $\rhoset_{\rho,q}=\{\vb_i\mid \rho_i=\rho,q_i=q\}$. The verifier then does
\begin{enumerate}
    \item Check that each $\rhoset_{\rho,q}$ is indeed a $\rho$-set.
    %\footnote{Note that this is sufficient for bi-equivariance to go through; we do not need to explicitly check that $\rho\in\Irrep(G)$.}
    This takes time $O(\sum_{\rho,q}\abs{G}\abs{\rhoset_{\rho,q}}^2\dim(\rho)^2)$. Since $m\geq\sum_{\rho,q}\abs{\rhoset_{\rho,q}}^2$ and $\sum_{\rho\in\Irrep(G)}\dim(\rho)^2=\abs{G}$, this is upper-bounded by $O(\abs{G}^2m)$. 
    % \item Within neurons corresponding to each $(\rho, q)$, check that $\{\vb_i\}=\{\vc_i\}$. This takes time $O(\sum_{\rho,q}\abs{\rhoset_{\rho,q}}^2\dim(\rho)^2)$, which is upper-bounded by $O(\abs{G}m)$.
    \item Within neurons corresponding to each $(\rho,q)$, check that $\va_i$ is constant. This again takes time no more than $O(\abs{G}m)$.
    \item For each $(\rho,q)$ where $\rho$ is not the sign irrep, check that the coefficients $c_i$ across all neurons corresponding to $(\rho,q)$ is constant. This takes time $O(m)$.
    \item For neurons corresponding to the sign irrep, there is only one $\rho$-set $\rhoset_\rho=\{\pm 1\}$. Check that the constraint in Observation~\ref{obs:ours}(\ref{it:sign}) holds; that is, that the neurons corresponding to $(+1,+1)$ have coefficients summing to the same value as those corresponding to $(-1, -1)$, and likewise for $(+1,-1)$ and $(-1,+1)$.
    \item Construct the idealized parameters $\tilde\vtheta$ consisting of
    \begin{align*}
        \tilde\vw_l^i(x)&=\vb_{i}\tran\rho(x)\va_{i}\\
        \tilde\vw_r^i(y)&=\va_{i}\tran\rho(x)\vc_{i}\\
        \tilde\vw_u^i(z)&=c_i\vb_{i}\tran\rho(z)\vb_{j}\\
        \tilde\vw_b(z)&=(c_--c_+)\sgn(z).
    \end{align*}
    This takes time $O(\sum_{i=1}^m\dim(\rho_i)^2\abs{G}\leq O(\abs{G}^2m)$.
    \item Compute the idealized margin
    \begin{equation*}
        M=\min_{x,y\in G}f_{\tilde\vtheta}(xy\mid x, y)-\max_{z'\neq xy}f_{\tilde\vtheta}(z\mid xy).
    \end{equation*}
    By Proposition~\ref{prop:bieq}, this can be done with a single forward pass of $f_{\tilde\vtheta}$, in time $O(\abs{G}m)$.
    \item Use Lemma~\ref{lem:margin-bound} to compute $L$ as in Eq~\ref{eq:inf-bound} in time $O(m\abs{G}^2)$. The final accuracy bound is
    \begin{equation*}
        \Pr_{x,y\sim\Unif(G)}[M>L(x,y)].
    \end{equation*}
\end{enumerate}
The total time complexity is dominated by the last step. Soundness, i.e.\ $\forall \pi: \Virrep(\vtheta,G,\pi)\leq\acc(\vtheta)$, is obvious from construction and Proposition~\ref{prop:bieq}.

\section{Insufficiency of causal interventions}
\label{sec:causal-appendix}
\citet{stander2024} perform a series of causal interventions on a model trained on $S_5$, and find results consistent with their description of the cosets algorithm.
However, we expect that these interventions would have the same result for models implementing different algorithms. To verify this, we replicate their outcomes with a model trained on the cyclic group $G=\sZ/53\sZ$; such a model cannot be using the coset algorithm, as $G$ as no non-trivial subgroups. Models trained cyclic groups were studied in \citet{yip2024integration} and found to be using a distinct algorithm; see also discussion in Section~\ref{sec:cyclic}.

In detail, the interventions  that \citet{stander2024} perform on $S_5$ are:
\begin{enumerate}
    \item \textbf{Embedding exchange:} Swapping the model's left and right embeddings destroys model performance. Since $S_5$ is non-commutative, we expect this to be the case regardless of what algorithm the model is implementing. Even with $\sZ/53\sZ$, which is commutative, we get this result, since $\mW_l\mE_l(x)+\mW_r\mE_r(y)\neq \mW_l\mE_r(y)+\mW_r\mE_l(x)$.
    \item \textbf{Switch permutation sign} Multiplying either the left or right embeddings individually by $-1$ destroys model performance, while multiplying both preserves model performance. We find this to be the case with $\sZ/53\sZ$ as well.
    \item \textbf{Absolute value non-linearity} Replacing the ReLU nonlinearity with an absolute value improves model performance. Again, this is the case with $\sZ/53\sZ$ as well. We explain this by decomposing $\relu(x)=(x+\abs{x})/2$. By inspecting the summation in Eq~\ref{eq:rho-circ2}, we see that the $x/2$ component sums to zero, so only the $\abs{x}/2$ term contributes. Thus, replacing the $\relu$ with an absolute value is approximately equivalent to doubling the activations, which reduces loss assuming the model already has near-perfect accuracy.
    \item \textbf{Distribution change} Perturbing model activations by $\mathcal{N}(1, 1)$ reduces performance to a greater extent than perturbing with $\mathcal{N}(1, -1)$. Again, we observe this with $\sZ/53\sZ$.
\end{enumerate}

See Table~\ref{tbl:causal} for results. 
%The magnitudes of loss and accuracy differ between our experiments and those of \citet{stander2024}, since we train our models for different numbers of epochs. However, the directional effect of each causal intervention is the same.

\begin{table}[h]
\caption{Causal interventions aggregated over 128 runs on $\sZ/53\sZ$ (ours) juxtaposed with the same interventions aggregated over 128 runs on $S_5$ \citep{stander2024}. We train our models with fewer iterations than \citet{stander2024}, resulting in higher base loss. However, the directional effect of each intervention is the same, even though the coset concentration explanation does not hold for $\sZ/53\sZ$.}
\label{tbl:causal}
\begin{center}
\begin{tabular}{lllll}
\toprule
& \multicolumn{2}{c}{$\sZ/53\sZ$ (ours)} & \multicolumn{2}{c}{$S_5$ \citep{stander2024}}\\
\midrule
\textbf{Intervention} & \textbf{Mean accuracy} & \textbf{Mean loss} & \textbf{Mean accuracy} & \textbf{Mean loss}\\
\midrule
Base model & 99.55\% & 0.0711 & 99.99\% & 1.97e-6\\
Embedding swap & 03.85\% & 4.15& 1\% & 4.76\\
Switch left and right sign & 99.69\%& 0.0663& 100\% & 1.97e-6\\
Switch left sign & 00.00\% & 17.2& 0\% & 22.39\\
Switch right sign & 00.00\% & 17.2& 0\% & 22.36\\
Absolute value nonlinearity & 99.88\% & 0.0045& 100\% & 3.69e-13\\
Perturb $\mcN(0,1)$ & 76.90\% & 0.829& 97.8\% & 0.0017\\
Perturb $\mcN(0,.1)$ & 99.51\% & 0.0752& 99.99\% & 2.96e-6\\
Perturb $\mcN(1,1)$ & 49.80\% & 1.79& 88\% & 0.029\\
Perturb $\mcN(1,-1)$ & 83.17\% & 0.780& 98\% & 0.0021\\
\bottomrule
\end{tabular}
\end{center}
\end{table}

\section{Additional proofs}
\begin{lemma}
    \label{lem:inf-bound}
    Let
    \begin{align*}
        \vtheta&=(\mU,\mE_l,\mE_r,\vw_b)=((\vw_u^i, \vw_l^i, \vw_r^i)_{i=1}^m,\vw_b),\\
        \tilde\vtheta&=(\tilde\mU,\tilde\mE_l,\tilde\mE_r,\tilde\vw_b)=((\tilde\vw_u^i, \tilde\vw_l^i, \tilde\vw_r^i)_{i=1}^m,\tilde\vw_b)
    \end{align*}
    Then, for any $x,y \in G$,
    \begin{align*}
        \max_{z\in G}&\abs{f_\vtheta(z\mid x, y)-f_{\vtheta'}(z\mid x, y)}\\
        &=\max_{z\in G} \abs*{\sum_{i=1}^m\left(\vw^i_u(z)\relu[\vw^i_l(x)+\vw^i_l(y)]+\vw_b(z) - \tilde\vw_u(z)\relu[\tilde\vw_l(x)+\tilde\vw_l(y)] -\tilde\vw_b(z)\right)}\\
        &\leq \max_{z\in G}\abs*{\sum_{i=1}^m(\vw^i_u(z)-\tilde\vw^i_u(z))\relu[\vw^i_l(x)+\vw^i_l(y)]} \\
        &\qquad + \max_{z\in G}\abs*{\sum_{i=1}^m \tilde\vw^i_u(z)\left(\relu[\vw^i_l(x)+\vw^i_l(y)]-\relu[\tilde\vw^i_l(x)+\tilde\vw^i_l(y)]\right)}\\
        &\qquad+\max_{z\in G}\abs{\vw_b(z)-\tilde\vw_b(z)}\\
        &=\norm{(\mU-\tilde\mU)\tran\relu[\mE_l(x)+\mE_r(y)]}_\infty \\
        &\qquad + \norm*{\tilde\mU\tran\left(\relu[\mE_l(x)+\mE_r(y)]-\relu[\tilde\mE_l(x)+\tilde\mE_r(y)]\right)}_\infty\\
        &\qquad+\norm{\vw_b-\tilde\vw_b}_\infty\\
        &\leq \norm{(\mU-\tilde\mU)\tran}_{2, \infty}\norm{\relu[\mE_l(x)+\mE_r(y)]}_2 \\
        &\qquad + \norm{\tilde\mU\tran}_{2, \infty}\norm{\left(\relu[\mE_l(x)+\mE_r(y)]-\relu[\tilde\mE_l(x)+\tilde\mE_r(y)]\right)}_2\\
        &\qquad +\norm{\vw_b-\tilde\vw_b}_\infty\\
        &\leq \norm{(\mU-\tilde\mU)\tran}_{2, \infty}(\norm{\mE_l(x)}_{1,  2}+\norm{\mE_r(y)}_{1,  2}) \\
        &\qquad + \norm{\tilde\mU\tran}_{2, \infty}(\norm{\mE_l-\tilde\mE_l}_{1,  2}+\norm{\mE_r-\tilde\mE_r}_{1,  2})\\
        &\qquad+\norm{\vw_b-\tilde\vw_b}_\infty.
    \end{align*}
\end{lemma}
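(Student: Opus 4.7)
The plan is to prove the chain of inequalities step by step, using three standard tools: the triangle inequality, the 1-Lipschitz property of ReLU, and a mixed-norm operator inequality of the form $\norm{\mA \vv}_\infty \leq \norm{\mA}_{2,\infty} \norm{\vv}_2$. The statement already lays out the chain explicitly, so the task is essentially to justify each step; no clever insight is required.

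First, I would expand $f_\vtheta(z\mid x,y) - f_{\tilde\vtheta}(z\mid x,y)$ using the forward pass formula in Eq.~\ref{eq:forward-sum}, then add and subtract the hybrid term $\sum_i \tilde\vw_u^i(z)\,\relu[\vw_l^i(x)+\vw_r^i(y)]$ so the difference splits into (a) a piece where only the unembedding is perturbed, (b) a piece where only the pre-activations are perturbed (with the idealized unembedding out front), and (c) the bias difference. Applying the triangle inequality and then taking $\max_z$ yields the first inequality of the stated chain.

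Next, I would rewrite each of the three maxima as an $\ell_\infty$ norm of a vector indexed by $z\in G$. The first and second terms are exactly $\norm{(\mU-\tilde\mU)^\top\relu[\mE_l(x)+\mE_r(y)]}_\infty$ and $\norm{\tilde\mU^\top(\relu[\mE_l(x)+\mE_r(y)]-\relu[\tilde\mE_l(x)+\tilde\mE_r(y)])}_\infty$, yielding the second inequality as equality by definition. Then I apply the mixed-norm bound
\[
\norm{\mA^\top \vv}_\infty = \max_z \abs{\langle (\mA^\top)_z, \vv\rangle} \leq \max_z \norm{(\mA^\top)_z}_2 \norm{\vv}_2 = \norm{\mA^\top}_{2,\infty}\norm{\vv}_2,
\]
which is a direct application of Cauchy--Schwarz to each row. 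This gives the third inequality.

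Finally, to obtain the last inequality, I bound $\norm{\relu[\mE_l(x)+\mE_r(y)]}_2$ by $\norm{\mE_l(x)+\mE_r(y)}_2$ using that $\relu$ is 1-Lipschitz with $\relu(0)=0$, then apply the triangle inequality to split this into $\norm{\mE_l(x)}_2+\norm{\mE_r(y)}_2$, and similarly handle the difference-of-ReLUs term via $|\relu(a)-\relu(b)| \leq |a-b|$ applied coordinatewise. The notation $\norm{\mE_l(x)}_{1,2}$ in the statement appears to denote $\norm{\mE_l(x)}_2$ viewed as a column of the embedding matrix $\mE_l$; I would verify this interpretation against the authors' convention and adjust accordingly. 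The only mild subtlety is the bookkeeping around which norm is applied to which object, but no step involves more than a one-line inequality, so I do not anticipate any real obstacle.
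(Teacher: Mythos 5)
Your proposal is correct and fills in exactly the standard justifications (triangle inequality after inserting the hybrid term, row-wise Cauchy--Schwarz for the $\norm{\cdot}_{2,\infty}$ bound, and $1$-Lipschitzness of $\relu$ with $\relu(0)=0$ for the last step); the paper provides no written-out proof beyond the displayed chain itself, so this is the same argument. Your flag about the $\norm{\mE_l(x)}_{1,2}$ notation is warranted---comparing with Lemma~\ref{lem:margin-bound}, where the analogous term appears without the $(x)$ and the bound must be $x,y$-independent to be computable in $O(m\abs{G})$, it should be read as the column-wise max-$\ell_2$ matrix norm $\norm{\mE_l}_{1,2}$, which your argument handles with one extra $\max_{x'}$.
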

\qed

\begin{lemma}
    \label{lem:margin-bound}
    Let $\vtheta$ and $\tilde\vtheta$ be as in Lemma~\ref{lem:inf-bound}. Further, suppose we have a margin lower bound function for $\vtheta$
    \begin{equation*}
        M(z)\leq \min_{\substack{x,y,z'\in G\\xy=z\neq z'}} f_\vtheta(xy\mid x, y)-f_\vtheta(z'\mid x, y).
    \end{equation*}
    Then,
    \begin{align}
        \label{eq:quad-bound}
        \acc(\tilde\vtheta)\geq \Pr_{x,y\sim\Unif(G)} &\bigg[M(xy)> \norm{(\mU-\tilde\mU)\tran}_{2, \infty}\norm{\relu[\mE_l(x)+\mE_r(y)]}_2 \\
        &\qquad+\norm{\tilde\mU\tran}_{2, \infty}\norm*{\left(\relu[\mE_l(x)+\mE_r(y)]-\relu[\tilde\mE_l(x)+\tilde\mE_r(y)]\right)}_2\nonumber\\
        &\qquad+\norm{\vw_b-\tilde\vw_b}_\infty\bigg]\nonumber\\
        \label{eq:lin-bound}
        \geq \Pr_{z\sim\Unif(G)} &\bigg[M(z)> \norm{(\mU-\tilde\mU)\tran}_{2, \infty}(\norm{\mE_l(x)}_{1,  2}+\norm{\mE_r(y)}_{1,  2}) \\
        &\qquad + \norm{\tilde\mU\tran}_{2, \infty}(\norm{\mE_l-\tilde\mE_l}_{1,  2}+\norm{\mE_r-\tilde\mE_r}_{1,  2})\nonumber\\
        &\qquad+\norm{\vw_b-\tilde\vw_b}_\infty\bigg].\nonumber
    \end{align}
    The bound in Equation~\ref{eq:quad-bound} can be computed in time $O(m\abs{G}^2)$, while that in Equation~\ref{eq:lin-bound} can be computed in time $O(m\abs{G})$.
\end{lemma}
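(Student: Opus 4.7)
The plan is to transfer the margin lower bound on $\vtheta$ into a pointwise correctness guarantee for $\tilde\vtheta$ via a per-input triangle inequality, and then aggregate these pointwise guarantees using the definition of $\acc(\tilde\vtheta)$. First I would fix an input pair $(x,y) \in G \times G$, write $z := xy$, and set $\delta(w) := f_{\tilde\vtheta}(w\mid x,y) - f_\vtheta(w\mid x,y)$ for $w \in G$. For every $z' \neq z$, the decomposition
\[
 f_{\tilde\vtheta}(z\mid x,y) - f_{\tilde\vtheta}(z'\mid x,y) \;=\; \bigl[f_\vtheta(z\mid x,y) - f_\vtheta(z'\mid x,y)\bigr] + \bigl[\delta(z) - \delta(z')\bigr]
\]
exposes the two ingredients: the first bracket is at least $M(z)$ by hypothesis on $\vtheta$'s margin, and the second is controlled in terms of $\max_w |\delta(w)|$, which Lemma~\ref{lem:inf-bound} upper bounds by exactly the expression appearing on the right-hand side of the statement. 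Whenever $M(z)$ strictly exceeds this perturbation bound, the logit difference is strictly positive for every incorrect $z'$, so $\tilde\vtheta$'s argmax at $(x,y)$ equals $xy$ and the prediction is correct.

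Since $\acc(\tilde\vtheta)$ is, by definition, the probability of a correct prediction under $x,y \sim \Unif(G)$, the set of pairs on which the margin condition holds is contained in the set of correctly classified pairs, and taking probabilities yields Eq~\ref{eq:quad-bound} directly. For Eq~\ref{eq:lin-bound} I would substitute the coarser $(x,y)$-independent bounds $\|\mE_l(x)\|_{1,2} + \|\mE_r(y)\|_{1,2}$ and $\|\mE_l - \tilde\mE_l\|_{1,2} + \|\mE_r - \tilde\mE_r\|_{1,2}$---the final chain of inequalities in Lemma~\ref{lem:inf-bound}---for the quadratic $\|\relu[\cdot]\|_2$ factors; with the RHS no longer depending on $x$ or $y$, the probability over input pairs collapses into a probability over $z$ alone.

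For time complexity, the quadratic form requires evaluating $\|\relu[\mE_l(x) + \mE_r(y)]\|_2$ at each of the $|G|^2$ input pairs at $O(m)$ cost per pair, plus $O(1)$ comparison against $M(xy)$, totalling $O(m|G|^2)$. The linear form precomputes the operator norms $\|(\mU-\tilde\mU)^\top\|_{2,\infty}$, $\|\tilde\mU^\top\|_{2,\infty}$ and the per-column norms of $\mE_l, \mE_r, \mE_l - \tilde\mE_l, \mE_r - \tilde\mE_r$ once in $O(m|G|)$, then checks the $(x,y)$-independent margin condition at each of the $|G|$ values of $z$ in $O(1)$, totalling $O(m|G|)$.

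The main obstacle is purely bookkeeping: applying the triangle inequality to $\delta(z) - \delta(z')$ naively yields a bound with a constant factor relative to $\max_w|\delta(w)|$, so verifying that the stated RHS correctly absorbs this constant into the particular norm-based expression of Lemma~\ref{lem:inf-bound} requires careful tracking of which $\delta$ value contributes which term. No deeper argument is needed beyond the decomposition displayed above, an invocation of Lemma~\ref{lem:inf-bound}, and the passage from pointwise correctness to the accuracy via the probability aggregation.
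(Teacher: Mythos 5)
Your decomposition
\[
 f_{\tilde\vtheta}(z\mid x,y) - f_{\tilde\vtheta}(z'\mid x,y) \;=\; \bigl[f_\vtheta(z\mid x,y) - f_\vtheta(z'\mid x,y)\bigr] + \bigl[\delta(z) - \delta(z')\bigr]
\]
is exactly how the paper's one-line proof (``follows immediately from Lemma~\ref{lem:inf-bound}'') should be fleshed out, and the time-complexity accounting and the passage from pointwise correctness to $\acc(\tilde\vtheta)$ are both fine. The problem is the sentence claiming the constant factor on $\delta(z)-\delta(z')$ ``absorbs'' into the norm expression: it does not. The RHS inside the probability in Eq~\ref{eq:quad-bound} is precisely the upper bound Lemma~\ref{lem:inf-bound} gives for $\max_w |\delta(w)|$, call it $L$, and the worst case of the second bracket is $\delta(z)-\delta(z') \geq -|\delta(z)| - |\delta(z')| \geq -2L$, so $M(xy) > L$ by itself only yields $f_{\tilde\vtheta}(xy) - f_{\tilde\vtheta}(z') \geq M(xy) - 2L$, which can be negative. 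There is no bookkeeping that saves the stated inequality: a factor of $2$ is genuinely missing.

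The fix is to keep the deviation on the correct logit \emph{signed} rather than taking its absolute value. Writing $\delta(w) = f_{\tilde\vtheta}(w\mid x,y) - f_\vtheta(w\mid x,y)$, one has $\delta(xy) - \delta(z') \geq \delta(xy) - \max_w|\delta(w)|$, so the sufficient condition is $M(xy) > \max_w|\delta(w)| - \delta(xy) = \max_w|\delta(w)| + f_\vtheta(xy\mid x,y) - f_{\tilde\vtheta}(xy\mid x,y)$. This is exactly the quantity $L(x,y)$ in Eq~\ref{eq:inf-bound} (with the roles of $\vtheta$ and $\tilde\vtheta$ swapped), and it is what the verifiers $\Vcoset$ and $\Virrep$ actually compute; it adds only $O(m\abs{G}^2)$ (resp.\ $O(m\abs{G})$ for a loosened form) and preserves the stated complexity. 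As literally written, the lemma omits this signed correction term, so either the statement should be read with that term appended, or the strict inequality should be against twice the displayed norm expression. Your proposal inherits this gap rather than resolving it; explicitly replacing one of the two absolute-value bounds on $\delta$ with the signed $-\delta(xy)$ would close it.
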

\begin{proof}
    This follows immediately from Lemma~\ref{lem:inf-bound}.
\end{proof}

For the remainder of this section, let $\ell$ denote the cross-entropy loss:
\begin{equation*}
    \ell(\vx, i):= -\log \dfrac{e^{\vx_i}}{\sum_{j} \vx_j}.
\end{equation*}

\begin{lemma}
   \label{lem:ce-lip}
   The cross-entropy loss is $\sqrt{2}$-Lipschitz.
\end{lemma}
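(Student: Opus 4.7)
The plan is to bound the gradient of $\ell(\vx, i)$ with respect to $\vx$ by $\sqrt{2}$ in the $\ell_2$ norm, then invoke the mean value theorem. A direct computation gives
\begin{equation*}
    \nabla_\vx \ell(\vx, i) = \softmax(\vx) - \ve_i,
\end{equation*}
where $\softmax(\vx)_j = e^{\vx_j}/\sum_k e^{\vx_k}$ and $\ve_i$ is the $i$-th standard basis vector. So the whole task reduces to showing that $\norm{\vp - \ve_i}_2 \leq \sqrt{2}$ for any probability vector $\vp$ on the simplex.

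Writing $q = \vp_i \in [0,1]$, I would split
\begin{equation*}
    \norm{\vp - \ve_i}_2^2 = (1-q)^2 + \sum_{j\neq i} \vp_j^2,
\end{equation*}
then use the elementary inequality $\sum_{j\neq i} \vp_j^2 \leq \bigl(\sum_{j\neq i}\vp_j\bigr)^2 = (1-q)^2$, valid because the $\vp_j$ are nonnegative and all cross-terms in the expansion of the right-hand side are nonnegative. Combining gives $\norm{\vp-\ve_i}_2^2 \leq 2(1-q)^2 \leq 2$, hence $\norm{\nabla_\vx \ell(\vx,i)}_2 \leq \sqrt{2}$ pointwise in $\vx$.

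Finally, since $\ell(\cdot, i)$ is smooth, the mean value theorem (applied along the segment joining $\vx$ and $\vy$) yields
\begin{equation*}
    \abs{\ell(\vx, i) - \ell(\vy, i)} \leq \sup_{\vz} \norm{\nabla_\vz \ell(\vz, i)}_2 \cdot \norm{\vx - \vy}_2 \leq \sqrt{2}\,\norm{\vx - \vy}_2,
\end{equation*}
which is the stated Lipschitz bound. There is no real obstacle here; the only thing worth double-checking is that the simplex bound $\sum_{j\neq i}\vp_j^2 \leq (1-q)^2$ is tight (achieved when one coordinate other than $i$ holds all the remaining mass), which is also what makes the constant $\sqrt{2}$ sharp (attained in the limit $\vx_i \to -\infty$ with mass concentrating on a single $j \neq i$).
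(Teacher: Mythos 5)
Your proof is correct and takes essentially the same route as the paper: compute the gradient $\nabla_\vx \ell(\vx,i) = \softmax(\vx) - \ve_i$ and bound its $\ell_2$ norm by $\sqrt{2}$. The paper states the bound directly in terms of exponentials, $\norm{\nabla_\vx\ell(\vx,i)}_2^2 = \frac{(\sum_{k\neq i}e^{\vx_k})^2 + \sum_{k\neq i}e^{2\vx_k}}{(\sum_k e^{\vx_k})^2} \leq 2$, without spelling out the final inequality; your decomposition via $q=\vp_i$ and the elementary bound $\sum_{j\neq i}\vp_j^2 \leq (1-q)^2$ is the same calculation made explicit, and your sharpness remark is a nice extra.
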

\begin{proof}
   Cross-entropy loss $\ell$ is differentiable with
   \begin{equation*}
       \nabla_\vx(\ell(\vx, i))_j = \dfrac{e^{\vx_j}-\delta_{ij}\sum_{k}e^{\vx_k}}{\sum_k e^{\vx_k}}.
   \end{equation*}
   Hence,
   \begin{equation*}
       \norm{\nabla_\vx \ell(\vx, i)}_2^2=\dfrac{(\sum_{k\neq i}e^{\vx_k})^2+\sum_{k\neq i}e^{2\vx_k}}{(\sum_k e^{\vx_k})^2}\leq 2.
   \end{equation*}
\end{proof}

\begin{lemma}
    \label{lem:ce-bound}
    For $\vx,\vy\in\sR^n$ and $i\in[n]$, the cross-entropy loss $\ell(\cdot, i)$ satisfies
    \begin{align*}
        \ell(\vy, i)&\leq \ell(\vx, i)+\nabla\ell(\vx)\tran(\vy-\vx)+\dfrac{1}{4}\norm{\vx-\vy}_2^2\\
        &\leq \ell(\vx, i)+\norm{\nabla\ell(\vx)}_2\norm{\vx-\vy}_2+\dfrac{1}{4}\norm{\vx-\vy}_2^2.
    \end{align*}
\end{lemma}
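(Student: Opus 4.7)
The plan is to establish that the cross-entropy loss $\ell(\cdot,i)$ is $\tfrac{1}{2}$-smooth, i.e.\ that its Hessian has operator norm at most $\tfrac{1}{2}$ uniformly in $\vx$, and then invoke the standard second-order descent lemma. The second inequality will follow immediately from Cauchy--Schwarz applied to the first.

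First, I would write $\ell(\vx,i)=-\vx_i+\log\sum_j e^{\vx_j}$ and compute directly that
\begin{equation*}
\nabla^2 \ell(\vx,i)=\diag(\vp)-\vp\vp\tran,
\end{equation*}
where $\vp\in\R^n$ is the softmax vector $\vp_k=e^{\vx_k}/\sum_j e^{\vx_j}$ (the index $i$ drops out of the Hessian because the $-\vx_i$ term is linear). This matches the structure already used implicitly in Lemma~\ref{lem:ce-lip}, so the gradient formula there is consistent.

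Next, I would bound the operator norm of $H:=\diag(\vp)-\vp\vp\tran$. For any $\vv\in\R^n$, a direct expansion gives $\vv\tran H\vv=\sum_k \vp_k \vv_k^2-\bigl(\sum_k \vp_k \vv_k\bigr)^2=\Var_\vp(\vv)$. By Popoviciu's inequality on variances, $\Var_\vp(\vv)\le \tfrac14(\max_k \vv_k-\min_k \vv_k)^2$. Using $(a-b)^2\le 2a^2+2b^2$ with $a=\max_k \vv_k$ and $b=\min_k \vv_k$, and observing that $(\max_k \vv_k)^2+(\min_k \vv_k)^2\le \|\vv\|_2^2$ since both terms are summands of $\|\vv\|_2^2$, I obtain $\Var_\vp(\vv)\le \tfrac12\|\vv\|_2^2$. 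Hence $H\preceq \tfrac12 I$ for every $\vx$.

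With the smoothness bound in hand, the first claimed inequality follows from the standard descent lemma: integrating $\nabla^2\ell$ twice along the segment from $\vx$ to $\vy$,
\begin{equation*}
\ell(\vy,i)=\ell(\vx,i)+\nabla\ell(\vx)\tran(\vy-\vx)+\int_0^1(1-t)\,(\vy-\vx)\tran\nabla^2\ell(\vx+t(\vy-\vx),i)(\vy-\vx)\,dt,
\end{equation*}
and bounding the integrand by $\tfrac12\|\vy-\vx\|_2^2$ using the Hessian bound, which integrates to $\tfrac14\|\vx-\vy\|_2^2$. The second inequality is then immediate from Cauchy--Schwarz: $\nabla\ell(\vx)\tran(\vy-\vx)\le \|\nabla\ell(\vx)\|_2\|\vx-\vy\|_2$.

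The only genuinely nontrivial step is the operator-norm bound $\|H\|_{\mathrm{op}}\le \tfrac12$; everything else is routine. I would present Popoviciu's inequality with a short self-contained justification (it is just the statement that a random variable supported on $[\min_k \vv_k,\max_k \vv_k]$ has variance at most a quarter of the squared range) so the lemma does not rely on an outside citation.
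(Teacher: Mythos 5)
Your proof is correct and follows the same overall structure as the paper's: compute the Hessian $\nabla^2\ell(\vx,i)=\diag(\vp)-\vp\vp\tran$, show it is bounded above by $\tfrac12\mI$, and invoke the standard second-order descent lemma, with Cauchy--Schwarz giving the second inequality. The one genuine difference is the technique used to bound $\lambda_{\max}$ of the Hessian. The paper applies the Gershgorin circle theorem directly to the rows of $\diag(\vp)-\vp\vp\tran$, obtaining $\lambda_{\max}\le\max_j 2\vp_j(1-\vp_j)\le\tfrac12$ in a single line. You instead identify $\vv\tran\nabla^2\ell\,\vv$ as the variance $\Var_\vp(\vv)$ and chain Popoviciu's inequality with two elementary bounds to reach $\tfrac12\norm{\vv}_2^2$. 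Both routes are sound and of comparable length; Gershgorin reads the eigenvalue bound directly off the matrix entries, while your variance route is more probabilistic in flavor and, as you note, has a short self-contained justification of the key step. One small caveat worth making explicit: the inequality $(\max_k\vv_k)^2+(\min_k\vv_k)^2\le\norm{\vv}_2^2$ relies on being able to take the max and min at distinct indices, which requires $n\ge 2$; this is automatic here since $n=\abs{G}\ge 2$, but the step is false for $n=1$ (a degenerate case where the Hessian is $0$ anyway).
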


\begin{proof}
    It suffices to show  $0\preceq \nabla^2\ell(\vx, i)\preceq \dfrac{1}{2}\mI$, whence $\ell(\cdot, i)$ is convex and $1/2$-smooth; the desired inequality is then a well-known consequence \citep{beck2017}.

    Write $\vp_i=\dfrac{e^{\vx_i}}{\sum_{j=1}^n e^{\vx_j}}$. Then  \citep{boyd2004convex},
    \begin{equation*}
        \nabla^2\ell(\vx, i)=\nabla^2\log\left(\sum_{j=1}^n e^{\vx_j}\right)=\diag(\vp)-\vp\vp\tran,
    \end{equation*}
    so, for any vector $\vv\in\sR^n$,
    \begin{equation*}
        \vv\tran\nabla^2\ell_i(\vx)\vv=\vv\tran(\diag(\vp)-\vp\vp\tran)\vv=\var_\vp(\vv)\geq 0,
    \end{equation*}
    confirming that $\nabla^2\ell(\vx,i)\succeq 0$. Furthermore, applying the Gershgorin circle theorem to the Hessian,
    \begin{equation*}
        \lambda_{\max}(\nabla^2\ell(\vx,i))\leq\max_{j\in[n]} \left(\vp_j-\vp_j^2+\sum_{k\neq j}\vp_j\vp_k\right)=\max_{j\in[n]}2\vp_j(1-\vp_j)\leq\dfrac{1}{2},
    \end{equation*}
    so $\nabla^2\ell(\vx,i)\preceq\dfrac{1}{2}\mI$.
\end{proof}

\begin{lemma}
    \label{lem:separate}
Let $\rho\colon G\to \GL(n,\sR)$ be a permutation representation of $G$ that decomposes into two subspaces $V\oplus W$ such that $\rho$ acts trivially on $W$. (For example, the standard $(n-1)$-dimensional irrep of $S_n$ is of this form.)
Let $\mV$ and $\mW$ be orthonormal bases of $V,W\in\sR^n$, respectively, and let $\mP=\mV\mV\tran$ and $\mQ=\mW\mW\tran$ be the corresponding orthogonal projections, so that $\mP+\mQ=\mI$.
Denote $\vb_i=\mV\tran\ve_i$, where $(\ve_i)_{i=1}^n$ are the standard basis of $\sR^n$.
Then, for any $\va\in V$,
\begin{equation*}
    \varphi(z)=\ang*{\rho|_V(z), -\sum_{i,j}^n\relu[\va\tran(\vb_i-\vb_j)]\vb_j\vb_i\tran}
\end{equation*}
is maximized at $z=e$.
\end{lemma}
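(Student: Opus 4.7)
My plan is to reduce the statement to a short direct computation by expanding $\vb_j^\top \rho|_V(z) \vb_i$ using the orthogonal decomposition $\mI = \mP + \mQ$, and then observing that the $\mQ$-piece contributes a term independent of $z$. The ReLU coefficients $c_{ij} := \relu[\va^\top(\vb_i - \vb_j)]$ are non-negative and vanish when $i = j$, and this is exactly what I will exploit at $z = e$.

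First I would introduce the permutation $\sigma = \sigma_z$ on $[n]$ with $\rho(z)\ve_i = \ve_{\sigma(i)}$, so that $\rho|_V(z)\vb_i = \mV^\top \rho(z)\ve_i = \vb_{\sigma(i)}$. Then
\begin{equation*}
\vb_j^\top \rho|_V(z)\vb_i \;=\; \ve_j^\top \mP \,\ve_{\sigma(i)} \;=\; \delta_{j,\sigma(i)} - Q_{j,\sigma(i)}.
\end{equation*}
The key lemma-internal step is that $Q_{j,\sigma(i)} = Q_{ji}$ for every $z \in G$. To see this, note that $V$ and $W$ are both $\rho$-invariant and mutually orthogonal (since $\mV, \mW$ are orthonormal), so $\mQ$ commutes with every $\rho(z)$; combining this with the hypothesis that $\rho$ acts trivially on the image of $\mQ$ gives $\mQ \rho(z) \ve_i = \rho(z)\mQ \ve_i = \mQ \ve_i$, and contracting with $\ve_j$ yields the claim.

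Substituting into $\varphi$ will then produce
\begin{equation*}
\varphi(z) \;=\; -\sum_{i,j} c_{ij}\bigl(\delta_{j,\sigma(i)} - Q_{ji}\bigr) \;=\; -\sum_{i} c_{i,\sigma(i)} \;+\; \sum_{i,j} c_{ij}\,Q_{ji}.
\end{equation*}
The second sum is independent of $z$, and the first is non-positive since every $c_{ij} \geq 0$. At $z = e$ we have $\sigma = \mathrm{id}$ and $c_{ii} = \relu[0] = 0$, so the first sum vanishes and the maximum is attained. No uniqueness is asserted, so this suffices.

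I do not expect any genuine obstacle: the only step that merits care is the identity $\mQ \rho(z) = \mQ$ on the ambient $\sR^n$, where I must be explicit that both commutation (from joint $\rho$-invariance of $V$ and $W$) and triviality of $\rho|_W$ are needed. Everything else is bookkeeping.
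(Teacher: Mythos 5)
Your proposal is correct and follows essentially the same route as the paper's proof: both rewrite $\vb_j^\top\rho|_V(z)\vb_i$ in terms of $\mP\rho(z)$ on the ambient space, split off the $z$-independent contribution coming from $\mQ$, and observe that what remains is a nonnegatively-weighted sum of off-diagonal entries of the permutation matrix $\rho(z)$, which vanishes (hence is maximized) at $z=e$ because $\relu[\va^\top(\vb_i-\vb_i)]=0$. The only difference is cosmetic — you work entrywise while the paper stays at the level of Frobenius inner products — and your version is if anything slightly cleaner, since the paper's intermediate line $\mP\rho(z)\mP = \rho(z)+\mQ\rho(z)\mQ^\top$ has a sign slip (it should be $\rho(z)-\mQ\rho(z)\mQ^\top$), which is harmless because that term is $z$-independent but which your direct computation avoids.
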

\begin{proof}
\begin{align*}
    \varphi(z)&=-\ang*{\rho|_V(z), \sum_{i,j}^n\relu[\va\tran(\vb_i-\vb_j)]\vb_j\vb_i\tran}\\
    &= -\ang*{\rho|_V(z), \sum_{i,j}^n\relu[\va\tran(\vb_i-\vb_j)]\mV\tran\ve_j\ve_i\tran\mV}\\
    &= -\ang*{\mV\rho|_V(z)\mV\tran, \sum_{i,j}^n\relu[\va\tran(\vb_i-\vb_j)]\ve_j\ve_i\tran}\\
    &= -\ang*{\mP\rho(z)\mP, \sum_{i,j}^n\relu[\va\tran(\vb_i-\vb_j)]\ve_j\ve_i\tran}\\
    &= -\ang*{\rho(z), \sum_{i,j}^n\relu[\va\tran(\vb_i-\vb_j)]\ve_j\ve_i\tran}\\
    &\qquad-\ang*{\mQ\rho(z)\mQ\tran, \sum_{i,j}^n\relu[\va\tran(\vb_i-\vb_j)]\ve_j\ve_i\tran}\\
    &= -\ang*{\rho(z), \sum_{i,j}^n\relu[\va\tran(\vb_i-\vb_j)]\ve_j\ve_i\tran}\\
    &\qquad-\ang*{\mQ\mQ\tran, \sum_{i,j}^n\relu[\va\tran(\vb_i-\vb_j)]\ve_j\ve_i\tran},
\end{align*}
where the last step uses that $\rho|_W$ is trivial. The first term of the last line is the negation of a sum over off-diagonal entries of the permutation matrix $\rho(z)$, and thus maximized at $z=e$. The second term does not depend on $z$.
\end{proof}
An irrep $V$ of $G$ admits such a $\rho$ with $W$ being the trivial representation, if for the corresponding subgroup $H\subset G$ the double coset has two elements $H\setminus G/H$. This applies in our case where $G=S_5$ and $H=S_4$.

\begin{lemma}
\label{lem:equivalence}
    Suppose $f\colon G\to\sC$ is \textup{coset concentrated} and \textup{irrep sparse}; that is, $f$ is constant on the cosets of some $H\leq G$ and there exists $\rho\in\Irrep(G)$ mapping $\rho\colon G\to\GL(d, \sC)$ such that $f$ is a linear combination of the entries of $\rho$. Then, there must exist an embedding $\iota\colon G/H\to\sC^d$ such that the action of $\rho$ on $\iota(G/H)$ is isomorphic to the action of $G$ on $G/H$. Further, there exist $\va\in (\sC^d)^*$ and $\vb\in\sC^d$ such that $f(g)=\ang{\va, \rho(g)\vb}$.

    The converse also holds and is immediate.
\end{lemma}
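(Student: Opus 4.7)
The plan is to convert both hypotheses into algebraic constraints on the matrix $\mA$ appearing in the irrep-sparse representation $f(g) = \tr(\rho(g)\mA)$, then use these to build the embedding $\iota$ and the rank-one factorization. First I would unpack irrep sparsity as $f(g) = \tr(\rho(g)\mA)$ for some $\mA \in \sC^{d\times d}$, and rewrite coset concentration on left cosets of $H$ (the right-coset case is symmetric) as $f(gh) = f(g)$ for $h \in H$. Expanded out, this says $\tr(\rho(g)(\rho(h)\mA - \mA)) = 0$ for all $g$, so by Schur orthogonality---which makes the pairing $\mB \mapsto \tr(\rho(\cdot)\mB)$ injective---we obtain $\rho(h)\mA = \mA$ for every $h \in H$. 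Equivalently, the columns of $\mA$ lie in $V^H := \{\vv : \rho(h)\vv = \vv \ \forall h \in H\}$, and by Frobenius reciprocity $\dim V^H$ equals the multiplicity of $\rho$ in $\sC[G/H]$; in particular $V^H \neq 0$ since $\mA \neq 0$.

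Next I would pick $\vb \in V^H$ with $\Stab_G(\vb) = H$ and set $\iota(gH) := \rho(g)\vb$. Well-definedness uses $H \subseteq \Stab_G(\vb)$, and the reverse inclusion---a genericity statement inside the finite-dimensional space $V^H$---makes $\iota$ injective; $\rho$ then permutes $\iota(G/H)$ isomorphically to the $G$-action on $G/H$. In the multiplicity-one regime where $V^H = \sC\vb$, any $\mA$ with columns in $V^H$ automatically factors as $\mA = \vb\va\tran$ for some $\va$, giving $f(g) = \va\tran\rho(g)\vb = \ang{\va, \rho(g)\vb}$ as required.

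The hard part will be the rank-one conclusion when $\rho$ appears with multiplicity greater than one in $\sC[G/H]$: then $\mA$ need only have columns in $V^H$, not be rank one, so a generic such $f$ decomposes as a sum $\sum_k \ang{\va_k, \rho(g)\vb_k}$ rather than a single term. A complete proof must either restrict to the multiplicity-one case (which does cover the main example of interest, $G = S_5$ with $H = S_4$ and $\rho$ the standard irrep, since $\sC[S_5/S_4]$ decomposes as trivial $\oplus$ standard), sharpen the hypothesis on $H$ (e.g.\ taking $H$ maximal among subgroups on whose cosets $f$ is constant), or loosen the conclusion to allow such a finite sum. A secondary technical point is verifying that a $\vb$ with stabilizer exactly $H$ actually exists in $V^H$; this is automatic in the multiplicity-one, maximal-$H$ setting but otherwise needs an explicit genericity argument inside $V^H$.
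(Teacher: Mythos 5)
Your route is genuinely different from the paper's, and the two are dual via Frobenius reciprocity. The paper starts from the permutation module $\sC[G/H]$: it writes $f(g)=\ang{\va,\tilde\rho(g)\ve_{g_1}}$ for the permutation representation $\tilde\rho$, decomposes $\sC[G/H]$ into irreducibles, and invokes irrep sparsity to kill all summands not isomorphic to $\rho$. You instead stay inside $\rho$: you use the injectivity of $\mA\mapsto\tr(\rho(\cdot)\mA)$ (Schur orthogonality) to convert left-coset concentration into $\rho(h)\mA=\mA$ for $h\in H$, so the columns of $\mA$ lie in the fixed subspace $V^H$, whose dimension is exactly the multiplicity of $\rho$ in $\sC[G/H]$. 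This is cleaner in one respect: it makes explicit where the vector $\vb$ lives and why, whereas the paper produces it implicitly as $\pi_j\ve_{g_1}$.

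The gap you flag---that when $\dim V^H>1$ the matrix $\mA$ need not be rank one, so $f$ only decomposes as a sum $\sum_k\ang{\va_k,\rho(g)\vb_k}$---is real, and it is worth noting that the paper's own proof has the same implicit restriction. The paper's step ``at most one term of this sum is nonzero'' does not follow from irrep sparsity when several summands $V_j$ of $\sC[G/H]$ are isomorphic to $\rho$: their contributions are all supported on the same span of matrix entries of $\rho$, so irrep sparsity cannot separate them, and their sum can have rank larger than one. Likewise, the paper's injectivity claim for $gH\mapsto\pi_j\ve_{g_i}$ amounts exactly to $\Stab_G(\pi_j\ve_{g_1})=H$, the same genericity point you raise. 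So your proposal is not deficient relative to the paper; it reaches the same conclusion by a different route and is more candid about the hypotheses actually being used. One clean way to close both gaps, consistent with your third suggestion, is to take $H$ to be the full stabilizer subgroup $\{h\in G: f(gh)=f(g)\ \forall g\}$; with that choice, once $\mA=\vb\va\tran$ is rank one, $\Stab_G(\vb)\supsetneq H$ would force $f$ to be constant on strictly larger cosets, contradicting maximality, so $\Stab_G(\vb)=H$ follows for free.
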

\begin{proof}
    Let $\{g_1,\ldots,g_k\}$ be representatives for the cosets $G/H$ with $g_1=e$, and let $\tilde\rho\colon G\to S_k\subseteq \GL(\bigoplus_{i=1}^k g_i\sC)=:V$ be the permutation representation corresponding to the action of $G$ on $G/H$. (That is, $\tilde\rho$ is induced by the trivial representation on $H$.) Let $\{\ve_{g_i}\}_{i=1}^k$ denote the basis vectors of $V$. Define $\va\in V^*$ by $\ang{\va,\ve_{g_i}}=f(g_i)$ for each $i\in[k]$. Hence $f(g_i)=\ang{\va, \tilde\rho(g_i)\ve_{g_1}}$. Since $f$ is coset concentrated, this same relation holds for all $g\in G$.
    Now, let $V=\bigoplus_{j=1}^n V_j$ be the decomposition of $V$ into irreps, with corresponding projections $\pi_j\colon V\to V_j$ and irreps $\rho_j\colon G\to\GL(V_j)$. We have
    \begin{equation*}
        f(g)=\ang{\va, \tilde\rho(g)\ve_{g_1}}=\sum_{j=1}^n \ang{\va|_{V_j}, \pi_j\tilde\rho(g)\ve_{g_1}}=\sum_{j=1}^n \ang{\va|_{V_j}, \rho_j(g)\pi_j\ve_{g_1}}.
    \end{equation*}
    By the irrep sparsity assumption, at most one term of this sum is nonzero, and that corresponding term must have $\rho_j=\rho$:
    \begin{equation*}
        f(g)=\ang{\va|_{V_j}, \rho(g)\pi_j\ve_{g_1}}.
    \end{equation*}
    Then $\rho$ acts on $\{\pi_j\ve_{g_i}\}_{i=1}^k$ isomorphically to the action of $G$ on $G/H$ and $(\va|_{V_j}, \pi_j\ve_{g_1})$ is the desired covector and vector pair from the theorem statement.
\end{proof}

\section{Permutation representations and \texorpdfstring{$\rho$}{ρ}-sets of \texorpdfstring{$S_5$}{S₅}}

In this section we enumerate all irreps of $S_5$ and their corresponding minimum $\rho$-sets.

\label{sec:s5-irreps}
\begin{table}[h]
\begin{center}
\begin{tabular}{lll}
\toprule
\textbf{Irrep} & \textbf{Minimum $\rho$-set size} & \textbf{Stabilizer}\\
\midrule
trivial (\texttt{1d-0}) & 1 & $S_5$ \\
sign (\texttt{1d-1}) & 2 & $A_5$ \\
standard (\texttt{4d-0}) & 5 & $S_4$ \\
sign-standard (\texttt{4d-1}) & 10 & $A_4$  \\
\texttt{5d-0} & 6 & $F_{5}$ \\
\texttt{5d-1} & 12 & $D_{12}$\\
\texttt{6d-0} & 20 & $\sZ/6\sZ$ or $S_3$\\
\bottomrule
\end{tabular}
\end{center}
\caption{Irreps $\rho\in\Irrep(S_5)$ by the size of the minimum $\rho$-set, and corresponding stabilizers. We name each irrep by its dimension and an arbitrary disambiguating integer; e.g.\ \texttt{5d-0} is a five-dimensional irrep. A projected $\rho$-set (Definition~\ref{def:proj-rho}) is constant on the cosets of its stabilizer. Notice that the ordering of $\Irrep(S_5)$ by minimum $\rho$-set size matches the ordering by frequencies with which irreps are learned~\citep[Figure 7]{chughtai2023}.}
\label{tbl:s5-rho-table}
\end{table}

\begin{table}[h]
\begin{center}
\begin{tabular}{lll}
\toprule
\textbf{Stabilizer} & \textbf{$G$-set size} & \textbf{Irreps present}\\
\midrule
$S_5$ & 1 & \texttt{1d-0}\\
$A_5$ & 2 & \texttt{1d-0,1d-1}\\
$S_4$ & 5 & \texttt{1d-0,4d-0}\\
$F_5$ & 6 & \texttt{1d-0,5d-0}\\
$A_4$ & 10 & \texttt{1d-0,1d-1,4d-0,4d-1}\\
$D_{12}$ & 10 & \texttt{1d-0,4d-0,5d-1}\\
$D_{10}$ & 12 & \texttt{1d-0,1d-1,5d-0,5d-1}\\
$D_{8}$ & 15 & \texttt{1d-0,4d-0,5d-0,5d-1}\\
$\sZ/6\sZ$ & 20 & \texttt{1d-0,4d-0,4d-1,5d-1,6d-0}\\
$S_3^0$ & 20 & \texttt{1d-0,4d-0,5d-1,6d-0}\\
$S_3^1$ & 20 & \texttt{1d-0,1d-1,4d-0,4d-1,5d-0,5d-1}\\
$\sZ/5\sZ$ & 24 & \texttt{1d-0,1d-1,5d-0,5d-1,6d-0}\\
$V_4^0$ & 30 & \texttt{1d-0,1d-1,4d-0,4d-1,5d-0,5d-1}\\
$V_4^1$ & 30 & \texttt{1d-0,4d-0,5d-0,5d-1,6d-0}\\
$\sZ/4\sZ$ & 30 & \texttt{1d-0,4d-0,4d-1,5d-0,5d-1,6d-0}\\
\bottomrule
\end{tabular}
\end{center}
\caption{All transitive permutation representations of $S_5$ with size no more than 30 along with decomposition into irreps. 
By the orbit-stabilizer theorem, transitive permutation representations correspond directly to left actions of $S_5$ on cosets of its subgroups; the subgroup acted upon is then the stabilizer of the action. Upper indices disambiguate subgroups of $S_5$ that are isomorphic but not conjugate. $F_5$ is the Frobenius group of order $20$ and $V_4$ is the Klein four-group.}
\label{tbl:s5-perm-table}
\end{table}

\section{Experiment details}
\label{sec:details}

For the main text, we train 100 one-hidden-layer models with hidden dimensionality $m=128$ on the group $S_5$. The test set is all pairs of two inputs from $S_5$, with $\abs{S_5}^2=14400$ points total. The training set comprises iid samples from the test set and has size 40\% of the test set. Note that we use the same training set for each of the 100 training runs. Each model was trained over 25000 epochs. Learning rate was set to 1e-2. We use the Adam optimizer~\citep{kingma2015adam} with weight decay 2e-4 and $(\beta_1,\beta_2)=(0.9,0.98)$.\footnote{Note that previous work uses AdamW~\citep{loschilov2019adamw} instead, with weight decay 1. However, we found that models trained with Adam grok the group composition task in an order of magnitude fewer epochs. We did not notice significant differences in the end result post-grokking between models trained with Adam vs AdamW.} All models were trained on one Nvidia A6000 GPU. Compact proof verifiers were run on an Intel Core i5-1350P CPU. Neural networks were implemented in PyTorch~\citep{paszke2019automatic}. Their group-theoretic properties were analyzed with GAP~\citep{GAP4}.

In Section~\ref{sec:other-real}, models are trained with the same hyperparameters as described above, except 1) 
\begin{itemize}
\item For $A_5$, the hidden dimensionality is $m=256$, the weight decay is $10^{-6}$, and the unembedding bias is omitted. 
Recall that, in our explanation, the role of the unembedding is to deal with the sign irrep, which is not present for alternating groups. The larger hidden dimensionality and smaller weight decay were used in an attempt to reduce occurrences of \hyperref[cond:irrep_bad]{($\rho$-bad)}, though we did not observe these changes to have significant effect
\item For $S_4$, the hidden dimensionality $m=64$ and the training set is 80\% of the test set in order to account for the smaller total number of data points.
\end{itemize}

As input to the verifier $\Vcoset$, recall that the interpretation string looks like $\pi=((H_i,g_i))_{i=1}^m$, where the left embedding at neuron $i\in[m]$ should be constant on the right cosets of $H_i$ and the right embedding at neuron $i$ should be constant on the left cosets of $g_iH_ig_i\inv$. When constructing $\pi$, we set each $H_i$ to be the largest subgroup of $G$ such that
\begin{equation*}
    \dfrac{\E\var(\vw_l^i\mid H_i\backslash G)}{\var(\vw_l^i)}<0.01,
\end{equation*}
and $K_i$ to the largest subgroup such that
\begin{equation*}
    \dfrac{\E\var(\vw_l^i\mid G/K_i)}{\var(\vw_l^i)}<0.01,
\end{equation*}
and check for the existence of $g_i$ such that $K_i=g_iH_ig_i\inv$. The quotient on the LHS is bounded in $[0, 1]$ by the law of total variance.

As input to the verifier $\Virrep$, the interpretation string is found using an automated version of the steps discussed in Section~\ref{sec:discovery}.
The automated process labels each neuron with an $\rho_i\in\Irrep(G)$ and a corresponding $\rho_i$-set $\mathcal{B}$. The irrep $\rho_i$ is chosen to have the largest $R^2$ against $\vw_l^i$, or none, if the $R^2$ if no irrep exceeds $95\%$. 
The $\rho$-set is recovered using singular value decomposition of the coefficient matrices $\mA_i,\mB_i,\mC_i$ as defined in Section~\ref{sec:discovery} and a variant of $k$-means clustering.
We find that the clustering step is the most fragile part of the interpretation creation process---in practice, for each model, we run the process several times and choose the interpretation string that yields the highest accuracy bounds from the verifier.
Note also that the construction of the interpretation string $\pi$ does not count towards the runtime of the verifier (see Section~\ref{sec:compactproofs}).

\section{Bounding the loss}
\label{sec:loss-bound}

In this paper we focus on lower-bounding the test-set accuracy. Another natural choice is the test-set cross-entropy loss, which contains information about the model's \textit{confidence} in its answers that accuracy obscures. Note that, in principle, the compact proofs framework can be applied just as well to this metric, or indeed any well-defined quantity associated to the model. 
We view this flexibility as a strength of the framework. 
However, those using compact proofs to evaluate model interpretations must take care that the choice of quantity being bounded is relevant to the task.

We consider two simple techniques for reusing our accuracy bound work for a loss bound:
\begin{itemize}
    \item Recall from Section~\ref{sec:strategy} that we bound the accuracy by lower-bounding for each input pair $x,y\in G$ the \textit{margin} $M_{x,y}$ with which the correct logit value exceeds any incorrect logit value in the original model.
    (This margin $M_{x,y}$ is computed as the difference between the idealized model's margin and the maximum logit difference between the original and idealized models.)
    For each input pair $x,y\in G$, we can guarantee that the original gets the correct answer on $x,y$ if $M_{x,y}>0$; this results in a lower bound on accuracy.
    When considering loss, we can instead use translation-invariance of softmax to find that the contribution to the loss due to $x,y$ is
    \begin{equation*}
        \Ls_{\text{ce}}(\vtheta; x, y)\leq -\log\dfrac{e^{M_{x,y}}}{\abs{G}-1+e^{M_{x,y}}}.
    \end{equation*}
    The average of these terms over all $x,y\in G$ is then an upper bound for the total cross-entropy loss.

    \item Another approach is to first use bi-equivariance to compute the true cross-entropy loss of the idealized model with a single forward pass and then to bound the $\ell_2$ norm between the idealized and original models' output logits using a variation of Lemma~\ref{lem:inf-bound}.
    Combined with either the fact that cross-entropy loss is $\sqrt{2}$-Lipschitz (Lemma~\ref{lem:ce-lip}) or with a inequality that takes into account second-degree information about cross-entropy (Lemma~\ref{lem:ce-bound}), this gives a bound on the original model's loss.
\end{itemize}

In our experiments, we find that neither of these techniques suffices to give meaningful bounds on cross-entropy loss. This lack of success is somewhat to be expected---we start with approaches designed to bound accuracy, and then attempt to crudely adapt them to loss. Better bounds on loss would likely require new techniques which are out of scope for this paper. See Figure~\ref{fig:S5_ce} for an example of loss bounds through the margin for models trained on $S_5$. %We find that loss bounds through $\ell_2$ norm tend to be worse.

\begin{figure}[ht]
\centering
\includegraphics[trim=.5cm 0cm 0cm 0cm,clip=true,width=\textwidth]{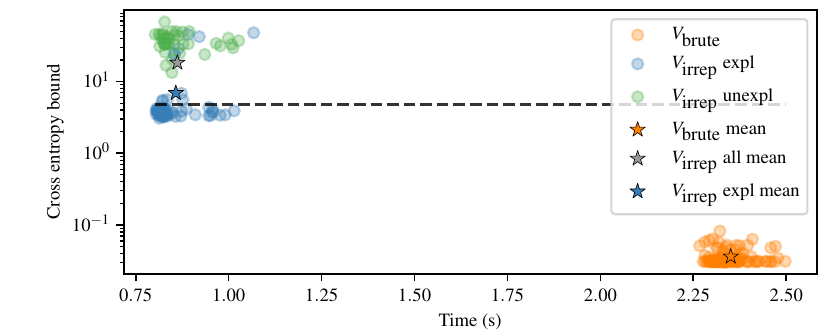}
\caption{
Cross-entropy bound vs.\ computation time for $\Virrep$ and $\Vbrute$ on 100 models trained on $S_5$. Points in \textbf{green} ($\Virrep$ unexpl) are models for which we find by inspection that our $\rho$-sets explanation does not hold, i.e.\ either
\hyperref[cond:avar_bad]{($\va$-bad)} or \hyperref[cond:irrep_bad]{($\rho$-bad)}; they make up 45\% of the total. Points in \textbf{blue} are $\Virrep$ for explained models and points in \textbf{orange} are $\Vbrute$. \textbf{Black} dashed line is $\log\abs{G}\approx 4.79$, the loss attained by a model that outputs uniform logit values. A priori, there is no guarantee that a given model does at least as well as the uniform baseline. Thus, in a sense, any finite upper bound on cross-entropy is nonvacuous.
%Mean accuracy bound is 100\% for $\Vbrute$, 0\% for $\Vcoset$ (not shown), 50.4\% for $\Virrep$, and 91.7\% for $\Virrep$ when excluding models known not to fit our $\rho$-sets explanation (45\% of total). Mean time elapsed is 2.20s for $\Vbrute$ and 0.75s for $\Virrep$. The asymptotic time complexity of $\Vbrute$ is $O(m\abs{G}^3)$ while that of $\Virrep$ is $O(m\abs{G}^2)$.
%Mean accuracy bound is 100\% for $\Vbrute$, 0\% for $\Vcoset$ (not shown), 51.0\% for $\Virrep$, and 82.3\% for $\Virrep$ when excluding models known not to fit our $\rho$-sets explanation (38\% of total). Proportion of models with $\Virrep$ accuracy bound $\geq 95\%$ is 45.0\% for all models and 72.6\% excluding unexplained. Mean time elapsed is 2.01s for $\Vbrute$ and 0.61s for $\Virrep$. The asymptotic time complexity of $\Vbrute$ is $O(m\abs{G}^3)$ while that of $\Virrep$ is $O(m\abs{G}^2)$.
\label{fig:S5_ce}
}
\end{figure}

\section{Beyond symmetric groups}
\subsection{Other groups with real irreps}
\label{sec:other-real}

\begin{figure}[ht]
\centering
\includegraphics[trim=1cm 0cm 0cm 0cm,clip=true,width=\textwidth]{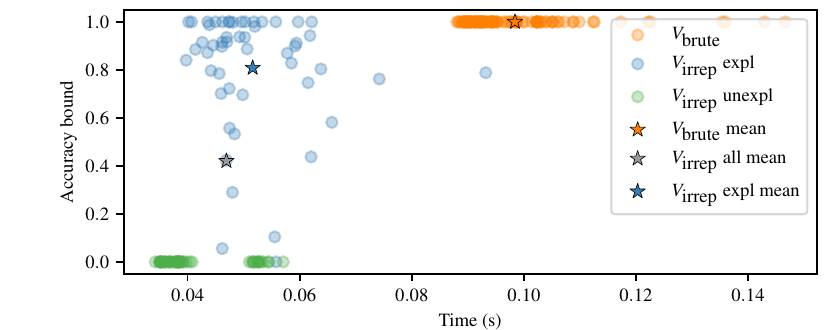}
\caption{
Accuracy bound vs.\ computation time for $\Virrep$ and $\Vbrute$ on 100 models trained on $S_4$. Points in \textbf{green} ($\Virrep$ unexpl) are models for which we find by inspection that our $\rho$-sets explanation does not hold, i.e.\ either
\hyperref[cond:avar_bad]{($\va$-bad)} or \hyperref[cond:irrep_bad]{($\rho$-bad)}; they make up 48\% of the total. Note that the latter condition occurs much more frequently than for $S_5$. Points in \textbf{blue} are $\Virrep$ for explained models and points in \textbf{orange} are $\Vbrute$.
%Mean accuracy bound is 100\% for $\Vbrute$, 0\% for $\Vcoset$ (not shown), 50.4\% for $\Virrep$, and 91.7\% for $\Virrep$ when excluding models known not to fit our $\rho$-sets explanation (45\% of total). Mean time elapsed is 2.20s for $\Vbrute$ and 0.75s for $\Virrep$. The asymptotic time complexity of $\Vbrute$ is $O(m\abs{G}^3)$ while that of $\Virrep$ is $O(m\abs{G}^2)$.
%Mean accuracy bound is 100\% for $\Vbrute$, 0\% for $\Vcoset$ (not shown), 51.0\% for $\Virrep$, and 82.3\% for $\Virrep$ when excluding models known not to fit our $\rho$-sets explanation (38\% of total). Proportion of models with $\Virrep$ accuracy bound $\geq 95\%$ is 45.0\% for all models and 72.6\% excluding unexplained. Mean time elapsed is 2.01s for $\Vbrute$ and 0.61s for $\Virrep$. The asymptotic time complexity of $\Vbrute$ is $O(m\abs{G}^3)$ while that of $\Virrep$ is $O(m\abs{G}^2)$.
\label{fig:S4}
}
\end{figure}

\begin{figure}[ht]
\centering
\includegraphics[trim=1cm 0cm 0cm 0cm,clip=true,width=\textwidth]{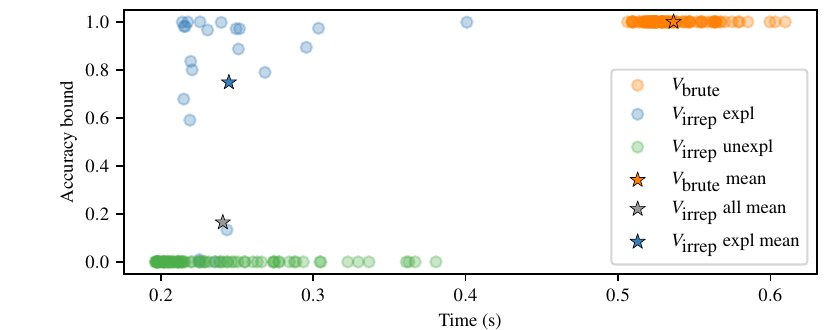}
\caption{
Accuracy bound vs.\ computation time for $\Virrep$ and $\Vbrute$ on 100 models trained on $A_5$. Points in \textbf{green} ($\Virrep$ unexpl) are models for which we find by inspection that our $\rho$-sets explanation does not hold, i.e.\ either
\hyperref[cond:avar_bad]{($\va$-bad)} or \hyperref[cond:irrep_bad]{($\rho$-bad)}; they make up 78\% of the total. Note that the latter condition occurs much more frequently than for $S_5$. Points in \textbf{blue} are $\Virrep$ for explained models and points in \textbf{orange} are $\Vbrute$.
%Mean accuracy bound is 100\% for $\Vbrute$, 0\% for $\Vcoset$ (not shown), 50.4\% for $\Virrep$, and 91.7\% for $\Virrep$ when excluding models known not to fit our $\rho$-sets explanation (45\% of total). Mean time elapsed is 2.20s for $\Vbrute$ and 0.75s for $\Virrep$. The asymptotic time complexity of $\Vbrute$ is $O(m\abs{G}^3)$ while that of $\Virrep$ is $O(m\abs{G}^2)$.
%Mean accuracy bound is 100\% for $\Vbrute$, 0\% for $\Vcoset$ (not shown), 51.0\% for $\Virrep$, and 82.3\% for $\Virrep$ when excluding models known not to fit our $\rho$-sets explanation (38\% of total). Proportion of models with $\Virrep$ accuracy bound $\geq 95\%$ is 45.0\% for all models and 72.6\% excluding unexplained. Mean time elapsed is 2.01s for $\Vbrute$ and 0.61s for $\Virrep$. The asymptotic time complexity of $\Vbrute$ is $O(m\abs{G}^3)$ while that of $\Virrep$ is $O(m\abs{G}^2)$.
\label{fig:A5}
}
\end{figure}

In the main text, we focus on the symmetric group $S_5$. For our purposes, this group is especially nice for several reasons:
\begin{itemize}
    \item Symmetric groups $S_n$ have only real irreps, in the sense that every irrep over $\sC$ is isomorphic to an irrep with only real matrix entries.
    See Section~\ref{sec:complex} for a preliminary discussion of groups that do not have all real irreps. 
    \item The minimum faithful $\rho$-sets of $S_n$ are small relative to the order of the group. In other words, $S_n$ has small faithful permutation representations because it can be embedded into itself $S_n\hookrightarrow S_n$. (In general, an arbitrary finite group $G$ can be embedded into a symmetric group $G\hookrightarrow S_n$ by Cayley's theorem, but unless $G$ itself is a symmetric group we must have $\abs{S_n}>\abs{G_n}$.)
    \item For groups of significantly smaller order, the training dataset is too small and we do not observe the grokking phenomenon. (Recall that training set size is proportional to $\abs{G}^2$.) For groups of significantly larger
\end{itemize}

Related to the second point above, we empirically observe that groups with larger $\rho$-sets relative to group order are more prone to failure mode \hyperref[cond:irrep_bad]{($\rho$-bad)}, i.e.\ they typically miss a substantial portion of pairs in the double summation of Eq.~\ref{eq:rho-circ}. In this situation, we do not have a complete understanding of how the model attains high accuracy, and thus our bounds are correspondingly poor. 
Note that, although we cannot fully explain how neurons interact in this case, our per-neuron observations (\ref{obs:ours} 1-3) hold for all finite groups we examine.

Despite these points, we are able to obtain nonvacuous bounds for models trained on the symmetric group $S_4$ (see Figure~\ref{fig:S4}) 
and for models trained on the alternating group $A_5$ (see Figure~\ref{fig:A5}).

\subsection{Complex and quaternionic irreps}
\label{sec:complex}
    An irrep being real is equivalent to it having positive Frobenius-Schur indicator $\iota(\rho):=\abs{G}\inv\sum_{g\in G}\tr(\rho(g))$.  In general, irreps have Frobenius-Schur indicator in $\{1,0,-1\}$, corresponding to the irrep being \textit{real}, \textit{complex}, and \textit{quaternionic} respectively. These three cases correspond to the ring of $G$-linear endomorphisms of the irrep being isomorphic to either $\sR,\sC,\sH$. By Schur's lemma, the endomorphism ring is a real associative division algebra, so these are the only three cases.

In preliminary investigations of more general irreps $\rho$, we
convert irreps over $\sC$ with nonpositive Frobenius-Schur indicator to irreps over $\sR$ of twice the dimensionality:
\begin{equation}
\label{eq:complex-rho}
    \tilde\rho(g)=\begin{bmatrix}
        \Re\rho(g) & -\Im\rho(g)\\
        \Im\rho(g) & \Re\rho(g).
    \end{bmatrix}
\end{equation}
We then find that the $\mA,\mB,\mC$ matrices of Observation~\ref{obs:ours}(\ref{it:abc}) are approximately rank one when $\rho$ is real, rank two when $\rho$ is complex, and rank four when $\rho$ is quaternionic. In the complex case, we find also that when $\sR^d$ is given the complex structure induced by Eq~\ref{eq:complex-rho}, the two singular vectors are conjugate, and correspond to equal singular values. Thus, we speculate that the neural network uses the same $\rho$-sets algorithm as in the real case, but over $\sC$, and then takes the real part: 
letting $\rho\in\GL(n,\sC)$ and $\rhoset\subseteq\sC^d$ a finite $\rho$-set,
\begin{equation*}
    f_{\rho,\rhoset}(z\mid x, y)=-\sum_{\vb,\vb'\in \rhoset} \Re \vb\tran\rho(z)\vb'\relu[\Re(\vb\tran\rho(x)\va-\va\tran\rho(y)\vb')].
\end{equation*}

\end{document}